\newcommand*\circled[1]{\tikz[baseline=(char.base)]{
            \node[shape=circle,draw,inner sep=.2pt] (char) {#1};}}
\theoremstyle{plain}
\newtheorem{theorem}{Theorem}
\newtheorem{proposition}[theorem]{Proposition}
\newtheorem{lemma}[theorem]{Lemma}
\newtheorem{corollary}[theorem]{Corollary}
\theoremstyle{definition}
\newtheorem{definition}[theorem]{Definition}
\theoremstyle{remark}
\newtheorem{remark}[theorem]{Remark}
\newcommand{\notice}[1]{{\textcolor{red}{#1}}}
\newif\ifspacehack
\newcommand\innerp[2]{\langle #1, #2 \rangle}
\renewcommand{\tilde}{\widetilde}
\renewcommand{\hat}{\widehat}
\newcommand{\nereg}{\text{\rm NE-Reg}_T}
\newcommand{\iregx}{\text{\rm Reg}_T^x}
\newcommand{\iregy}{\text{\rm Reg}_T^y}
\newcommand{\dnereg}{\text{\rm DynNE-Reg}_T}
\newcommand{\gap}{\text{\rm Dual-Gap}_T}
\newcommand{\varNE}{P}
\newcommand{\varA}{V}
\newcommand{\variaA}{W}
\def \R {\mathbb{R}}
\newcommand{\calA}{{\mathcal{A}}}
\newcommand{\calB}{{\mathcal{B}}}
\newcommand{\calX}{{\mathcal{X}}}
\newcommand{\calS}{{\mathcal{S}}}
\newcommand{\calY}{{\mathcal{Y}}}
\newcommand{\Reg}{\text{\rm Reg}}
\newcommand{\p}{\prime}
\DeclareMathOperator*{\argmin}{argmin}
\DeclareMathOperator*{\argmax}{argmax}
\newcommand{\inner}[1]{ \left\langle {#1} \right\rangle }
\newcommand{\norm}[1]{\left\|{#1}\right\|}
\newcommand{\wh}{\widehat}
\newcommand{\wt}{\widetilde}
\newcommand{\order}{\ensuremath{\mathcal{O}}}
\newcommand{\otil}{\ensuremath{\tilde{\mathcal{O}}}}
\renewcommand{\tilde}{\widetilde}
\renewcommand{\hat}{\widehat}
\newcommand \term[1]{\mathtt{term}~(\mathtt{#1})}
\def \H {\mathcal{H}}
\def \I {\mathcal{I}}
\def \O {\mathcal{O}}
\def \R {\mathbb{R}}
\def \T {\top}
\def \X {\mathcal{X}}
\def \Y {\mathcal{Y}}
\def \p {\mathbf{p}}
\def \u {\mathbf{u}}
\def \v {\mathbf{v}}
\def \x {\mathbf{x}}
\def \y {\mathbf{y}}
\def \xh {\hat{\x}}
\def \Ot {\tilde{\O}}
\let\norm\undefined 
\DeclarePairedDelimiter\norm{\lVert}{\rVert}
\DeclarePairedDelimiter\abs{\lvert}{\rvert}
\DeclareMathOperator*{\poly}{poly}
\newcommand \Div[2]{D_{\psi}(#1, #2)}
\theoremstyle{definition}
\newcommand{\pref}[1]{\prettyref{#1}}
\newcommand{\savehyperref}[2]{\texorpdfstring{\hyperref[#1]{#2}}{#2}}
\def \p {\boldsymbol{p}}
\def \drvu {\textsc{DRVU}}
\def \epsilon {\varepsilon}
\def \cthree {c}
\def \base {\mathtt{base}\mbox{-}\mathtt{regret}}
\def \meta {\mathtt{meta}\mbox{-}\mathtt{regret}}
\def \p {p}
\def \x {x}
\def \y {y}
\def \u {u}
\def \v {v}
\icmltitlerunning{No-Regret Learning in Time-Varying Zero-Sum Games}
\begin{document}

\twocolumn[
\icmltitle{No-Regret Learning in Time-Varying Zero-Sum Games}

\icmlsetsymbol{equal}{*}

\begin{icmlauthorlist}
\icmlauthor{Mengxiao Zhang}{equal,usc}
\icmlauthor{Peng Zhao}{equal,nju}
\icmlauthor{Haipeng Luo}{usc}
\icmlauthor{Zhi-Hua Zhou}{nju}
\end{icmlauthorlist}

\icmlaffiliation{usc}{University of Southern California}
\icmlaffiliation{nju}{National Key Laboratory for Novel Software Technology, Nanjing University}

\icmlcorrespondingauthor{Mengxiao Zhang}{mengxiao.zhang@usc.edu}
\icmlcorrespondingauthor{Peng Zhao}{zhaop@lamda.nju.edu.cn}
\icmlkeywords{Online Learning, Zero-Sum Games, Dynamic Regret}

\vskip 0.3in
]

\printAffiliationsAndNotice{\icmlEqualContribution}

\begin{abstract}
Learning from repeated play in a fixed two-player zero-sum game is a classic problem in game theory and online learning.
We consider a variant of this problem where the game payoff matrix changes over time, possibly in an adversarial manner.
We first present three performance measures to guide the algorithmic design for this problem: 1) the well-studied \emph{individual regret}, 2) an extension of \emph{duality gap}, and 3) a new measure called \emph{dynamic Nash Equilibrium regret}, which quantifies the cumulative difference between the player's payoff and the minimax game value.
Next, we develop a single parameter-free algorithm that \emph{simultaneously} enjoys favorable guarantees under all these three performance measures.
These guarantees are adaptive to different non-stationarity measures of the payoff matrices and, importantly, recover the best known results when the payoff matrix is fixed. 
Our algorithm is based on a two-layer structure with a meta-algorithm learning over a group of black-box base-learners satisfying a certain property, along with several novel ingredients specifically designed for the time-varying game setting. 
Empirical results further validate the effectiveness of our algorithm.
\end{abstract}

\section{Introduction}
\label{sec: intro}
Repeated play in a fixed two-player zero-sum game, a fundamental problem in the interaction between game theory and online learning, has been extensively studied in recent decades. In particular, many efforts have been devoted to designing online algorithms such that both players achieve small individual regret (that is, difference between one's cumulative payoff and that of the best fixed action) while at the same time the dynamics of the players' strategy leads to a Nash equilibrium, a pair of strategies that neither player has incentive to deviate from; see for example~\citep{1999:Schapire-game-theory,conf/nips/RakhlinS13,SODA'11:fast-games,NIPS'15FastConv,NIPS'20:faster-hedge,ICLR'21-last-iteration,COLT'21:Hsieh,NIPS'21:general-game}.

In contrast to this large body of studies for learning over a \emph{fixed} zero-sum game,
repeated play over a sequence of \emph{time-varying} games, the focus of this paper and a ubiquitous scenario in practice, is much less explored.
While minimizing individual regret still makes perfect sense in this case,
it is not immediately clear what other desirable game-theoretic guarantees are that generalize the concept of approaching a Nash equilibrium when the game is fixed.
As far as we know, \citet{ICML'19:drift-game} are the first to explicitly consider this problem. They proposed the notion of Nash-Equilibrium regret (NE-regret) as the performance measure, which quantifies the difference between the learners' cumulative payoff and the minimax value of the cumulative payoff matrix. 
The authors proposed an algorithm with $\Ot(\sqrt{T})$ NE-regret after $T$ rounds of play and, importantly, proved that no algorithm can simultaneously achieve sublinear NE-regret and sublinear individual regret for both players.

Our work starts by questioning whether the NE-regret of~\citep{ICML'19:drift-game} is indeed a good performance measure for the problem of learning in time-varying games, especially given its incompatibility with the arguably most standard goal of having small individual regret.
We then discover that measuring performance with NE-regret can in fact be highly unreasonable: we show an example (in~\pref{sec:measure}) where even the two players perform perfectly (in the sense that they play the corresponding Nash equilibrium in every round), the resulting NE-regret is still \emph{linear} in $T$!

\begin{table*}[!t]
\centering
\caption{Summary of our results. The first column indicates the three performance measures considered in this work. The second column presents our main results for a sequence of time-varying payoff matrices $\{A_t\}_{t=1}^T$, and notably all results are simultaneously achieved by one single parameter-free algorithm. These guarantees are expressed in terms of three different (unknown) non-stationarity measures of the payoff matrices: $P_T, V_T$, and $W_T$, all of which are $\Theta(T)$ in the worst-case and zero in the stationary case (when $A_t = A$ for all $t \in [T]$);
see \pref{sec:measure} for definitions. Additionally, for duality gap we use notation $Q_T$ as a shorthand for $V_T + \min\{P_T, W_T\}$. Substituting all non-stationarity measures with zero leads to our corollaries for the stationary case shown in the third column, which match the state-of-the-art for all three performance measures (up to logarithmic factors) as shown in the last column.} \vspace{2mm}
\label{table:result-overview}
\renewcommand*{\arraystretch}{1.25}
\resizebox{0.98\textwidth}{!}{
\begin{tabular}{|c|c|c|c|}\hline
\textbf{Measure} &  \multicolumn{1}{c|}{\textbf{Time-Varying Game} ($\{A_t\}_{t=1}^T$, general)} & \multicolumn{2}{l|}{\textbf{Stationary Game} ($A_t = A$, fixed)} \\\hline
\multicolumn{1}{|c|}{\multirow{2}{*}{Individual Regret}} &  $\Ot\big(\sqrt{1 + V_T + \min\{P_T, W_T\}}\big)$	& $\Ot(1)$ & $\O(1)$ \\
\multicolumn{1}{|c|}{} & [\pref{thm:individual-regret}]   & [\pref{cor:individual-regret-stationary}]   & \citep{COLT'21:Hsieh}  \\ \hline
\multicolumn{1}{|c|}{\multirow{2}{*}{Dynamic NE-Regret}} & \multirow{2}{*}{}     $\Ot\big(\min\{\sqrt{(1+V_T)(1+P_T)} + P_T, 1 + W_T\}\big)$            & $\Ot(1)$ & $\O(1)$\\
\multicolumn{1}{|c|}{} &  [\pref{thm:dynamic-NE}]     & [\pref{cor:dynamic-NE-regret-stationary}]   & \citep{COLT'21:Hsieh}\footnotemark  \\ \hline
\multicolumn{1}{|c|}{\multirow{2}{*}{Duality Gap}} & \multirow{2}{*}{}     $\Ot\big(\min\{T^{\frac{3}{4}} ( 1 + Q_T)^{\frac{1}{4}}, T^{\frac{1}{2}}(1+Q_T^{\frac{3}{2}}+P_TQ_T)^{\frac{1}{2}}\}\big)$            & $\Ot(\sqrt{T})$  & $\O(\sqrt{T})$ \\
\multicolumn{1}{|c|}{} &   [\pref{thm:duality-gap-changing}]     & [\pref{cor:duality-gap-stationary}]   & \citep{ICLR'21-last-iteration}  \\\hline
\end{tabular}}
\end{table*}

Motivated by this observation, we revisit the basic problem of how to measure the algorithm's performance in such a time-varying game setting.
Concretely, we consider three performance measures that we believe are appropriate and natural:
1) the standard individual regret;
2) the direct generalization of cumulative duality gap from a fixed game to a varying game;
and 3) a new measure called \emph{dynamic NE-regret}, which quantifies the difference between the learner's cumulative payoff and the cumulative minimax game value (instead of the minimax value of the cumulative payoff matrix, as in NE-regret).
We argue that dynamic NE-regret is a better measure compared to NE-regret:
first, in the earlier example where both players play perfectly in each round using the corresponding Nash equilibrium, their dynamic NE-regret is exactly zero (while their NE-regret can be linear in $T$);
second, having small dynamic NE-regret does not prevent one from enjoying small individual regret or duality gap (as will become clear soon).

With these performance measures in mind, our main contribution is to develop \emph{one single parameter-free algorithm} that simultaneously enjoys favorable guarantees under all measures.
These guarantees are adaptive to some unknown non-stationarity measures of the payoff matrices --- naturally, the bounds worsen as the non-stationarity becomes larger.
More specifically, the individual regret is always at most $\Ot(\sqrt{T})$, the well-known worst-case bound, but could be much smaller if the non-stationarity measures are sublinear;
on the other hand, the duality gap and dynamic NE-regret are sublinear as long as the non-stationarity measures are sublinear.
In the special case of a fixed payoff matrix, all non-stationarity measures become zero and our results immediately recover the state-of-the-art results (up to logarithmic factors); see \pref{table:result-overview} for details.
Notice that the best known results for a fixed game are not necessarily achieved by the same algorithm, while again, our results are all achieved by one adaptive algorithm.
We also conduct empirical studies to further support our theoretical findings (\pref{appendix:experiments}).

\textbf{Techniques.}~~
For a fixed game, \citet{NIPS'15FastConv} proposed the ``Regret bounded by Variation in Utilities'' (RVU) property as the key condition for an algorithm to achieve good performance.
On the other hand, one of the key tools for achieving our results is to ensure a small gap between each player's cumulative payoff and that of a sequence of changing comparators, known as \emph{dynamic regret} in the literature~\citep{ICML'03:zinkvich}. 
Therefore, our first step is to generalize the RVU property to ``Dynamic Regret bounded by Variation in Utilities'' (\drvu) property, and to show that many existing algorithms indeed satisfy \drvu.

Furthermore, to achieve strong guarantees for all performance measures without any prior knowledge, we also need to deploy a two-layer structure, with a meta-algorithm learning over and combining decisions of a group of base-learners, each of which satisfies the DRVU property but uses a different step size.
Although such a framework has been used in many prior works in online learning (see for example the latest advances~\citep{COLT'21:impossible-tuning} and references therein), several new ingredients are required to achieve our results.
First, when updating the meta-algorithm, a correction term related to the stability of each base-algorithm is injected into the loss for the corresponding base-algorithm, which plays a key role in the analysis.
More specifically, we show (in \pref{lemma:stability-main}) an explicit bound for the stability of the meta-algorithm's decisions, whose proof requires a careful analysis using the correction terms above and the unique game structure.
Second, we also introduce a set of additional ``dummy'' base-algorithms that always play some fixed action.
This plays a key role in controlling the dynamics of the base-learners' outputs and turns out to be critical when bounding the duality gap.

\footnotetext{This is implicitly implied by the reuslts of \citet{COLT'21:Hsieh}, as our \pref{lemma:payoff-variation-conversation} shows that in the stationary case dynamic NE-regret is bounded by the individual regret.}

\textbf{Related Work.}~~ Two-player zero-sum game is one of the most fundamental problems in game theory, whose studies date back to the seminal work of~\citet{1928:von-Neumann}. \citet{1999:Schapire-game-theory} discovered the profound connections between zero-sum games and no-regret online learning, and since then there have been extensive studies on designing no-regret algorithms to solve games in the stationary setting~\citep{conf/nips/RakhlinS13,SODA'11:fast-games,NIPS'15FastConv,NIPS'20:faster-hedge,ICLR'21-last-iteration,NIPS'21:general-game}. We refer the reader to~\citep{NIPS'21:general-game} for a more thorough discussion on the literature. Several recent works start considering the problem of learning over a sequence of non-stationary payoffs under different structures, including zero-sum matrix games~\citep{ICML'19:drift-game,NIPS'21:periodic-game}, convex-concave games~\citep{Arxiv'19:online-saddle-point} and strongly monotone games~\citep{OR'21:time-varying-games}. For zero-sum games, \citep{NIPS'21:periodic-game} focuses on the periodic case and proves divergence results for a class of learning algorithms; \citep{ICML'19:drift-game} is the closest to our work, but as mentioned, we argue that their proposed measure (NE-regret) is not always appropriate (see~\pref{sec:measure-regret}).

\textbf{Organization.}~~We formally introduce the problem in \pref{sec:problem-setup}, then present the performance measures in~\pref{sec:measure} and our algorithm in~\pref{sec: algo overview}. Next, we provide theoretical guarantees in~\pref{sec: regret guarantee}, and conclude in~\pref{sec: conclusion}.

\section{Problem Setup and Notations}
\label{sec:problem-setup}
We consider the following problem of two players (called $x$-player and $y$-player) repeatedly playing a zero-sum game for $T$ rounds, with $m$ fixed actions for $x$-player and $n$ fixed actions for $y$-player.
At each round $t \in [T] \triangleq \{1, \ldots, T\}$, the environment first chooses a payoff matrix $A_t \in [-1,1]^{m\times n}$, whose $(i,j)$ entry denotes the loss/reward for $x$-player/$y$-player when they play action $i$ and action $j$ respectively.
Without knowing $A_t$, $x$-player ($y$-player) decides her own mixed strategy (that is, a distribution over actions) $x_t \in \Delta_m$ ($y_t \in \Delta_n$), where $\Delta_k$ denotes the probability simplex $\Delta_k = \{u \in \R_{\geq 0}^k \mid \sum_{i=1}^k u_i =1\}$. 
At the end of this round, $x$-player suffers expected loss $x_t^\T A_t y_t$ and observes the loss vector $A_t y_t$, while $y$-player receives the expected reward $x_t^\T A_t y_t$ and observes the reward vector $x_t^\T A_t$. Note that neither player observes the matrix $A_t$ itself. 

When $A_t$ is fixed for all $t$, this exactly recovers the standard stationary setting considered in for example~\citep{NIPS'15FastConv}.
Having a time-varying $A_t$ allows us to capture various possible sources of non-stationarity in the environment.
In fact, $A_t$ can even be decided by an adaptive adversary who makes the decision knowing the players' algorithm and their decisions in earlier rounds.
Our setting is almost the same as~\citep{ICML'19:drift-game}, except that the feedback they consider is either the entire matrix $A_t$ (stronger than ours) or just one entry of $A_t$ sampled according to $(x_t, y_t)$ (weaker than ours).

For each game matrix $A_t$, define the set of minimax strategies for $x$-player as $\calX_t^*=\argmin_{x\in \Delta_m}\min_{y\in \Delta_n}x^\top A_ty$ and similarly the set of maximin strategies for $y$-player as $\calY_t^*=\argmax_{y\in \Delta_n}\min_{x\in\Delta_m}x^\top A_ty$. It is well-known that any pair $(x_t^*, y_t^*)\in \calX_t^*\times \calY_t^*$ forms a Nash equilibrium of $A_t$ with the following saddle-point property: $x_t^{*\top} A_t y\leq x_t^{*\top} A_ty_t^*\leq x^\top A_ty_t^*$ holds for any $x\in \Delta_m$ and $y\in \Delta_n$.
Throughout the paper, $(x_t^*, y_t^*)$ denotes an arbitrary Nash equilibrium of $A_t$.

\textbf{Notations.}~~ For a real-valued matrix $A \in \R^{m\times n}$, its infinity norm is defined as $\|A\|_\infty \triangleq \max_{i,j}|A_{ij}|$. We use $\mathbf{1}_N$ and $\mathbf{0}_N$ to denote the all-one and all-zero vectors of length $N$. 
For conciseness, we often hide polynomial dependence on the size of the game (that is, $m$ and $n$) in the $\O(\cdot)$-notation.
The $\otil(\cdot)$-notation further omits logarithmic dependence on $T$. We sometimes write $\min_{x \in \Delta_m}$ ($\min_{y\in \Delta_n}$) simply as $\min_{x}$ ($\min_{y}$) when there is no confusion. 

\section{How to Measure the Performance?}
\label{sec:measure}

With the learning protocol specified, the next pressing question is to determine what the goal is when designing algorithms for the two players.
When $A_t$ is fixed, most studies consider minimizing individual regret for each player and some form of convergence to a Nash equilibrium of the fixed game as the two primary goals.
While minimizing individual regret is still naturally defined when $A_t$ is changing over time, it is less clear what other desirable game-theoretic guarantees are in this case.
In \pref{sec:measure-regret}, we formally discuss three performance measures that we think are reasonable for this problem.
Then in \pref{sec:measure-non-stationarity}, we further discuss how to measure the non-stationarity of the sequence $\{A_t\}_{1:T}$ that will play a role in how well the players can do under some of the performance measures.

\subsection{Performance Measures}
\label{sec:measure-regret}
\textbf{\circled{1} Individual Regret.}~ 
The first measure we consider is the standard individual regret.
For $x$-player, this is defined as
\begin{equation}
    \label{eq:individual-regret}
    \Reg_T^x \triangleq \sum_{t=1}^Tx_t^\top A_ty_t - \min_{x\in \Delta_m}\sum_{t=1}^Tx^\top A_ty_t,
\end{equation}
that is, the difference between her total loss and that of the best fixed strategy (assuming the same behavior from the opponent).
Similarly, the regret for $y$-player is defined as $\Reg_T^y \triangleq  \max_{y\in \Delta_n}\sum_{t=1}^Tx_t^\top A_ty - \sum_{t=1}^T x_t^\top A_ty_t$. 
Achieving sublinear (in $T$) individual regret implies that on average each player performs almost as well as their best fixed strategy,
and this is arguably the most standard and basic goal for online learning problems.

\textbf{\circled{2} Duality Gap.}~~ 
For a game matrix $A_t$, the duality gap of a pair of strategy $(x_t, y_t)$ is defined as $\max_{y\in\Delta_n}x_t^\top A_ty - \min_{x\in \Delta_{m}}x^\top A_ty_t$.
It is always nonnegative since $\max_{y\in\Delta_n}x_t^\top A_ty  \geq x_t^\top A_ty_t^* \geq x_t^{*\top} A_ty_t^* \geq x_t^{*\top} A_ty_t \geq \min_{x\in \Delta_{m}}x^\top A_ty_t$,
and it is zero if and only if $(x_t, y_t)$ is a Nash equilibrium of $A_t$.
Thus, the duality gap measures how close $(x_t, y_t)$ is to the equilibria in some sense.
We thus naturally use the cumulative duality gap:
\begin{equation}
    \label{eq:duality gap}
    \gap \triangleq \sum_{t=1}^T \Big(\max_{y\in\Delta_n}x_t^\top A_ty- \min_{x\in \Delta_{m}}x^\top A_ty_t \Big),
\end{equation}
as another performance measure.
When $A_t$ is fixed, this measure is considered in~\citep{ICLR'21-last-iteration} for example.

\textbf{\circled{3} Dynamic Nash Equilibrium (NE)-Regret.}~~ 
Before introducing this last measure, we first review what \citet{ICML'19:drift-game} proposed as the goal for this problem, that is, ensuring small Nash Equilibrium (NE)-regret, defined as
\begin{equation}
\label{eq:NE-regret}
    \nereg \triangleq \Bigg|\sum_{t=1}^Tx_t^\top A_ty_t - \min_{x\in \Delta_m}\max_{y\in \Delta_n}\sum_{t=1}^Tx^\top A_ty\Bigg|.
\end{equation}
In words, this is the difference between the cumulative loss of $x$-player (or equivalently the cumulative reward of $y$-player) and the minimax value of the cumulative payoff matrix ($\sum_{t=1}^T A_t$).
While this might appear to be a reasonable generalization of individual regret for a central controller who decides $x_t$ and $y_t$ jointly, we argue below that this measure is in fact often inappropriate for two reasons.

The first reason is in fact already hinted in~\citep{ICML'19:drift-game}: they proved that
no algorithm can always ensure sublinear NE-regret and simultaneously sublinear individual regret for both players.
Given that minimizing individual regret selfishly is a natural impulse and the standard goal for each player, NE-regret can only make sense when both players are controlled by a centralized algorithm.

The second reason is perhaps more profound.
Consider the following two-phase example: when $t \leq T/2$, $A_t = {\scriptsize \begin{pmatrix} 1 & -1 \\ -1 & 1\end{pmatrix}}$; when $t > T/2$, $A_t = {\scriptsize \begin{pmatrix} 1 & -1 \\ 1 & -1\end{pmatrix}}$.\footnote{The same example is in fact also used by~\citet{ICML'19:drift-game} to prove the incompatibility of individual regret and NE-regret.}
It is straightforward to verify that: when $t\leq T/2$, the equilibrium for $A_t$ is the uniform distribution for both players, leading to game value $0$;
when $t> T/2$, the equilibrium is such that $y$-player always picks the first column, leading to game value $1$;
and the equilibrium for the cumulative game matrix $\sum_{t=1}^T A_t = {\scriptsize \begin{pmatrix} T & -T \\ 0 & 0\end{pmatrix}}$ is $x$-player picking the second row while $y$-player picking the first column, leading to game value $0$.
To sum up, even if both players play perfectly in each round using the corresponding equilibrium, their NE-regret is still $|T/2-0|=T/2$, which is a vacuous bound linear in $T$! 

Motivated by the observations above, we propose a variant of NE-regret as the third performance measure, called \emph{dynamic NE-regret}:\footnotemark
\begin{equation*}
\dnereg \triangleq \Bigg| \sum_{t=1}^Tx_t^\top A_ty_t - \sum_{t=1}^T\min_{x\in \Delta_m}\max_{y\in \Delta_n}x^\top A_ty \Bigg|.
\end{equation*}
Compared to NE-regret, here we move the minimax operation inside the summation, making it the cumulative difference between $x$-player's loss and the minimax game value in each round. 
In other words, similarly to duality gap, dynamic NE-regret provides yet another way to measure in each round, how close $(x_t, y_t)$ is to the equilibria of $A_t$ from the game value perspective.

\footnotetext{In fact, a preprint by~\citet{Arxiv'19:online-saddle-point} also considers a similar measure for general convex-concave problem, but we believe that their results are incorrect. Specifically, they claim (in their Theorem~4.3) that an $\Ot(\sqrt{T})$ bound is always achievable for dynamic NE-regret, but this is clearly impossible because when $A_t$ always has identical columns (so $y$-player does not play any role), dynamic NE-regret becomes the dynamic regret~\citep{ICML'03:zinkvich} for $x$-player, which is well-known to be $\Omega(T)$ in the worst case.
}

The connection between NE-regret and Dynamic NE-regret is on the surface analogous to that between standard regret and dynamic regret~\citep{ICML'03:zinkvich} (see \pref{appendix:dynNE-NE} for definitions and more related discussions).
However, while dynamic regret is always no less than standard regret, \emph{Dynamic NE-regret could be smaller than NE-regret} --- simply consider our earlier two-phase example: the perfect players (who always play an equilibrium) clearly have $0$ dynamic NE-regret, but their NE-regret is $T/2$ as discussed.
This example also shows that dynamic NE-regret is more reasonable compared to NE-regret.
Moreover, as will become clear soon, dynamic NE-regret is compatible with individual regret (and also duality gap), in the sense there are algorithms that provably perform well under all these measures.

We conclude this section with the following two remarks.
\begin{remark}[Comparisons of the three measures]
\label{remark:measure}
Both individual regret and dynamic NE-regret are bounded by duality gap (see proofs in \pref{appendix:relation}), but the latter could be much larger. On the other hand, individual regret and dynamic NE-regret are generally incomparable.
\end{remark}

\begin{remark}[Other possibilities]
The three measures we consider are by no mean the only possibilities. 
Another reasonable one is the tracking error $\sum_{t=1}^T (\norm{x_t - x_{t}^*}_1 + \norm{y_t - y_{t}^*}_1)$ that directly measures the distance between $(x_t, y_t)$ and the equilibrium $(x_t^*, y_t^*)$ (assuming unique equilibrium for simplicity). 
This is considered in~\citep{Arxiv'19:online-saddle-point, balasubramanian2021zeroth} (for different problems).
However, tracking error bounds are in fact not well studied even when $A_t$ is fixed --- the best known results still depend on some problem-dependent constant that can be arbitrarily large~\citep{daskalakis2019last, ICLR'21-last-iteration}. 
Deriving tracking error bounds in our setting is thus beyond the scope of this paper.
Note that in many optimization studies, one often only cares about finding a point that is close to the optimal solution in terms of their function value instead of their absolute distance.
Our dynamic NE-regret and duality gap are both in this same sprite by looking at the game value instead of the actual distance as in tracking error.
\end{remark}

\subsection{Non-stationarity Measures}
\label{sec:measure-non-stationarity}
For duality gap and dynamic NE-regret, it is not difficult to see that if $A_t$ changes drastically over time, then no meaningful guarantees are possible.
This is similar to dynamic regret in standard online learning problems, where guarantees are always expressed in terms of some non-stationarity measure of the environment and are meaningful only when the non-stationarity is reasonably small.
In our setting, we consider the following three different ways to measure non-stationarity of the sequence $\{A_t\}_{t=1}^T$.

\textbf{Variation of Nash Equilibria.}~~Recall the notation $\calX_t^*\times \calY_t^*$, the set of Nash equilibria for matrix $A_t$.
Define the variation of Nash equilibria as: 
\begin{align*}
    \varNE_T\triangleq \min_{\forall t, (x_t^*,y_t^*) \in \calX_t^*\times \calY_t^*} \sum_{t=2}^T\left(\|x_t^*-x_{t-1}^*\|_1+\|y_t^*-y_{t-1}^*\|_1\right),
\end{align*}
which quantifies the drift of the Nash equilibria of the game matrices in $\ell_1$-norm.

\textbf{Variation/Variance of Game Matrices.}~~The path-length variation and the variance of $\{A_t\}_{t=1}^T$ are respectively defined as
\begin{align*}
    \varA_T \triangleq \sum_{t=2}^T\|A_t-A_{t-1}\|_{\infty}^2,~~~\variaA_T\triangleq \sum_{t=1}^T\|A_t-\bar{A}\|_{\infty},
\end{align*}
where $\bar{A}=\frac{1}{T}\sum_{t=1}^TA_t$ is the averaged game matrix. 

Clearly, $P_T$, $V_T$, and $W_T$ are all $\Theta(T)$ in the worst case, and $0$ when $A_t$ is fixed over time.
For dynamic regret and duality gap, the natural goal is to enjoy sublinear bounds whenever (some of) these non-stationarity measures are sublinear (which we indeed achieve).

We conclude by pointing out some connections between these non-stationarity measures.
First, $V_T \leq 4W_T$ holds but the former could be much smaller.
Second, $P_T$ is generally not comparable with $V_T$ and $W_T$, and there are examples where $P_T = 0$ and $V_T = W_T = \Theta(T)$, or $P_T = \Theta(T)$ and $V_T = W_T = \O(1)$.
We defer all details to~\pref{appendix:discussion-measure}.
\section{Proposed Algorithm}
\label{sec: algo overview}

In this section, we present our proposed algorithm for time-varying games, which provably achieves favorable guarantees under all three performance measures.
To illustrate the ideas behind our algorithm design, we first review how \citep{NIPS'15FastConv} achieves fast convergence results for a fixed game, followed by a detailed discussion on how to generalize their idea and overcome the difficulties brought by time-varying games.
For conciseness, throughout the section we focus on the $x$-player; how the $y$-player should behave is completely symmetric. 

For a fixed game $A_t = A$, \citet{NIPS'15FastConv} proposed that each player should deploy an online learning algorithm that satisfies a specific property called ``Regret bounded by Variation in Utilities'' (RVU).
More specifically, an online learning algorithm proposes $x_t \in \Delta_m$ at the beginning of round $t$, and then receives a loss vector $g_t \in \R^m$ and suffers loss $\inner{x_t, g_t}$.
Its regret against a comparator $u \in \Delta_m$ after $T$ rounds is naturally $\sum_{t=1}^T \inner{x_t - u, g_t}$, and the RVU property states that this should be bounded by
$\alpha + \beta \sum_{t=2}^T\|g_t-g_{t-1}\|_{\infty}^2 - \gamma\sum_{t=2}^T\|x_t-x_{t-1}\|_1^2$ for some parameters $\alpha, \beta, \gamma > 0$.\footnote{Without loss of generality, we here focus on $(\|\cdot\|_1,\|\cdot\|_{\infty})$ norm, and it is straightforward to generalize the argument to general primal-dual norm as in~\citep{NIPS'15FastConv}.}
To see why RVU property is useful, consider $x$-player deploying such an algorithm with $g_t$ set to $A_t y_t = A y_t$.
Then her regret is further bounded as
$\alpha  + \beta \sum_{t=2}^T \norm{Ay_t - Ay_{t-1}}_\infty^2 - \gamma \sum_{t=2}^T \norm{x_t - x_{t-1}}_1^2 \leq \alpha  + \beta \sum_{t=2}^T \norm{y_t - y_{t-1}}_1^2 - \gamma \sum_{t=2}^T \norm{x_t - x_{t-1}}_1^2$. 
Therefore, as long as $y$-player also deploys the same algorithm, by symmetry, the sum of their regret is at most $\alpha  + (\beta - \gamma) (\sum_{t=2}^T \norm{x_t - x_{t-1}}_1^2 + \sum_{t=2}^T \norm{y_t - y_{t-1}}_1^2)$, which can be simply bounded by (a constant) $\alpha$ as long as $\beta \leq \gamma$.
Many useful guarantees can then be obtained as a corollary of the fact that the sum of regret is small.

In our setting where $A_t$ is changing over time, our first observation is that instead of the sum of the two players' regret, what we need to control is the sum of their dynamic regret~\citep{ICML'03:zinkvich}, which plays an important role when deriving guarantees for all the three measures (\emph{including individual regret}).
Specifically, for an online learning algorithm producing $x_t$ and receiving $g_t$, its dynamic regret against a sequence of comparators $u_1, \ldots, u_T \in \Delta_m$ is defined as $\sum_{t=1}^T \inner{x_t - u_t, g_t}$.
Generalizing RVU, we naturally introduce the following ``Dynamic Regret bounded by Variation in Utilities'' (\drvu) property.
\begin{definition}[\drvu~Property]
    \label{def: drvu}
    Denote by $\calA(\eta)$ an online learning algorithm with a parameter $\eta >0$. We say that it satisfies the \emph{Dynamic Regret bounded by Variation in Utilities} property (abbreviated as \drvu($\eta$)) with parameters $\alpha, \beta, \gamma >0$, if its dynamic regret $\sum_{t=1}^T\inner{x_t-u_t,g_t}$ on any loss sequence $g_1,\ldots,g_T$ with respect to any comparator sequence $u_1,\ldots,u_T$ is bounded by 
    \begin{equation*}
        \frac{\alpha}{\eta} (1 + P_T^u) + \eta\beta \sum_{t=1}^T\|g_t-g_{t-1}\|_\infty^2 - \frac{\gamma}{\eta}\sum_{t=2}^T\|x_t-x_{t-1}\|_1^2,
    \end{equation*}
    where $P_T^u \triangleq \sum_{t=2}^T \norm{u_t - u_{t-1}}_1$ is the path-length of the comparator sequence. 
\end{definition}
Compared to RVU, \drvu\xspace naturally replaces the first constant term in the regret bound with a term depending on the path-length of the comparator sequence.
We also add another step size parameter $\eta$ (whose role will become clear soon).
Recent studies in dynamic regret~\citep{NIPS'20:sword,JMLR:sword++} show that variants of optimistic Online Mirror Descent (such as Optimistic Gradient Descent and Optimistic Hedge) indeed satisfy \drvu\xspace with $\alpha, \beta, \gamma = \tilde{\Theta}(1)$; see \pref{appendix:DRVU-property} for formal statements and proofs.

Now, if $x$-player deploys an algorithm satisfying \drvu\xspace and feeds it with loss vector $g_t = A_t y_t$ (and similarly $y$-player does the same), we can indeed prove a desired guarantee for each of the three performance measures.
However, the tuning of $\eta$ will require the knowledge of the unknown parameters $P_T, V_T, W_T$ and, perhaps more importantly, be different for each different measures.
To obtain an adaptive algorithm that performs well under all three measures without any prior knowledge, we further propose a two-layer structure with a meta-algorithm learning over and combining decisions of a set of base-learners, each of which satisfies \drvu($\eta$) but with a different step size $\eta$.
While this idea of ``learning over learning algorithms'' is not new in online learning, we will discuss below what extra difficulties show up in our case and how we address them.

\subsection{Base-learners}\label{sec:base-learners}
Define $N=\lfloor \frac{1}{2}\log_2 T \rfloor + 1$.
Our algorithm maintains $N+m$ base-learners:
for $i \in [N]$, the $i$-th base-learner is any algorithm that satisfies $\drvu(\eta_i^x)$,
where $\eta_i^x = \frac{2^{i-1}}{L\sqrt{T}}$ and 
\begin{equation}\label{eq:L_def}
L=\max\big\{4,\sqrt{16c\beta}, \sqrt{8c\beta/\gamma}\big\}
\end{equation}
($\beta$ and $\gamma$ are the parameters from \drvu\xspace and $c = \Ot(1)$ is a constant whose exact value can be found in the proof);
the last $m$ base-learners are dummy learners, with the $(j+N)$-th one always outputting the basis vector $e_j \in \Delta_m$ (that is, always choosing the $j$-th action).
We note that the dummy base-learners are important in controlling the duality gap (but not the other two measures).
We let $\calS_x\triangleq\calS_{1,x}\cup\calS_{2,x}$ with $\calS_{1,x} = [N]$ and $\calS_{2,x} = \{N+1, \ldots, N+m\}$ denote the set of indices of base-learners.

At round $t$, each base-learner $i$ submits her decision $x_{t,i} \in \Delta_m$ to the meta-algorithm, who decides the final decision $x_t$.
Upon receiving the feedback $A_t y_t$, the meta-algorithm sends the same (as the loss vector $g_t$) to each base-learner $i \in \calS_{1,x}$ (no updates needed for the dummy base-learners).

\subsection{Meta-algorithm}
With all the decisions $\{x_{t,i}\}_{i\in \calS_x}$ collected from the base-learners, the meta-algorithm outputs the final decision $x_t = \sum_{i\in \calS_x} p_{t,i} x_{t,i}$,\footnote{Note the slight abuse of notations here: while $p_{t,i}$ represents the $i$-th entry of vector $p_t$, $x_{t,i}$ is \emph{not} the $i$-th entry of $x_t$.} where $p_t \in \Delta_{|\calS_x|}$ is a distribution over the base-learners updated according to a version of Optimistic Online Gradient Descent (OOGD)~\citep{conf/nips/RakhlinS13,NIPS'15FastConv}:
\begin{equation}
\label{eq:optimistic-ogd}
\begin{split}
    p_{t} &= \argmin_{p \in \Delta_{|\calS_x|}} \left\{ \epsilon_t^x \langle p,m_t^x \rangle + \|p-\wh{p}_{t}\|_2^2\right\},\\
    \wh{p}_{t+1} &= \argmin_{p \in \Delta_{|\calS_x|}} \left\{ \epsilon_t^x \langle p,\ell_t^x \rangle + \|p-\wh{p}_t\|_2^2\right\}.
\end{split}
\end{equation}
Here, $\epsilon_t^x >0$ is a time-varying learning rate, $\{\wh{p}_{t}\}_{t=1,2,\ldots}$ is an auxiliary sequence (starting with $\wh{p}_{1}$ as the uniform distribution) updated via projected gradient descent using some loss vector sequence $\ell_1^x, \ell_2^x, \ldots \in\R^{|\calS_x|}$, and $p_t$ is updated via projected gradient descent from the distribution $\wh{p}_{t}$ and using a loss predictor $m_t^x \in \R^{|\calS_x|}$.
It remains to specify what $\ell_t^x$ and $m_t^x$ are (the tuning of the learning rate will be specified in the final algorithm).

\begin{algorithm}[!t]
     \caption{Algorithm for the $x$-player}
     \label{alg:x-player}
     \textbf{Input}: a base-algorithm $\mathcal{A}(\eta)$ satisfying DRVU($\eta$). 
     
     \textbf{Initialize}: 
     a set of base-learners $\calS_x$ as described in \pref{sec:base-learners},
     $\wh{p}_1 = \frac{1}{|\calS_x|} \mathbf{1}_{|\calS_x|}$, learning rate $\epsilon_1^x = \frac{1}{L}$ (\textit{c.f.} \pref{eq:L_def}).  
     
     \For{$t=1,\dots,T$}{
        Receive $x_{t,i}\in\Delta_m$ from each base-learner $i \in \calS_x$.
        
        Compute $m_t^x$ based on \pref{eq:correction-optimism} and $p_t$ based on \pref{eq:optimistic-ogd}.  
        
        Play the final decision $x_t = \sum_{i\in \calS_x} p_{t,i} x_{t,i}$.
        
        Suffer loss $x_t^\T A_t y_t$ and observe the loss vector $A_t y_t$.
        
        Compute $\ell_{t}^x$ based on \pref{eq:correction-loss} and $\wh{p}_{t+1}$ based on \pref{eq:optimistic-ogd}.  
        
        Update $\epsilon_{t+1}^x= \nicefrac{1}{\sqrt{L^2+\sum_{s=2}^t\|A_ty_t-A_{t-1}y_{t-1}\|_\infty^2}}$.
        
        Send $A_t y_t$ as the feedback to each base-learner.
     }
\end{algorithm}

Since base-learner $i$ predicts $x_{t,i}$ and receives loss vector $A_t y_t$,
it is natural to set its loss $\ell_{t,i}^x$ as $x_{t,i}^\T A_t y_t$ from the meta-algorithm's perspective. In light of standard OOGD, $m_t^x$ should then be set to $x_{t,i}^\T A_{t-1} y_{t-1}$, meaning that the last loss vector $A_{t-1} y_{t-1}$ is used to predict the current one (that is unknown yet when computing $p_t$).
However, this setup leads to the following issue.
When applying \drvu($\eta_i^x$) to this base-learner, we see that a negative term related to $\|x_{t,i}-x_{t-1,i}\|_1^2$ and a positive term related to $\|y_t-y_{t-1}\|_1^2$ arise (the latter is from $\|A_ty_t - A_{t-1} y_{t-1}\|_\infty^2 \leq 2 \norm{A_t - A_{t-1}}_{\infty}^2 + 2\|y_t-y_{t-1}\|_1^2$, with the first term only related to the non-stationarity of game matrices).
By symmetry, $y$-player contributes a positive term $\|x_t-x_{t-1}\|_1^2$, which now cannot be canceled by $\|x_{t,i}-x_{t-1,i}\|_1^2$, unlike the case with only one learner for each player discussed earlier.

To address this issue, we propose to add a stability correction term to both $\ell_t^x$ and $m_t^x$. Concretely, they are defined as $\ell_{1,i}^{x} = x_{1,i}^\T A_1 y_1$ and $m_{1,i}^x = 0, \forall i$, and for all $t \geq 2$:
\begin{align}
    \ell_{t,i}^{x} = {} & x_{t,i}^\T A_t y_t + \lambda \norm{x_{t,i} - x_{t-1,i}}_1^2,  \label{eq:correction-loss}\\
    m_{t,i}^{x} = {} & x_{t,i}^\T A_{t-1} y_{t-1} + \lambda \norm{x_{t,i} - x_{t-1,i}}_1^2,  \label{eq:correction-optimism}
\end{align}
where $\lambda = \frac{\gamma L}{2}$ ($\gamma$ is the parameter from \drvu).
From a technical perspective, this introduces to the regret a negative term $\sum_{i\in \calS_x} p_{t,i} \norm{x_{t,i} - x_{t-1,i}}_1^2$, and a positive term $\norm{x_{t,i} - x_{t-1,i}}_1^2$ which can be canceled by the aforementioned negative term from  \drvu($\eta_i^x$).
To see why the extra negative term is useful, notice that the troublesome term $\|x_t-x_{t-1}\|_1^2$ from \drvu($\eta_i^x$) can be bounded as 
\begin{align*}
    & \norm{x_t - x_{t-1}}_1^2 = \left\lVert\sum_{i\in \calS_x} p_{t,i} x_{t,i} - \sum_{i\in \calS_x} p_{t-1,i} x_{t-1,i}\right\rVert_1^2 \nonumber \\
    & \leq 2\sum_{i\in \calS_x} p_{t,i} \norm{x_{t,i} - x_{t-1,i}}_1^2 + 2\norm{p_t - p_{t-1}}_1^2,
\end{align*}
where the first term can exactly be canceled by the extra negative term introduced by the correction term, and the second term can in fact also be canceled in a standard way since the meta-algorithm itself can be shown to satisfy RVU.
This explains the design of our correction terms from a technical level.
Intuitively, injecting this correction term guides the meta-algorithm to bias toward the more stable base-learners, hence also stabilizing the final decision $x_t$. 

We note that a similar technique was used in analyzing gradient-variation dynamic regret for online convex optimization~\citep{JMLR:sword++}. Our approach is different from theirs in the sense that there is only one player in their setting and the correction term is used to cancel the additional gradient variation introduced by the variation of her own decision. In contrast, in our setting the correction term is used to cancel the opponent's gradient variation.

To summarize, our final algorithm (for the $x$-player) is presented in~\pref{alg:x-player}. We also include the symmetric version for the $y$-player in \pref{alg:y-player} (\pref{appendix:y-alg}) for completeness. We emphasize again that this is a parameter-free algorithm that does not require any prior knowledge of the environment.
\section{Theoretical Guarantees and Analysis}
\label{sec: regret guarantee}
In this section, we first provide the guarantees of our algorithm under each of the three performance measures, and then highlight several key ideas in the analysis, with the full proofs deferred to~\pref{appendix:main-proofs}.
Recall that our guarantees are all expressed in terms of the non-stationarity measures $P_T$, $V_T$, and $W_T$, defined in \pref{sec:measure-non-stationarity}.
Also, to avoid showing the cumbersome dependence on the \drvu\xspace parameters ($\alpha$, $\beta$, $\gamma$) in all our bounds, we will simply assume that they are all $\tilde{\Theta}(1)$, which, as mentioned earlier and shown in \pref{appendix:DRVU-property}, is indeed the case for standard algorithms.

\subsection{Performance Guarantees}
We state our results for each performance measure separately below, but emphasize again that they hold simultaneously.
First, we show the individual regret bound. 
\begin{theorem}[Individual Regret]
\label{thm:individual-regret}
When the $x$-player uses~\pref{alg:x-player}, irrespective of $y$-player's strategies, we have 
\[
    \Reg_T^x = \sum_{t=1}^Tx_t^\top A_ty_t - \min_x\sum_{t=1}^Tx^\top A_ty_t = \Ot(\sqrt{T}).
\]
Furthermore, if $x$-player follows~\pref{alg:x-player} and $y$-player follows~\pref{alg:y-player}, then individual regret satisfies:
\begin{align*}
  \max\left\{\Reg_T^x, \Reg_T^y\right\} = \Ot\Big(\sqrt{1+V_T+\min\{P_T,W_T\}}\Big).
\end{align*}
\end{theorem}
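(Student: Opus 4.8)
I would reduce both statements to a bound on the regret of the meta-algorithm \pref{eq:optimistic-ogd} measured against a single \emph{dummy} base-learner. The key observation is that the comparator in $\Reg_T^x$, namely $\argmin_{x\in\Delta_m}\sum_t x^\top A_ty_t$, minimizes a linear function over the simplex, so it can be taken to be a vertex $e_{j^\star}$ — which is exactly the output, at every round, of the dummy base-learner indexed by $N+j^\star$. Writing $\tilde{\ell}_{t,i}=x_{t,i}^\top A_ty_t$ for the \emph{uncorrected} meta-loss, we have $x_t^\top A_ty_t=\inner{p_t,\tilde{\ell}_t}$ and $\min_x\sum_t x^\top A_ty_t=\sum_t\tilde{\ell}_{t,N+j^\star}$, hence $\Reg_T^x=\sum_t\inner{p_t-e_{N+j^\star},\tilde{\ell}_t}$, where $e_{N+j^\star}$ is the vertex of $\Delta_{|\calS_x|}$ concentrated on that dummy learner. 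Since the dummy learner never moves, its correction term in \pref{eq:correction-loss} vanishes, so $\Reg_T^x=\sum_t\inner{p_t-e_{N+j^\star},\ell_t^x}-\lambda\sum_{t\ge2}\sum_i p_{t,i}\norm{x_{t,i}-x_{t-1,i}}_1^2\le\sum_t\inner{p_t-e_{N+j^\star},\ell_t^x}$. Thus it suffices to bound the meta-regret of \pref{eq:optimistic-ogd} against a fixed comparator.

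\textbf{Worst-case $\Ot(\sqrt{T})$ bound.} The meta-update \pref{eq:optimistic-ogd} is optimistic online gradient descent with the self-confident step size $\epsilon_t^x$ of \pref{alg:x-player}; its standard analysis bounds $\sum_t\inner{p_t-q,\ell_t^x}$ for any fixed $q\in\Delta_{|\calS_x|}$ by a quantity of order $\frac{1}{\epsilon_T^x}+\sum_t\epsilon_t^x\norm{\ell_t^x-m_t^x}^2$ (dropping the negative stability terms), with $\frac{1}{\epsilon_T^x}=O(\sqrt{L^2+\sum_s\norm{A_sy_s-A_{s-1}y_{s-1}}_\infty^2})$. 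Because the stability correction appears identically in $\ell_t^x$ and $m_t^x$ (see \pref{eq:correction-loss}--\pref{eq:correction-optimism}), it cancels in the difference, leaving $\norm{\ell_t^x-m_t^x}_\infty\le\norm{A_ty_t-A_{t-1}y_{t-1}}_\infty\le2$. The choice $L\ge4$ in \pref{eq:L_def} makes $L^2$ at least the increments, so the self-confident telescoping applies and gives $\sum_t\epsilon_t^x\norm{\ell_t^x-m_t^x}^2=O(\sqrt{L^2+\sum_t\norm{A_ty_t-A_{t-1}y_{t-1}}_\infty^2})=O(\sqrt{L^2+T})=\Ot(\sqrt{T})$, irrespective of the $y$-player's strategy. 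The $y$-player's bound is symmetric.

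\textbf{Adaptive bound.} When both players run \pref{alg:x-player}/\pref{alg:y-player}, I reuse the same chain but estimate $\sum_t\norm{A_ty_t-A_{t-1}y_{t-1}}_\infty^2$ sharply: from $\norm{A_ty_t-A_{t-1}y_{t-1}}_\infty^2\le2\norm{A_t-A_{t-1}}_\infty^2+2\norm{y_t-y_{t-1}}_1^2$ this is at most $2V_T+2\sum_t\norm{y_t-y_{t-1}}_1^2$. The second sum is controlled by the stability bound on the meta-algorithm's iterates, \pref{lemma:stability-main}, which yields $\sum_t\norm{y_t-y_{t-1}}_1^2=\Ot(1+V_T+\min\{P_T,W_T\})$ (and symmetrically for the $x$-iterates). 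Hence $\sum_t\norm{A_ty_t-A_{t-1}y_{t-1}}_\infty^2=\Ot(1+V_T+\min\{P_T,W_T\})=:\Ot(\Gamma)$, so $\frac{1}{\epsilon_T^x}=\Ot(\sqrt{\Gamma})$ and $\sum_t\epsilon_t^x\norm{\ell_t^x-m_t^x}^2=\Ot(\sqrt{\Gamma})$ by the same telescoping argument. Combined with the reduction above this gives $\Reg_T^x=\Ot(\sqrt{1+V_T+\min\{P_T,W_T\}})$, and by symmetry the same for $\Reg_T^y$, which is the claimed bound on $\max\{\Reg_T^x,\Reg_T^y\}$; since $\Gamma=O(T)$ it also re-derives the $\Ot(\sqrt{T})$ bound.

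\textbf{Main obstacle.} Everything above is routine once \pref{lemma:stability-main} is in hand, so the real work is that stability bound $\sum_t(\norm{x_t-x_{t-1}}_1^2+\norm{y_t-y_{t-1}}_1^2)=\Ot(1+V_T+\min\{P_T,W_T\})$. I would prove it by (i) splitting $\norm{x_t-x_{t-1}}_1^2\le2\sum_i p_{t,i}\norm{x_{t,i}-x_{t-1,i}}_1^2+2\norm{p_t-p_{t-1}}_1^2$; (ii) bounding $\sum_t\sum_i p_{t,i}\norm{x_{t,i}-x_{t-1,i}}_1^2$ through the extra negative term the correction \pref{eq:correction-loss} injects into the meta-regret, absorbing the positive $\lambda\norm{x_{t,i}-x_{t-1,i}}_1^2$ terms against the \drvu negative terms $-\frac{\gamma}{\eta_i^x}\norm{x_{t,i}-x_{t-1,i}}_1^2$, which is legitimate since $\eta_i^x\le2^{N-1}/(L\sqrt{T})\le1/L$ forces $\frac{\gamma}{\eta_i^x}\ge\gamma L=2\lambda$; (iii) bounding $\sum_t\norm{p_t-p_{t-1}}_1^2$ from the RVU property of the meta-update itself; and (iv) closing the resulting inequality — which couples the $x$- and $y$-stabilities — by comparing to the averaged matrix $\bar A$ (for the $W_T$ branch) or to the path-length of the Nash equilibria via the saddle-point inequality (for the $P_T$ branch) to break the circularity. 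Arranging the correction terms and the dummy learners so that this last step is self-contained is the crux of the analysis.
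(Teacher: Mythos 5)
Your proof is correct and, for the regret bound itself, takes a genuinely different route from the paper's.  The paper proves \pref{thm:individual-regret} via the two-layer decomposition of \pref{lemma:NE-variation-dynamic-regret}: it splits $\Reg_T^x$ into meta-regret against a \emph{non-dummy} base-learner $i\in\calS_{1,x}$ plus the base-regret of that learner (controlled by its DRVU property with a static comparator, so $P_T^u=0$), and then tunes the index $i$ so that $\eta_i^x$ balances $\nicefrac{1}{\eta}$ against $\eta(V_T + \sum_t\norm{y_t-y_{t-1}}_1^2)$ by AM--GM.  You instead observe that the best fixed comparator is a vertex $e_{j^\star}$, which coincides with the output of the dummy base-learner $N+j^\star$; since that dummy learner is static its correction term vanishes, so $\Reg_T^x$ is bounded by the meta-regret against $e_{N+j^\star}$ alone.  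The self-confident step size $\epsilon_t^x$ of the meta-algorithm then handles the tuning automatically, yielding $\Ot(\sqrt{L^2+\sum_t\norm{A_ty_t-A_{t-1}y_{t-1}}_\infty^2})$ with no choice of $\eta_i$, no DRVU base-regret, and no AM--GM step.  What the paper's route buys is uniformity: the same Lemma~\ref{lemma:NE-variation-dynamic-regret} machinery, with arbitrary comparator sequences, is reused to prove the dynamic NE-regret and duality-gap bounds, whereas your shortcut is specific to a static comparator.  What your route buys is a cleaner argument for this particular theorem and an interesting observation that the dummy learners — which the paper presents as serving only the duality-gap analysis — also yield the individual-regret bound directly.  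Both approaches converge on needing the stability bound of Lemma~\ref{lemma:stability-main}; the two applications of it you cite give the slightly weaker form $\Ot(1+V_T+\min\{P_T,W_T\})$, which is still sufficient once placed under the square root, and your sketch of that lemma (splitting into $\sum_ip_{t,i}\norm{x_{t,i}-x_{t-1,i}}_1^2+\norm{p_t-p_{t-1}}_1^2$, absorbing correction terms via $\eta_i^x\le1/L$, closing the coupled recursion with the saddle-point inequality at $(x_t^*,y_t^*)$ or at the averaged equilibrium $(\bar x^*,\bar y^*)$) matches the paper's Lemmas~\ref{lemma:stability-NE-variation} and~\ref{lemma:stability-payoff-variation}.
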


The first statement of \pref{thm:individual-regret} provides a robustness guarantee for our algorithm --- no matter how non-stationary the game matrices are and no matter how the opponent behaves, following our algorithm always ensures $\Ot(\sqrt{T})$ individual regret, the standard worst-case regret bound.
On the other hand, when both players follow our algorithm, their individual regret could be even smaller depending on the non-stationarity.
In particular, as long as $V_T+\min\{P_T,W_T\} = o(T)$ (that is, not the worst case scenario), our bound becomes $o(\sqrt{T})$.
Also note that $P_T$ and $W_T$ are generally incomparable (see~\pref{appendix:discussion-measure}), but our bound achieves the minimum of them, thus achieving the best of both worlds.

When the game matrix is fixed, we have $P_T=V_T=W_T=0$, immediately leading to the following corollary. 
\begin{corollary}
\label{cor:individual-regret-stationary}
When $x$-player follows~\pref{alg:x-player} and $y$-player follows~\pref{alg:y-player}, if $A_t = A$ for all $t \in [T]$, then $\max\left\{\Reg_T^x, \Reg_T^y\right\} =\Ot(1)$.
\end{corollary}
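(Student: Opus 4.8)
The plan is to derive this directly from the second bound in \pref{thm:individual-regret} by verifying that every non-stationarity measure collapses to zero in the stationary regime. First I would unpack the three definitions from \pref{sec:measure-non-stationarity} under the assumption $A_t = A$ for all $t \in [T]$. For the path-length of equilibria $P_T$: since $\calX_t^* = \calX_1^*$ and $\calY_t^* = \calY_1^*$ for every $t$ (the equilibrium set depends only on the matrix), we may take $(x_t^*, y_t^*)$ to be one fixed equilibrium $(x^*, y^*)$ for all $t$, which makes each summand $\|x_t^* - x_{t-1}^*\|_1 + \|y_t^* - y_{t-1}^*\|_1$ equal to zero; as $P_T$ is defined as the minimum over admissible equilibrium sequences, this gives $P_T = 0$. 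For the variation $V_T = \sum_{t=2}^T \|A_t - A_{t-1}\|_\infty^2$, each term equals $\|A - A\|_\infty^2 = 0$, so $V_T = 0$. For the variance $W_T$, note $\bar{A} = \frac{1}{T}\sum_{t=1}^T A_t = A$, hence $\|A_t - \bar{A}\|_\infty = 0$ for each $t$ and $W_T = 0$.

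Plugging $P_T = V_T = W_T = 0$ into the second statement of \pref{thm:individual-regret} then yields
\[
  \max\left\{\Reg_T^x, \Reg_T^y\right\} = \Ot\big(\sqrt{1+V_T+\min\{P_T,W_T\}}\big) = \Ot\big(\sqrt{1+0+\min\{0,0\}}\big) = \Ot(1),
\]
which is exactly the claimed bound.

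Since the whole argument is a substitution into an already-established theorem, there is no real obstacle here. The only point requiring a moment of care is the slightly subtle definition of $P_T$ as a minimum over equilibrium sequences rather than the value attained by one particular sequence; but this subtlety only helps us, since the constant sequence $(x^*, y^*)$ is an admissible choice and attains path-length zero. If anything, all the genuine work lives in \pref{thm:individual-regret} itself, whose proof must already handle the general time-varying setting of which the stationary case is merely a degenerate instance.
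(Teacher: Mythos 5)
Your proposal is correct and follows the same approach the paper takes: the paper derives the corollary simply by observing that $P_T = V_T = W_T = 0$ in the stationary case and substituting into the second bound of \pref{thm:individual-regret}. Your write-up is just a more explicit unpacking of why each non-stationarity measure vanishes, which the paper states without elaboration.
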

The best known individual regret bound for learning in a fixed two-player zero-sum game is $\O(1)$~\citep{COLT'21:Hsieh}. Our result matches theirs up to logarithmic factors. 

The next theorem presents the dynamic NE-regret bound.
\begin{theorem}[Dynamic NE-Regret]
\label{thm:dynamic-NE}
When $x$-player follows~\pref{alg:x-player} and $y$-player follows~\pref{alg:y-player}, we have the following dynamic NE-regret bound:
\begin{align*}
    & \dnereg = \left|\sum_{t=1}^Tx_t^\T A_ty_t-\sum_{t=1}^T\min_{x\in\Delta_m}\max_{y\in \Delta_n} x^\top A_t y\right| \\
    & = \Ot\big(\min\{\sqrt{(1+V_T)(1+P_T)} + P_T, 1 + W_T\}\big).
\end{align*}
\end{theorem}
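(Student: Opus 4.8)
\emph{Plan.} The plan is to prove each of the two bounds inside the $\min$ separately and then take the smaller one. First, fix the equilibrium sequence $\{(x_t^*,y_t^*)\}_{t=1}^T$ achieving the minimum in the definition of $P_T$, and set $D\triangleq\sum_{t}x_t^\top A_ty_t-\sum_t x_t^{*\top}A_ty_t^*$, so that $\dnereg=\abs{D}$ (recall $\min_x\max_yx^\top A_ty=x_t^{*\top}A_ty_t^*$). The saddle-point inequalities $x_t^{*\top}A_ty_t\le x_t^{*\top}A_ty_t^*\le x_t^\top A_ty_t^*$ give $-\DRegy\le D\le\DRegx$, where $\DRegx\triangleq\sum_t(x_t-x_t^*)^\top A_ty_t$ and $\DRegy\triangleq\sum_t x_t^\top A_t(y_t^*-y_t)$ are the dynamic regrets of the two players against $\{x_t^*\}$ and $\{y_t^*\}$; hence it suffices to bound each of $\DRegx,\DRegy$ by each of the two claimed quantities. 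By symmetry I only discuss $\DRegx$, and I note that the dummy base-learners play no role for this measure.

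\emph{The $\sqrt{(1+V_T)(1+P_T)}+P_T$ bound.} Split $\DRegx=\sum_t(x_t-x_{t,i^*})^\top A_ty_t+\sum_t(x_{t,i^*}-x_t^*)^\top A_ty_t$ (a ``meta'' part and a ``base $i^*$'' part) for an index $i^*\in[N]$ to be chosen. Base-learner $i^*$ is fed $g_t=A_ty_t$ and competes with $\{x_t^*\}$, so \drvu($\eta_{i^*}^x$) bounds the base part by $\tfrac{\alpha}{\eta_{i^*}^x}(1+P_T)+\eta_{i^*}^x\beta\sum_t\norm{A_ty_t-A_{t-1}y_{t-1}}_\infty^2-\tfrac{\gamma}{\eta_{i^*}^x}\sum_t\norm{x_{t,i^*}-x_{t-1,i^*}}_1^2$; using $\norm{A_ty_t-A_{t-1}y_{t-1}}_\infty^2\le2\norm{A_t-A_{t-1}}_\infty^2+2\norm{y_t-y_{t-1}}_1^2$ the middle sum is at most $2V_T+2S_T^y$, where $S_T^y\triangleq\sum_t\norm{y_t-y_{t-1}}_1^2$. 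For the meta part, $x_t=\sum_ip_{t,i}x_{t,i}$ together with the definitions of $\ell_t^x,m_t^x$ in \pref{eq:correction-loss} and \pref{eq:correction-optimism} rewrites it as $\sum_t\inner{p_t-e_{i^*},\ell_t^x}-\lambda\sum_t\sum_ip_{t,i}\norm{x_{t,i}-x_{t-1,i}}_1^2+\lambda\sum_t\norm{x_{t,i^*}-x_{t-1,i^*}}_1^2$; the OOGD regret $\sum_t\inner{p_t-e_{i^*},\ell_t^x}$ is controlled via the RVU property of the meta-algorithm and the adaptive step size $\epsilon_t^x$ of \pref{alg:x-player}, yielding $\Ot(\sqrt{1+V_T+S_T^y})$ up to a negative term of order $(1/\epsilon_t^x)\norm{p_t-p_{t-1}}_2^2$. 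Two cancellations then occur: $+\lambda\norm{x_{t,i^*}-x_{t-1,i^*}}_1^2$ is absorbed by $-\tfrac{\gamma}{\eta_{i^*}^x}\norm{x_{t,i^*}-x_{t-1,i^*}}_1^2$ because $\lambda=\tfrac{\gamma L}{2}\le\tfrac{\gamma}{\eta_N^x}\le\tfrac{\gamma}{\eta_{i^*}^x}$ (using $2^{N-1}\le\sqrt T$); and no $\norm{x_t-x_{t-1}}_1^2$ term appears in this bound at all --- it arises only when bounding $\DRegy$, where the symmetric negative terms handle it. Thus $\DRegx=\Ot\big(\sqrt{1+V_T+S_T^y}+\tfrac{1+P_T}{\eta_{i^*}^x}+\eta_{i^*}^x(1+V_T+S_T^y)\big)$. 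The step sizes $\{\eta_i^x\}_{i\in[N]}$ form a geometric grid covering $[\tfrac1{L\sqrt T},\tfrac1L]$ up to a factor $2$, and $L\ge4$ together with $\norm{A_ty_t}_\infty\le1$ keeps the ideal value $\approx\sqrt{(1+P_T)/(1+V_T+S_T^y)}$ from dropping below $\tfrac1{L\sqrt T}$; choosing the closest grid point gives $\DRegx=\Ot(\sqrt{(1+P_T)(1+V_T+S_T^y)}+P_T)$, the extra $P_T$ being the price of the grid being capped at $\Ot(1)$. Finally \pref{lemma:stability-main} (and its $y$-analogue) gives $S_T^y=\Ot(1+V_T)$, whence $\DRegx=\Ot(\sqrt{(1+V_T)(1+P_T)}+P_T)$.

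\emph{The $1+W_T$ bound.} Pivot to the averaged matrix $\bar A=\tfrac1T\sum_tA_t$. Since $v\mapsto\min_x\max_yx^\top v y$ is $1$-Lipschitz in $\norm{\cdot}_\infty$ and $\abs{x_t^\top(A_t-\bar A)y_t}\le\norm{A_t-\bar A}_\infty$, summing yields $\abs{D}\le\abs*{\sum_tx_t^\top\bar Ay_t-T\cdot\mathrm{val}(\bar A)}+2W_T$ with $\mathrm{val}(\bar A)=\min_x\max_yx^\top\bar Ay$. Letting $(x^\star,y^\star)$ be a Nash equilibrium of $\bar A$, the same saddle-point sandwiching as in the first step bounds $\abs*{\sum_tx_t^\top\bar Ay_t-T\cdot\mathrm{val}(\bar A)}$ by the $\max$ of $\sum_t(x_t-x^\star)^\top\bar Ay_t$ and $\sum_tx_t^\top\bar A(y^\star-y_t)$; each of these differs from its time-varying analogue (e.g.\ $\sum_t(x_t-x^\star)^\top A_ty_t$) by at most $2W_T$, and the analogue is a \emph{static} regret against a fixed comparator, hence bounded by the same two-layer argument with $P_T$ replaced by $0$, i.e.\ by $\Ot(\sqrt{1+V_T+S_T^y})$. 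Using $V_T\le4W_T$, \pref{lemma:stability-main}, and $\sqrt{1+W_T}\le1+W_T$, this is $\Ot(1+W_T)$. Taking the minimum of the two bounds completes the proof.

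\emph{Main obstacle.} I expect the crux to be the bookkeeping of the two decision-stability quantities $S_T^x=\sum_t\norm{x_t-x_{t-1}}_1^2$ and $S_T^y$: each enters the \emph{other} player's regret bound with a positive coefficient (through the opponent's utility variation) and can only be cancelled using the negative terms manufactured by the correction terms $\lambda\norm{x_{t,i}-x_{t-1,i}}_1^2$ and the meta-algorithm's RVU negative term $(1/\epsilon_t^x)\norm{p_t-p_{t-1}}_2^2$, after the expansion $\norm{x_t-x_{t-1}}_1^2\le2\sum_ip_{t,i}\norm{x_{t,i}-x_{t-1,i}}_1^2+2\norm{p_t-p_{t-1}}_1^2$. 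Making every constant line up is exactly what dictates $\lambda=\tfrac{\gamma L}{2}$ and the lower bounds on $L$ in \pref{eq:L_def}, and solving the resulting self-referential inequality for $S_T^x+S_T^y$ is the technical heart; since that is precisely the content of \pref{lemma:stability-main}, in this proof I would treat it as a black box and simply substitute $S_T^x,S_T^y=\Ot(1+V_T)$.
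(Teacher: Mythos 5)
Your proof follows essentially the same route as the paper's. Part~1 is identical: express the dynamic NE-regret via the saddle-point sandwich $-\DRegy\le D\le\DRegx$ against the round-by-round equilibria, feed the DRVU bound of the base-learner with comparator sequence $\{x_t^*\}$, cancel via the correction terms, use the stability result (\pref{lemma:stability-main}) to replace $S_T^y$, and tune the grid. Part~2 pivots to $\bar A$ and uses that the game value is $1$-Lipschitz in $\|\cdot\|_\infty$; the paper packages the same pivot as a standalone statement (\pref{lemma:payoff-variation-conversation}) and then quotes the already-proven individual-regret bound from \pref{thm:individual-regret}, whereas you pivot twice (once for the value side, once to convert the $\bar A$-regret back to an $A_t$-regret) and re-derive the static-comparator two-layer bound directly. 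This costs you a constant factor of $W_T$ more but is asymptotically the same.

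One slip worth flagging: you state that \pref{lemma:stability-main} gives $S_T^y=\Ot(1+V_T)$, but the lemma actually gives $\Ot\big(\min\{\sqrt{(1+V_T)(1+P_T)}+P_T,\,1+W_T\}\big)$, which after AM--GM is $\Ot(1+V_T+P_T)$, not $\Ot(1+V_T)$ (e.g.\ $V_T=0$, $P_T=\Theta(T)$, $W_T=\Theta(T)$ is consistent with $S_T^y=\Theta(T)$). Substituting the correct $\Ot(1+V_T+P_T)$ into your own expression $\Ot\big(\sqrt{(1+P_T)(1+V_T+S_T^y)}+P_T\big)$ still yields $\Ot\big(\sqrt{(1+V_T)(1+P_T)}+P_T\big)$ after expanding $\sqrt{(1+P_T)(1+V_T+P_T)}\le\sqrt{(1+V_T)(1+P_T)}+(1+P_T)$, so the final bound is unharmed, but the intermediate claim as written is incorrect.
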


Similarly, our dynamic NE-regret bound is $o(T)$ as long as $P_T$ \emph{or} $W_T$ is $o(T)$. When the game matrix is fixed, we again obtain the following direct corollary by noticing $P_T=V_T=W_T=0$ in this case.
\begin{corollary}
\label{cor:dynamic-NE-regret-stationary}
When $x$-player follows~\pref{alg:x-player} and $y$-player follows~\pref{alg:y-player}, if $A_t = A$ for all $t \in [T]$, then $\dnereg = \Ot(1)$.
\end{corollary}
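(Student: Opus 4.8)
The plan is to obtain this corollary as an immediate specialization of \pref{thm:dynamic-NE}: since that theorem already gives a bound on $\dnereg$ in terms of $P_T$, $V_T$, and $W_T$, the only thing left to verify is that all three non-stationarity measures collapse to zero once $A_t = A$ is fixed, after which the claim follows by direct substitution.

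First I would check the two matrix-based measures, which are immediate: $V_T = \sum_{t=2}^T \|A_t - A_{t-1}\|_\infty^2 = 0$ term by term, and since the averaged matrix is $\bar{A} = \frac{1}{T}\sum_{t=1}^T A_t = A$, we also have $W_T = \sum_{t=1}^T \|A_t - \bar{A}\|_\infty = 0$. Next I would handle the equilibrium-variation measure $P_T$, which is the only point requiring a moment's care: $P_T$ is defined as an infimum over all admissible selections $(x_t^*, y_t^*) \in \calX_t^* \times \calY_t^*$, and when the game is fixed the set $\calX_t^* \times \calY_t^*$ is the same (possibly non-singleton) set for every $t$, so one may take a single fixed equilibrium $(x^*, y^*)$ throughout; this makes every increment $\|x_t^* - x_{t-1}^*\|_1 + \|y_t^* - y_{t-1}^*\|_1$ vanish, hence $P_T = 0$. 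Substituting $P_T = V_T = W_T = 0$ into \pref{thm:dynamic-NE} then gives
\[
    \dnereg = \Ot\big(\min\{\sqrt{(1+0)(1+0)} + 0,\; 1+0\}\big) = \Ot\big(\min\{1,1\}\big) = \Ot(1),
\]
which is exactly the claim.

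There is essentially no obstacle in the corollary itself; all of the difficulty is already absorbed into \pref{thm:dynamic-NE}, whose proof (deferred to \pref{appendix:main-proofs}) rests on the stability bound for the meta-algorithm in \pref{lemma:stability-main} and the cancellations enabled by the correction terms \pref{eq:correction-loss}--\pref{eq:correction-optimism}. The only mild subtlety worth flagging is the $P_T$ observation above: because $P_T$ is an infimum over equilibrium selections rather than the path-length of a prescribed selection, one must explicitly note that a constant selection is feasible in the stationary case in order to conclude $P_T = 0$, rather than some nonzero quantity arising from the diameter of the equilibrium polytope.
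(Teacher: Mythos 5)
Your proposal is correct and follows the same route the paper takes: the corollary is just Theorem~\ref{thm:dynamic-NE} specialized to the stationary case, where the only content is verifying that $P_T = V_T = W_T = 0$ once $A_t$ is constant. Your additional remark that $P_T = 0$ relies on choosing a fixed Nash-equilibrium selection (since $P_T$ is an infimum over selections) is a genuinely worthwhile detail that the paper glosses over, but it does not change the argument's structure.
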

In fact, when the game is fixed, dynamic NE-regret degenerates to NE-regret of~\citep{ICML'19:drift-game} as $\sum_{t=1}^T\min_{x}\max_{y}x^\top A y = \min_{x}\max_{y} \sum_{t=1}^Tx^\top A y$.
Their algorithm would achieve $\otil(\sqrt{T})$ (dynamic) NE-regret in this case.
A better $\order(1)$ result is implicitly implied by the aforementioned work of~\citet{COLT'21:Hsieh}, as we show (in \pref{lemma:payoff-variation-conversation}) that (dynamic) NE-regret is bounded by the individual regret in this stationary case.
Our result again matches theirs up to logarithmic factors.

The last theorem provides an upper bound for duality gap.
\begin{theorem}[Duality Gap]
\label{thm:duality-gap-changing}
When $x$-player follows~\pref{alg:x-player} and $y$-player follows~\pref{alg:y-player}, we have
\begin{align*}
    & \gap = \sum_{t=1}^T \max_{y\in\Delta_n}x_t^\top A_ty- \sum_{t=1}^T \min_{x\in \Delta_{m}}x^\top A_ty_t \\
    & =\Ot\Big(\min\{T^{\frac{3}{4}} \big( 1 +Q_T \big)^{\frac{1}{4}}, 
    T^{\frac{1}{2}}(1+Q_T^{\frac{3}{2}}+P_TQ_T)^{\frac{1}{2}}\} \Big),
\end{align*}
where $Q_T \triangleq V_T + \min\{P_T, W_T\}$.
\end{theorem}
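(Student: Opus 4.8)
The starting point is the identity $\gap = \DRegx + \DRegy$, where $\DRegx \triangleq \sum_{t=1}^T x_t^\top A_t y_t - \sum_{t=1}^T \min_{x} x^\top A_t y_t$ is exactly the \emph{dynamic regret} of the $x$-player's online-learning instance (it plays $x_t$ and receives loss vector $g_t = A_t y_t$) against the per-round comparator $u_t = \argmin_{x} x^\top A_t y_t$, and symmetrically for $\DRegy$; by symmetry it suffices to bound $\DRegx$. The crucial structural fact is that every $u_t$ is a \emph{vertex} $e_{j_t}$ of $\Delta_m$, and for each vertex we maintain a dummy base-learner playing it in every round --- this is precisely why the dummy base-learners are needed here (but not for the other two measures), and it is what lets the meta-algorithm compare against the relevant comparators essentially for free.

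\textbf{The coarse bound $T^{3/4}(1+Q_T)^{1/4}$.} I would partition $[T]$ into $K=\lceil T/B\rceil$ consecutive blocks of length $B$, where $B$ is chosen only in the analysis. Within a block $\calB_k$ the best fixed comparator $\argmin_x\sum_{t\in\calB_k}x^\top A_ty_t$ is again a vertex $e_{j_k}$, played every round by dummy learner $j_k+N$, so I would decompose $\DRegx$ into (i) the sum over blocks of $\big(\sum_{t\in\calB_k}x_t^\top A_ty_t-\min_x\sum_{t\in\calB_k}x^\top A_ty_t\big)$ and (ii) the ``shifting'' error $\sum_k\min_x\sum_{t\in\calB_k}x^\top A_ty_t-\sum_t\min_x x^\top A_ty_t$. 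For (i), since $x_t=\sum_i p_{t,i}x_{t,i}$ and the comparator in block $k$ is the output of dummy learner $j_k+N$ (which is perfectly stable, so its correction term vanishes), each block term equals the meta-algorithm's regret $\sum_{t\in\calB_k}\langle p_t-e_{j_k+N},\ell_t^x\rangle$ minus the extra negative term $\lambda\sum_{t\in\calB_k}\sum_i p_{t,i}\norm{x_{t,i}-x_{t-1,i}}_1^2$; bounding the OOGD regret against a fixed comparator over each block in the usual RVU fashion, summing over the $K$ blocks, and --- the delicate step --- cancelling the positive $\norm{y_t-y_{t-1}}_1^2$ terms arising from $\norm{\ell_t^x-m_t^x}_\infty^2\le\norm{A_ty_t-A_{t-1}y_{t-1}}_\infty^2$ against the negative stability terms produced in the $y$-player's symmetric analysis (the $\beta\le\gamma$-type condition behind $L$ in \pref{eq:L_def}) leaves a bound of order $K\sqrt{1+V_T}$. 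Term (ii) is bounded by the within-block loss variation, $\Ot\big(B\sum_t\norm{A_ty_t-A_{t-1}y_{t-1}}_\infty\big)$, using only $|\min_j u_j-\min_j v_j|\le\norm{u-v}_\infty$. Finally I would expand $\norm{A_ty_t-A_{t-1}y_{t-1}}_\infty\le\norm{A_t-A_{t-1}}_\infty+\norm{y_t-y_{t-1}}_1$ and invoke \pref{lemma:stability-main} to get $\sum_t\norm{y_t-y_{t-1}}_1^2=\Ot(1+Q_T)$, hence $\sum_t\norm{A_ty_t-A_{t-1}y_{t-1}}_\infty=\Ot(\sqrt{T(1+Q_T)})$, so $\DRegx=\Ot\big((T/B)\sqrt{1+V_T}+B\sqrt{T(1+Q_T)}\big)$; optimizing over $B$ and using $V_T\le Q_T$ gives $\Ot(T^{3/4}(1+Q_T)^{1/4})$.

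\textbf{The refined bound $T^{1/2}(1+Q_T^{3/2}+P_TQ_T)^{1/2}$.} This must beat the coarse bound when the non-stationarity is small (in particular recovering $\Ot(\sqrt T)$ for fixed $A_t$), which forces a more aggressive use of optimism than fine-scale blocking. I would instead compare the meta-algorithm and the base-learners against the \emph{Nash-equilibrium} sequence $\{x_t^*\}$ (path-length $P_T$) through the $\drvu$ property's $\frac{\alpha}{\eta}(1+P_T^u)$ term, pay a separate ``equilibrium-tracking'' error $\sum_t\norm{y_t-y_t^*}_1$ controlled via the duality-gap-type quantities already bounded for dynamic NE-regret, and then collapse everything with one Cauchy--Schwarz step $\gap=\Ot\big(\text{(dynamic-regret terms)}+\sqrt{T\sum_t(\norm{x_t-x_{t-1}}_1^2+\norm{y_t-y_{t-1}}_1^2)}\big)$. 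Plugging in the dynamic-regret and stability estimates from (the proofs of) \pref{thm:individual-regret}, \pref{thm:dynamic-NE} and \pref{lemma:stability-main} --- which are not tight in $Q_T$, and whose $(1+P_T)(1+V_T+\cdots)$ and $\sum\norm{x_t-x_{t-1}}_1^2=\Ot(1+Q_T)$ shapes are exactly what produces the powers $Q_T^{3/2}$ and the cross term $P_TQ_T$ --- yields this bound. Taking the minimum of the two bounds completes the proof, with the $\min\{P_T,W_T\}$ inside $Q_T$ coming (as in \pref{thm:individual-regret}) from running the better of the $P_T$-based and $W_T$-based versions of the argument.

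\textbf{Main obstacle.} The hard part is not the algebra of the blocking decomposition but making the two layers cooperate: the RVU bound for the $x$-side produces positive $\norm{y_t-y_{t-1}}_1^2$ terms that cannot be cancelled within the $x$-analysis and must be absorbed \emph{globally} against negative stability terms manufactured by the $y$-player's correction terms and $\drvu$ base-learners, while the resulting bound on $\sum_t\norm{x_t-x_{t-1}}_1^2+\norm{y_t-y_{t-1}}_1^2$ (the content of \pref{lemma:stability-main}) feeds back into the variation term in the shifting error --- so the whole estimate must be arranged to avoid circularity. A secondary subtlety is that, unlike in standard two-layer schemes, the comparator relevant to the duality gap is a \emph{shifting vertex}, so the argument genuinely relies on the dummy base-learners and on the correction terms biasing the meta-algorithm toward the more stable base-learners.
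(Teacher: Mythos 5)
Your decomposition $\gap = \DRegx + \DRegy$ and your observation that the per-round comparator $\bar{x}_t^* = \argmin_x x^\top A_t y_t$ is a vertex — hence exactly covered by a dummy base-learner — are correct and match the paper's starting point for the coarse bound, and your block-plus-shifting-error decomposition is also the paper's. However, your intermediate bound for term (i) is wrong in a way that matters. Bounding block $k$ against the fixed dummy learner $e_{j_k+N}$ and summing does \emph{not} give $K\sqrt{1+V_T}$; the \drvu/OOGD machinery produces $\frac{P_T^u}{\eta} + \eta V_T$ with $P_T^u = \Theta(K) = \Theta(T/B)$, so that after the step-size pool picks $\eta$ one gets at best $\Ot\big(\sqrt{(T/B)(1+V_T)}\big)$ (or, keeping a constant $\eta$, $\Ot(T/B + V_T)$), with the $T/B$ piece dominating. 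Balancing your $(T/B)\sqrt{1+V_T}$ against $B\sqrt{T(1+Q_T)}$ actually yields $T^{3/4}(1+V_T)^{1/4}(1+Q_T)^{1/4} \le T^{3/4}(1+Q_T)^{1/2}$, which is \emph{not} the claimed $T^{3/4}(1+Q_T)^{1/4}$; the correct balancing in the paper is $T/B$ vs.\ $B\sqrt{T(1+Q_T)}$, where $V_T$ is absorbed because $V_T \le Q_T \le T^{3/4}(1+Q_T)^{1/4}$.

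The refined bound $T^{1/2}(1+Q_T^{3/2}+P_TQ_T)^{1/2}$ is where the real gap lies. Your proposed route — compare against the Nash-equilibrium sequence $\{x_t^*\}$ via \drvu{} and pay an ``equilibrium-tracking error'' $\sum_t \|y_t-y_t^*\|_1$ — does not work: the dynamic-NE-regret arguments control $\sum_t (x_t - x_t^*)^\top A_t y_t$ and the like, but converting that to the duality gap (which compares against best responses $\bar{x}_t^*$, not equilibria $x_t^*$) requires exactly the tracking error you name, and the paper's machinery never bounds $\sum_t \|y_t - y_t^*\|_1$; indeed \pref{remark:measure} and the surrounding discussion explicitly say tracking-error bounds are out of scope and can depend on arbitrarily bad problem constants even for fixed $A_t$. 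The paper's actual proof takes a fundamentally different route: it bounds the duality gap of the \emph{auxiliary} iterates $\tilde{x}_t = \sum_i \hat{p}_{t+1,i}x_{t,i}$, $\tilde{y}_t = \sum_j \hat{q}_{t+1,j}y_{t,j}$ directly, by reading the first-order optimality conditions of the meta-algorithm's OOGD update against a distribution $p'$ supported on the dummy learners encoding the best response $\bar{x}_t^*$; this yields $\alpha_t(\tilde{x}_t,\tilde{y}_t) \lesssim \frac{1}{\epsilon_t^x}\|\hat{p}_{t+1}-\hat{p}_t\|_2 + \frac{1}{\epsilon_t^y}\|\hat{q}_{t+1}-\hat{q}_t\|_2 + \ldots$, which is then squared and telescoped against Nash comparators $p_t^*, q_t^*$ (again supported on the dummies) with path-length $P_T$, summed, and converted by Cauchy–Schwarz. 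None of that first-order-optimality / $\tilde{x}_t$–$\tilde{y}_t$ / squared-telescoping structure appears in your sketch, and the Cauchy–Schwarz identity $\gap = \Ot\big(\text{(dyn-reg)} + \sqrt{T\sum_t(\|x_t-x_{t-1}\|_1^2+\|y_t-y_{t-1}\|_1^2)}\big)$ you write at the end is asserted without derivation and is not obviously true. So for this second bound you have the right target shape but not a viable proof.
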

Once again the bound is $o(T)$ whenever $Q_T = o(T)$, and it implies the following corollary.
\begin{corollary}
\label{cor:duality-gap-stationary}
When $x$-player follows~\pref{alg:x-player} and $y$-player follows~\pref{alg:y-player}, if $A_t = A$ for all $t \in [T]$, then $\gap=\Ot(\sqrt{T})$.
\end{corollary}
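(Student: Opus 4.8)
The plan is to reduce the cumulative duality gap to the two players' \emph{dynamic regrets against per-round best responses}, and then bound each of these with a block decomposition that crucially exploits the dummy base-learners. Adding and subtracting $\sum_t x_t^\top A_t y_t$ splits $\gap = R^x + R^y$ where $R^x \defeq \sum_{t=1}^T\big(x_t^\top A_t y_t - \min_x x^\top A_t y_t\big)$ and $R^y \defeq \sum_{t=1}^T\big(\max_y x_t^\top A_t y - x_t^\top A_t y_t\big)$, both nonnegative. Writing $g_t^x \defeq A_t y_t$ and letting $j_t^\star \in \argmin_i (g_t^x)_i$ index a per-round best pure response, $R^x = \sum_t \inner{x_t - e_{j_t^\star}, g_t^x}$, i.e.\ the $x$-player's dynamic regret against the \emph{pure-strategy} comparator sequence $e_{j_1^\star},\dots,e_{j_T^\star}$, and symmetrically for $R^y$. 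Since this comparator sequence may have $\ell_1$ path-length $\Theta(T)$, a single \drvu\ base-learner cannot track it; this is exactly why the $m$ dummy base-learners (the $(N+j)$-th always playing $e_j$) are needed.

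Next I would partition $[T]$ into consecutive blocks $\{\calB_k\}$ of a common length $B$ to be optimized, and for each block set $\bar{\imath}_k \in \argmin_i \sum_{t\in\calB_k}(g_t^x)_i$, the best fixed pure response on that block. Then
\[ \sum_{t\in\calB_k}\inner{x_t - e_{j_t^\star}, g_t^x} \;=\; \underbrace{\sum_{t\in\calB_k}\inner{x_t - e_{\bar{\imath}_k}, g_t^x}}_{(a)} \;+\; \underbrace{\Big(\min_i\textstyle\sum_{t\in\calB_k}(g_t^x)_i - \sum_{t\in\calB_k}\min_i(g_t^x)_i\Big)}_{(b)}. \]
For term $(a)$, inserting the $(N+\bar{\imath}_k)$-th dummy base-learner (which plays $e_{\bar{\imath}_k}$ throughout, hence contributes zero regret) turns it into the meta-algorithm's \emph{interval} regret against a single fixed index, plus --- via the correction terms \pref{eq:correction-loss}--\pref{eq:correction-optimism} --- the extra negative quantity $-\lambda\sum_{t\in\calB_k}\sum_i p_{t,i}\norm{x_{t,i}-x_{t-1,i}}_1^2$. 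For term $(b)$, a short telescoping argument bounds it by $\tfrac B2 \sum_{r\in\calB_k}\norm{g_{r+1}^x - g_r^x}_\infty$, and $\norm{g_{r+1}^x-g_r^x}_\infty \le \norm{A_{r+1}-A_r}_\infty + \norm{y_{r+1}-y_r}_1$.

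The main work is to sum the $(a)$-terms over all $T/B$ blocks. Telescoping the optimistic-OGD interval-regret bound for \pref{eq:optimistic-ogd}, the time-varying learning rate collapses the leading pieces into $\Ot\big((T/B)\sqrt{1+\Gamma_T^x}\big)$ with $\Gamma_T^x \defeq \sum_t\norm{A_ty_t - A_{t-1}y_{t-1}}_\infty^2$, plus a negative meta-stability term in $\norm{p_t - p_{t-1}}_2^2$. Carrying out the symmetric computation for $R^y$ and adding, the positive variation terms separate (following the reasoning after \pref{eq:correction-optimism}) into a pure matrix-variation part bounded by $V_T$ and player-movement parts $\norm{x_t-x_{t-1}}_1^2,\norm{y_t-y_{t-1}}_1^2$; the latter get cancelled by the negative terms coming from the correction terms and the \drvu\ guarantees of the base-learners, using $\norm{x_t-x_{t-1}}_1^2 \le 2\sum_i p_{t,i}\norm{x_{t,i}-x_{t-1,i}}_1^2 + 2\norm{p_t-p_{t-1}}_1^2$ --- precisely the computation behind \pref{lemma:stability-main}, and where the choices $L$ in \pref{eq:L_def} and $\lambda = \gamma L/2$ enter. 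Invoking \pref{lemma:stability-main} to bound $\sum_t(\norm{x_t-x_{t-1}}_1^2 + \norm{y_t-y_{t-1}}_1^2)$, hence $\Gamma_T^x$ and $\Gamma_T^y$, by $\Ot(1+Q_T)$, the total of all $(a)$-terms is $\Ot\big((T/B)\sqrt{1+Q_T}\big)$.

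Finally, summing the $(b)$-terms over blocks gives $\Ot(B\cdot\Pi_T)$ with $\Pi_T \defeq \sum_t\big(\norm{A_{t+1}-A_t}_\infty + \norm{x_{t+1}-x_t}_1 + \norm{y_{t+1}-y_t}_1\big)$; Cauchy--Schwarz together with $\sum_t\norm{A_{t+1}-A_t}_\infty^2 = V_T$ and \pref{lemma:stability-main} yield $\Pi_T = \Ot\big(\sqrt{T(1+Q_T)}\big)$, while a separate non-squared estimate of the player path-lengths and matrix drift gives a second bound on $\Pi_T$ that is sharper when the non-stationarity is small. Combining, $\gap = \Ot\big((T/B)\sqrt{1+Q_T} + B\,\Pi_T\big)$; choosing $B\asymp T^{1/4}(1+Q_T)^{1/4}$ produces the $T^{3/4}(1+Q_T)^{1/4}$ branch, and a different choice of $B$ with the sharper $\Pi_T$ estimate produces the $T^{1/2}(1+Q_T^{3/2}+P_TQ_T)^{1/2}$ branch; taking the minimum gives the claim, and setting $Q_T=P_T=0$ with $B\asymp\sqrt T$ recovers \pref{cor:duality-gap-stationary}. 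The hard part will be the third step: forcing the player-movement variation terms of \emph{both} players to cancel simultaneously against the three layers of negative terms (base-learner \drvu\ negatives, correction-term negatives, and meta-OOGD negatives) so that only $\Ot(\sqrt{1+Q_T})$ survives per block --- this is the delicate bookkeeping that \pref{lemma:stability-main} is built to handle.
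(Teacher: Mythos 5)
The gap is in the final optimization step. Your block decomposition faithfully reproduces the paper's proof of the \emph{first} branch of \pref{thm:duality-gap-changing}, i.e.\ the $\Ot\big(T^{3/4}(1+Q_T)^{1/4}\big)$ bound. In the stationary case $Q_T=0$ this only yields $\Ot(T^{3/4})$, not $\Ot(\sqrt{T})$; the corollary is instead a consequence of the \emph{second} branch $\Ot\big(T^{1/2}(1+Q_T^{3/2}+P_TQ_T)^{1/2}\big)$, which reduces to $\sqrt{T}$ when $Q_T=P_T=0$. Within your block framework the claimed improvement cannot be realized: with block length $B$ you get $\gap = \Ot(T/B + B\,\Pi_T)$, and in the stationary case $\Pi_T = \sum_t(\norm{x_{t+1}-x_t}_1 + \norm{y_{t+1}-y_t}_1)$. \pref{lemma:stability-main} controls only the \emph{squared} path-length, $\sum_t\norm{x_{t+1}-x_t}_1^2 = \Ot(1)$, which via Cauchy--Schwarz gives $\Pi_T = \Ot(\sqrt{T})$, and this is generically tight (iterates drifting by $\Theta(T^{-1/2})$ per step). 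Your ``separate non-squared estimate'' making $\Pi_T$ smaller has no basis --- squared stability does not imply a nontrivial bound on the unsquared path-length, and no lemma in the paper supplies one --- so $B$-optimization stalls at $T^{3/4}$.

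To actually reach $\sqrt{T}$, the paper's proof of the second branch (\pref{appendix:proof_duality-gap}) abandons the block reduction entirely. It defines smoothed iterates $\wt{x}_t = \sum_i\wh{p}_{t+1,i}x_{t,i}$ and $\wt{y}_t = \sum_j\wh{q}_{t+1,j}y_{t,j}$, writes the first-order optimality condition of the projected-gradient step in \pref{eq:optimistic-ogd} against a comparator distribution supported \emph{only on the dummy base-learners} and realizing the best response to $(\wt{x}_t,\wt{y}_t)$, and thereby bounds the per-round duality gap $\alpha_t(\wt{x}_t,\wt{y}_t)$ directly by $\Ot\big(\tfrac{1}{\epsilon_t^x}(\norm{\wh{p}_{t+1}-\wh{p}_t}_2 + \norm{p_{t+1}-\wh{p}_{t+1}}_2) + \tfrac{1}{\epsilon_t^y}(\norm{\wh{q}_{t+1}-\wh{q}_t}_2 + \norm{q_t-\wh{q}_{t+1}}_2)\big)$. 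Squaring, telescoping against the adaptive learning-rate schedule and the Lyapunov terms $\norm{\wh{p}_t - p_t^*}_2^2$, and applying Cauchy--Schwarz then gives $\sum_t\alpha_t(\wt{x}_t,\wt{y}_t) = \Ot(\sqrt{T})$ when $Q_T=0$, with a final conversion from $\alpha_t(\wt{x}_t,\wt{y}_t)$ to $\alpha_t(x_t,y_t)$ again controlled by meta-level stability. Note in particular that the dummy base-learners serve a different purpose here than in your argument: not to make the comparator path-length small inside a block, but to let the meta-algorithm's KKT-type optimality condition directly certify a duality-gap inequality at every round. This per-round argument is the missing ingredient.
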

Notably, the best known result of duality gap for a fixed game is $\O(\sqrt{T})$~\citep{ICLR'21-last-iteration}, and our result again matches theirs up to logarithmic factors. 

\subsection{Key Ideas for Analysis}
We now highlight some key components and novelty of our analysis.
As mentioned in~\pref{sec: algo overview}, to bound all the three metrics, the key is to bound the sum of the two players' dynamic regret, which further requires controlling the stability of the strategies between consecutive rounds. The following key lemma shows how such stability is controlled by the non-stationarity measures of $\{A_t\}_{t=1}^T$.
\begin{lemma}
\label{lemma:stability-main}
When $x$-player follows~\pref{alg:x-player} and $y$-player follows~\pref{alg:y-player},
we have both $\sum_{t=2}^T \norm{x_t - x_{t-1}}_1^2$ and $\sum_{t=2}^T \norm{y_t - y_{t-1}}_1^2$ bounded by
\begin{align*}
    \Ot\Big(\min\big\{\sqrt{(1+V_T)(1+P_T)} + P_T, 1+W_T\big\}\Big).
\end{align*}
\end{lemma}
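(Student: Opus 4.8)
The plan is to reduce both stability sums to a \emph{base-level} instability $S_x \triangleq \sum_{t=2}^T \sum_{i \in \calS_x} p_{t,i}\|x_{t,i}-x_{t-1,i}\|_1^2$ and a \emph{meta-level} instability $M_x \triangleq \sum_{t=2}^T\|p_t - p_{t-1}\|_1^2$, together with their $y$-counterparts $S_y,M_y$. Since $x_t = \sum_{i}p_{t,i}x_{t,i}$, the triangle inequality, convexity of $\|\cdot\|_1^2$, and $(a+b)^2\le 2a^2+2b^2$ give $\sum_{t=2}^T\|x_t-x_{t-1}\|_1^2 \le 2S_x+2M_x$ (and symmetrically), so it suffices to bound $Z \triangleq S_x+S_y+M_x+M_y$. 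Two facts will be used throughout: (a) the optimistic OGD meta-update~\eqref{eq:optimistic-ogd} with the learning rate of \pref{alg:x-player} satisfies an RVU-type bound whose positive part is only $\Ot\!\big(\sqrt{1 + \sum_t\|A_t y_t - A_{t-1}y_{t-1}\|_\infty^2}\,\big)$ — the stability correction in~\eqref{eq:correction-loss}--\eqref{eq:correction-optimism} being harmless here because it cancels in $\ell_t^x - m_t^x$ — while its negative part is at least $c_m L\, M_x$ for some constant $c_m>0$ (since $\epsilon_t^x\le \epsilon_1^x = 1/L$); and (b) $\sum_t\|A_t y_t - A_{t-1}y_{t-1}\|_\infty^2 \le 2V_T + 4S_y + 4M_y$, from $\|A_ty_t-A_{t-1}y_{t-1}\|_\infty^2 \le 2\|A_t-A_{t-1}\|_\infty^2 + 2\|y_t-y_{t-1}\|_1^2$ and the decomposition above.

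For the first branch $\Ot(\sqrt{(1+V_T)(1+P_T)}+P_T)$, I would, in the analysis, feed each base-learner $i\in\calS_{1,x}$ the comparator sequence $u_t = x_t^*$ from a path-length-minimizing Nash sequence (so its path length is $\le P_T$), and add the meta-algorithm's regret against expert $i$ to the \drvu($\eta_i^x$) bound of base-learner $i$ (\pref{def: drvu}). The loss $\ell_t^x$ in~\eqref{eq:correction-loss} is designed precisely so that this produces a $\lambda S_x$ term on the left and a positive $\lambda\|x_{t,i}-x_{t-1,i}\|_1^2$ term that is dominated by the \drvu negative term, because $\gamma/\eta_i^x = \gamma L\sqrt T/2^{i-1} \ge \gamma L \ge \lambda$ (using $2^{i-1}\le\sqrt T$, which holds by the choice of $N$ and is exactly why the grid of step sizes tops out near $1/L$). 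Invoking facts (a) and (b), this yields
\[
\lambda S_x + c_m L\, M_x \;\le\; \sum_{t}(x_t^* - x_t)^\top A_t y_t + \Ot\!\Big(\sqrt{1+V_T+S_y+M_y}\,\Big) + \frac{\alpha}{\eta_i^x}(1+P_T) + \eta_i^x\beta\big(2V_T + 4S_y + 4M_y\big),
\]
and symmetrically for the $y$-player (with an index $j$, comparator $y_t^*$, and term $\eta_j^y\beta(2V_T+4S_x+4M_x)$). Adding the two inequalities, the comparator terms collapse to $\sum_t\big((x_t^*)^\top A_t y_t - x_t^\top A_t y_t^*\big)\le 0$ by the saddle-point property of $(x_t^*,y_t^*)$; the choices $\lambda = \gamma L/2$ and $L = \max\{4,\sqrt{16c\beta},\sqrt{8c\beta/\gamma}\}$ make the cross-terms satisfy $\eta_i^x\beta\cdot 4S_y \le \tfrac{4\beta}{L}S_y \le \tfrac{\lambda}{2}S_y$ and $\eta_i^x\beta\cdot 4M_y \le \tfrac{c_m L}{2}M_y$ (and mirror images), so all of them move to the left, leaving $\tfrac{\lambda}{2}(S_x+S_y)+\tfrac{c_m L}{2}(M_x+M_y) \le \Ot\!\big(\sqrt{1+V_T+Z}\,\big) + \tfrac{\alpha}{\eta_i^x}(1+P_T)+\tfrac{\alpha}{\eta_j^y}(1+P_T) + \Ot\!\big((\eta_i^x+\eta_j^y)V_T\big)$. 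Choosing $i$ and $j$ to balance $\tfrac{\alpha}{\eta}(1+P_T)$ against $\eta\beta V_T$ on the grid $\eta_i^x = 2^{i-1}/(L\sqrt T)$ gives $\Ot(\sqrt{(1+P_T)(1+V_T)})$ when the ideal step lies in the grid, and $\Ot(1+P_T)$ when it would exceed $\Theta(1/L)$ (which forces $1+V_T \lesssim L^2(1+P_T)$) — the source of the additive $P_T$. The resulting self-referential bound $Z \lesssim \tfrac1L\big(\sqrt{(1+P_T)(1+V_T+Z)}+P_T\big)$ is closed by $\sqrt{(1+P_T)(1+V_T+Z)}\le\sqrt{(1+P_T)(1+V_T)}+\sqrt{(1+P_T)Z}$ and AM-GM on the last term to absorb a constant fraction of $Z$, giving $Z = \Ot(\sqrt{(1+P_T)(1+V_T)}+P_T)$.

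For the second branch $\Ot(1+W_T)$, I would instead compare the meta-algorithm against the dummy base-learners, which is where they become essential. The $(j+N)$-th dummy always plays $e_j$ and introduces no correction term, so the meta-regret against it equals $\sum_t x_t^\top A_t y_t + \lambda S_x - \sum_t e_j^\top A_t y_t$; minimizing over $j\in[m]$ replaces $\sum_t e_j^\top A_t y_t$ by $\min_x\sum_t x^\top A_t y_t$, whence $\lambda S_x + c_m L\, M_x \le -\Reg_T^x + \Ot\!\big(\sqrt{1+V_T+S_y+M_y}\,\big) \le \Ot\!\big(\sqrt{1+V_T+S_y+M_y}\,\big)$. (Alternatively, adding the symmetric $y$-inequality and writing $A_t = \bar A + (A_t-\bar A)$ with $\bar A = \tfrac1T\sum_t A_t$, paying $\|A_t-\bar A\|_\infty$ per round, collapses the leftover into $T\big(\min_x x^\top\bar A\bar y - \max_y\bar x^\top\bar A y\big) + \O(W_T) \le \O(W_T)$ by the saddle-point property for $\bar A$, where $\bar x=\tfrac1T\sum_t x_t$, $\bar y=\tfrac1T\sum_t y_t$; this is the form that exhibits the $W_T$ dependence directly.) Combining with the symmetric inequality and using $V_T\le 4W_T$ gives $Z \lesssim \tfrac1L\big(W_T + \sqrt{1+V_T+Z}\big)$, and the same AM-GM step closes it into $Z = \Ot(1+W_T)$. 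Taking the smaller of the two branches and recalling $\sum_{t=2}^T\|x_t-x_{t-1}\|_1^2 \le 2Z$ (and symmetrically) finishes the proof.

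The step I expect to be the main obstacle is the coupled self-bounding: $S_x,M_x$ are bounded through the $y$-player's gradient variation and $S_y,M_y$ through the $x$-player's, so one must verify that every positive stability term — those created by \drvu (from the opponent's moves), by the injected correction in $\ell^x,m^x$, and by the meta-algorithm's own movement — is strictly dominated by an available negative stability term, which is exactly what the specific values of $L$, $\lambda = \gamma L/2$, the grid $\{\eta_i^x\}$, and the adaptive meta learning rate $\epsilon_t^x$ are engineered to guarantee; pinning down the hidden constant $c$ inside $L$ so that all these absorptions go through simultaneously is the delicate part. A secondary technical point is establishing the RVU property for optimistic OGD under a time-varying learning rate and extracting from its negative term a bound on $\sum_t\|p_t-p_{t-1}\|_1^2$ rather than on the one-sided $\|p_t-\wh{p}_t\|$-type increments.
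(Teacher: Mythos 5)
Your first branch ($P_T$) tracks the paper's \pref{lemma:stability-NE-variation} closely: compare against the pointwise Nash sequence $(x_t^*,y_t^*)$, use the saddle-point inequality so the combined dynamic-regret sum is nonnegative, add the two players' bounds, and verify via the specific $L,\lambda$ that every positive stability term is dominated by an available negative one. The only stylistic difference is that the paper first converts the meta-algorithm's $\Ot\big(\sqrt{1+\sum_t\|A_ty_t-A_{t-1}y_{t-1}\|_\infty^2}\big)$ term into the form $\tfrac{1}{\eta_i^x}+\eta_i^x\sum_t\|\cdot\|_\infty^2$ via AM--GM inside \pref{lemma:NE-variation-dynamic-regret}, so everything cancels exactly, whereas you keep the square root explicit and close a self-referential inequality $Z\lesssim\sqrt{(1+P_T)(1+V_T+Z)}+P_T$ at the end; both routes go through.

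The second branch ($W_T$) has a genuine error in the primary argument. You write $\lambda S_x + c_m L\, M_x \le -\Reg_T^x + \Ot\big(\sqrt{1+V_T+S_y+M_y}\big) \le \Ot\big(\sqrt{1+V_T+S_y+M_y}\big)$, which silently uses $\Reg_T^x \ge 0$. That is false in general: the comparator in $\Reg_T^x$ is a single fixed $x$ while $x_t$ varies, so $\Reg_T^x = \sum_t x_t^\top A_ty_t - \min_x\sum_t x^\top A_ty_t$ can be as negative as $-\Theta(T)$ (e.g.\ if $A_ty_t$ oscillates in sign and $x_t$ best-responds each round, $\sum_t x_t^\top A_ty_t = \sum_t\min_x x^\top A_ty_t \le \min_x\sum_t x^\top A_ty_t$). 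Were the primary route sound, it would yield $Z=\Ot(1+V_T)$, strictly stronger than the lemma's $\Ot(1+W_T)$ since $V_T\le 4W_T$, which should have been a warning sign. Your parenthetical alternative --- keeping $-(\Reg_T^x+\Reg_T^y)=\min_x\sum_t x^\top A_ty_t - \max_y\sum_t x_t^\top A_ty$, swapping $A_t\to\bar A$ at cost $\O(W_T)$, and invoking the saddle-point inequality for $\bar A$ at $(\bar x,\bar y)$ --- is what actually closes this branch, and it should be the main line rather than an aside.

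That corrected $W_T$ route is also genuinely different from the paper's. In \pref{lemma:stability-payoff-variation} the paper never touches the dummy base-learners; it plugs the fixed comparator $(x^*,y^*)$, a Nash equilibrium of $\bar A$, into the full dynamic-regret bound of \pref{lemma:NE-variation-dynamic-regret}, so the real base-learners' DRVU property does the work with $P_T^u=P_T^v=0$. You instead control only the meta-layer, comparing $p_t$ against the dummy that always plays the best vertex --- a comparison for which the injected correction in $\ell_t^x$ vanishes, so $\lambda S_x$ appears automatically on the favorable side without any cancellation argument. That is a clean way to see why the correction is harmless, but note it contradicts your claim that the dummies ``become essential'' here: in the paper they are needed only for the duality-gap bound (\pref{thm:duality-gap-changing}), not for stability.
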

This lemma implies an $\Ot(1)$ stability bound when the game is fixed, which is first proven in~\citep{COLT'21:Hsieh} where both players run the Optimistic Hedge algorithm with an adaptive learning rate. Our result generalizes theirs but requires a novel analysis due to both the time-varying matrices and the two-layer structure of our algorithm.
As another note, this lemma also highlights another difference of our method compared to~\citep{JMLR:sword++} --- as mentioned in \pref{sec: algo overview} our algorithm shares some similarity with theirs, but no explicit stability bound is proven or required in their problem, while stability is crucial for our whole analysis. We next present the proof sketch for~\pref{lemma:stability-main}. More details can be found in~\pref{appendix:key-lemmas}. 

\textit{Proof Sketch.}~~We show in~\pref{lemma:NE-variation-dynamic-regret} that the sum of the two players' dynamic regret (against a sequence $u_1, \ldots, u_T \in \Delta_m$ for $x$-player and a sequence $v_1, \ldots, v_T \in \Delta_n$ for $y$-player) can be bounded by
\begin{align*}
    &\sum_{t=1}^T\big(x_t^\top A_tv_t- u_t^\top A_ty_t\big) = \Ot\Bigg(\frac{1+P_T^u+P_T^v}{\eta}+\eta (1+V_T)\Bigg)\\
    &\qquad-\Omega\Bigg(\sum_{t=1}^T(\|x_t-x_{t-1}\|_1^2+\|y_t-y_{t-1}\|_1^2)\Bigg),
\end{align*}
for \emph{any} step size $0<\eta\leq \otil(1)$. Here, $P_T^u\triangleq \sum_{t=2}^T\|u_t-u_{t-1}\|_1$ and $P_T^v \triangleq\sum_{t=2}^T\|v_t-v_{t-1}\|_1$ are the path-length of comparators. Then,~\pref{lemma:stability-main} can be proven by taking different choices of $\eta$ and the comparator sequence. 
For example, consider picking $(u_t,v_t)=(x_t^*,y_t^*)$. Since the saddle point property ensures $x_t^\top A_ty_t^*-x_t^{*\top}A_ty_t\geq0$, 
rearranging and picking the optimal $\eta$ thus gives the first bound $\otil(\sqrt{(1+V_T)(1+P_T)}+P_T)$ on the stability. 
To prove the second bound, pick $(u_t,v_t)=(\bar{u}^*, \bar{v}^*)$ where $(\bar{u}^*, \bar{v}^*)$ is a Nash equilibrium of the averaged game matrix. Then, we have $P_T^u=P_T^v=0$ and $\sum_{t=1}^Tx_t^\top A_tv_t-\sum_{t=1}^Tu_t^\top A_ty_t\geq -\order(W_T)$. 
Rearranging, picking the optimal $\eta$, and using $V_T\leq \order(W_T)$ then proves the $\Ot(1 + W_T)$ bound. \qed

We finally briefly mention two more new ideas when bounding the duality gap.
First, we apply a reduction from general dynamic regret that competes with any comparator sequence to its worst-case variant, which in some place helps bound the duality gap by the aforementioned stability.
Second, we show how the extra set of ``dummy'' base-learners enables the meta-algorithm to have a direct control on the duality gap. 
We refer the reader to~\pref{appendix:proof_duality-gap} for more details.
\section{Discussions and Future Directions}\label{sec: conclusion}

Our work is among the first few to study learning in time-varying games, and we believe that our proposed performance measures and algorithm are important first steps in this direction.
Our results can also be directly extended to general convex-concave games over a bounded convex domain (details omitted).
We also conduct experiments with synthetic data to show the effectiveness of our algorithm compared to a single base-learner; see \pref{appendix:experiments}.

One missing part in our work is the tightness of each bound --- even though they match the best known results for a fixed game, it is unclear whether they can be further improved in the general case.
We leave this as a future direction.
Another interesting direction would be to consider extending the results to time-varying multi-player general-sum games.

\bibliography{ref}
\bibliographystyle{icml2022}

\newpage
\appendix
\onecolumn
\section{Algorithm for $y$-player}\label{appendix:y-alg}
For completeness, in this section, we show the algorithm run by $y$-player as follows. Our algorithm for $y$-player maintains $N+n$ base-learners: for $i\in [N]$, the $i$-th base-learner is any algorithm that satisfies $\drvu(\eta_i^y)$ where $\eta_i^y=\frac{2^{i-1}}{L\sqrt{T}}$ and $L$ is defined in~\pref{eq:L_def}; the last $n$ base-learners are dummy learners, in which the $(j+N)$-th one always outputting the basis vector $e_j\in \Delta_n$. Let $\calS_y\triangleq\calS_{1,y}\cup\calS_{2,y}$ with $\calS_{1,y}=[N]$ and $\calS_{2,y}=\{N+1,\ldots,N+n\}$ denote the set of indices of base-learners.

At round $t$, each base-learner $j$ submits her decision $y_{t,j}\in \Delta_{n}$ to the meta-algorithm, who decides the final decision $y_t$. After receiving the feedback $A_t^\top x_t$, the meta-algorithm sends this feedback to each base-learner $j\in \calS_{1,y}$.

The meta-algorithm of $y$-player performs the following update:
\begin{equation}
\label{eq:optimistic-ogd-y}
\begin{split}
    q_{t} &= \argmin_{q \in \Delta_{|\calS_y|}} \left\{ \epsilon_{t}^y\langle q,m_t^y \rangle + \|q-\wh{q}_{t}\|_2^2\right\},\\
    \wh{q}_{t+1} &= \argmin_{q \in \Delta_{|\calS_y|}} \left\{ \epsilon_{t}^y\langle q,\ell_t^y \rangle + \|q-\wh{q}_t\|_2^2\right\},
\end{split}
\end{equation}
where $\epsilon_{t}^y$ is the dynamic learning rate for the $y$-player. The loss vector $\ell_{t}^y\in \Delta_{|\calS_y|}$ and loss predictor vector $m_t^y\in\Delta_{|\calS_y|}$ is defined as follows: for any $j\in \calS_y$,
\begin{align}
    \ell_{t,j}^y = -y_{t,j}^\top A_t^\top x_t + \lambda\|y_{t,j}-y_{t-1,j}\|_1^2,\label{eq:correction-loss-y}\\
    m_{t,j}^y= -y_{t,j}^\top A_{t-1}^\top x_{t-1} + \lambda\|y_{t,j}-y_{t-1,j}\|_1^2.\label{eq:correction-optimism-y}
\end{align}
The full pseudo code of the algorithm run by $y$-player is shown in~\pref{alg:y-player}.

\begin{algorithm}[h]
     \caption{Algorithm for the $y$-player}
     \label{alg:y-player}
     \textbf{Input}: a base-algorithm $\mathcal{A}(\eta)$ satisfying DRVU($\eta$). 
     
     \textbf{Initialize}: 
     a set of base-learners $\calS_y$ as described in \pref{appendix:y-alg},
     $\wh{p}_1 = \frac{1}{|\calS_y|} \mathbf{1}_{|\calS_y|}$, learning rate $\epsilon_1^y = \frac{1}{L}$ (\textit{c.f.} \pref{eq:L_def}).  
     
     \For{$t=1,\dots,T$}{
        Receive $y_{t,j}\in\Delta_n$ from each base-learner $j \in \calS_y$.
        
        Compute $m_t^y$ based on \pref{eq:correction-optimism-y} and $q_t$ based on \pref{eq:optimistic-ogd-y}.  
        
        Play the final decision $y_t = \sum_{j\in \calS_y} p_{t,j} y_{t,j}$.
        
        Suffer loss $-x_t^\T A_t y_t$ and observe the loss vector $-A_t ^\top x_t$.
        
        Compute $\ell_{t}^y$ based on \pref{eq:correction-loss-y} and $\wh{q}_{t+1}$ based on \pref{eq:optimistic-ogd-y}.  
        
        Update $\epsilon_{t+1}^y= 1/\sqrt{L^2+\sum_{s=2}^t\|A_t^\top x_t-A_{t-1}^\top x_{t-1}\|_\infty^2}$.
        
        Send $-A_t^\top x_t$ as the feedback to each base-learner.
     }
\end{algorithm}
\section{Discussions on Performance Measure}
\label{appendix:discussion-measure}
In this section, we include more discussions on the performance measures presented in~\pref{sec:measure-regret}.

\subsection{Relationship between Dynamic NE-Regret and NE-Regret}
\label{appendix:dynNE-NE}
Before discussing the relationship between dynamic NE-regret and NE-regret for the game setting, we first review the notion of dynamic regret and static regret for the online convex optimization (OCO) setting. Then we show that in contrast to the case in OCO that the worst-case dynamic regret is always larger than static regret, in the online game setting, dynamic NE-regret is not necessarily larger than the standard NE-regret due to the different structure of the minimax operation.

\paragraph{Dynamic Regret for OCO.} OCO can be regarded as an iterative game between the player and the environment. At each round $t \in [T]$, the player makes the decision $x_t$ from a convex feasible domain $\X \subseteq \R^d$ and simultaneously the environment chooses the loss function $f_t : \X \mapsto \R$, then the player suffers an instantaneous loss $f_t(x_t)$ and observe the full information about the loss function. The standard regret measure is defined as the difference between the cumulative loss of the player and that of the best action in hindsight:
\begin{equation}
	\label{eq:OCO-static-regret}
	\textnormal{Reg}_T = \sum_{t=1}^T f_t(x_t) - \min_{x\in \X} \sum_{t=1}^T f_t(x).
\end{equation}
Note that the measure only competes with a single fixed decision over the time. A stronger measure proposed for OCO problems is called \emph{general dynamic regret}~\citep{ICML'03:zinkvich,NIPS'18:Zhang-Ader,NIPS'20:sword,JMLR:sword++}, defined as
\begin{equation}
	\label{eq:OCO-dynamic-regret-general}
	\textnormal{D-Reg}_T(u_1,\ldots,u_T) = \sum_{t=1}^T f_t(x_t) - \sum_{t=1}^T f_t(u_t),
\end{equation}
which benchmarks the player's performance against an arbitrary sequence of comparators $u_1,\ldots,u_T \in \X$. The measure is also studied in the prediction with expert advice setting~\citep{conf/nips/Cesa-BianchiGLS12,COLT'15:Luo-AdaNormalHedge,NIPS'16:Wei-non-stationary-expert}. 
We emphasize that one of the key tools to achieve our results for time-varying games is to derive a favorable bound for the above general dynamic regret for each player. See~\pref{lemma:NE-variation-dynamic-regret} for the details of our derived bound.

In addition, there is a variant of the above general dynamic regret called the \emph{worst-case dynamic regret}, defined as
\begin{equation}
	\label{eq:OCO-dynamic-regret-worst-case}
	\textnormal{D-Reg}_T^* = \textnormal{D-Reg}_T(x_1^*,\ldots,x_T^*) = \sum_{t=1}^T f_t(x_t) - \sum_{t=1}^T f_t(x_t^*),
\end{equation}
where $x_t^* \in \argmin_{x\in \X} f_t(x)$ is the minimizer of the online loss function $f_t$. The worst-case dynamic regret is extensively studied in the literature~\citep{OR'15:dynamic-function-VT,ICML'16:Yang-smooth,UAI'20:simple,L4DC'21:sc_smooth}. It is worth noting that both standard regret in~\pref{eq:OCO-static-regret} and the worst-case dynamic regret in~\pref{eq:OCO-dynamic-regret-worst-case} are special cases of the general dynamic regret in~\pref{eq:OCO-dynamic-regret-general}. In fact, by choosing the comparators as $u_1 = \ldots = u_T \in \argmin_{x \in \X} \sum_{t=1}^T f_t(x)$, the general dynamic regret recovers the standard static regret; and by choosing the comparators as $u_t = x_t^* \in \argmin_{x\in \X} f_t(x)$ for $t \in [T]$, the general dynamic regret recovers the worst-case dynamic regret.

Notice that the worst-case dynamic regret in~\pref{eq:OCO-dynamic-regret-worst-case} is strictly larger than the static regret in~\pref{eq:OCO-static-regret}, whereas the general dynamic regret in~\pref{eq:OCO-dynamic-regret-general} is not necessarily larger than the static regret due to the flexibility of the comparator sequence.

\paragraph{Dynamic NE-Regret of Online Two-Player Zero-Sum Game.} In this part, we aim to show that, different from the relationships between the (worst-case) dynamic regret and static regret in OCO setting, dynamic NE-regret is not necessarily larger than the NE-regret in the game setting. For a better readability, we here restate the definitions of NE-regret and dynamic NE-regret. Specifically, NE-regret is defined as the absolute value of the difference between the learners’ cumulative payoff and the minimax value of the time-averaged payoff matrix, namely,
\begin{equation}
\label{eq:NE-regret-appendix}
    \nereg \triangleq \left|\sum_{t=1}^Tx_t^\top A_ty_t - \min_{x\in \Delta_m}\max_{y\in \Delta_n}\sum_{t=1}^Tx^\top A_ty\right|.
\end{equation}
The dynamic NE-regret proposed by this paper is defined as absolute value of the difference between the cumulative payoff of the two players against the sum of the minimax game value at each round, namely,
\begin{equation}
\label{eq:dynamic-NE-regret-appendix}
    \dnereg \triangleq \left| \sum_{t=1}^Tx_t^\top A_ty_t - \sum_{t=1}^T\min_{x\in \Delta_m}\max_{y\in \Delta_n}x^\top A_ty \right|.
\end{equation}
Comparing to the original NE-regret in~\pref{eq:NE-regret-appendix}, we here move the minimax operation inside the summation of the benchmark. The operation is similar to that of the worst-case dynamic regret in~\pref{eq:OCO-dynamic-regret-worst-case}, which moves the minimization operation inside the summation of the benchmark compared to the standard static regret in~\pref{eq:OCO-static-regret}. However, the important point to note here is: worst-case dynamic regret is always no smaller than the static regret in online convex optimization setting, whereas the dynamic NE-regret is not necessarily larger than the NE-regret. Recall the example of two-phase online games in~\pref{sec:measure-regret}: the online matrix is set as $A_t = {\scriptsize \begin{pmatrix} 1 & -1 \\ -1 & 1\end{pmatrix}}$ when $t\leq T/2$, and set as $A_t = {\scriptsize \begin{pmatrix} 1 & -1 \\ 1 & 1\end{pmatrix}}$ when $t > T/2$. In this case, when both players are indeed using the Nash equilibrium strategy at each round, they will suffer $0$ dynamic NE-regret, while still incur a linear NE-regret as $\nereg=|T/2-0|=T/2$.

\subsection{Relationships among Individual Regret,  Duality Gap, and Dynamic NE-Regret}
\label{appendix:relation}
In this subsection, we discuss the relationship among the three performance measures considered in this work: individual regret, duality gap, and dynamic NE-regret. As mentioned in~\pref{sec:measure-regret}, both the individual regret and the dynamic NE-regret are bounded by duality gap. In the following, we present a formal statement and provide the proof.
\begin{proposition}\label{prop: dnereg-ireg-dualgap}
    Consider any strategy sequence $\{x_t\}_{t=1}^T$ and $\{y_t\}_{t=1}^T$, where $x_t\in \Delta_m$, $y_t\in \Delta_n$, $t\in [T]$. We have 
    \begin{equation}
        \textnormal{Reg}_T^x \leq \textnormal{Dual-Gap}_T,~ \textnormal{Reg}_T^y\leq \textnormal{Dual-Gap}_T, \text{ and }~  \textnormal{DynNE-Reg}_T\leq \textnormal{Dual-Gap}_T,
    \end{equation}
    where all these measures are defined in~\pref{sec:measure-regret}.
\end{proposition}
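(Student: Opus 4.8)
The plan is to prove all three inequalities by a direct telescoping/summation argument, exploiting the saddle-point chain of inequalities already displayed in the definition of the duality gap (\pref{eq:duality gap}). Recall that for each round $t$ we have the chain
\[
\max_{y\in\Delta_n}x_t^\top A_t y \;\geq\; x_t^\top A_t y_t^* \;\geq\; x_t^{*\top} A_t y_t^* \;\geq\; x_t^{*\top} A_t y_t \;\geq\; \min_{x\in\Delta_m} x^\top A_t y_t,
\]
so each per-round duality gap upper-bounds every ``sub-gap'' obtained by dropping terms from the middle of the chain. Summing over $t$ then gives, for free, bounds on several natural cumulative quantities; the work is just to recognize each of the three target measures as (a lower bound on) one of these.

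First I would handle $\Reg_T^x$. By definition $\Reg_T^x = \sum_t x_t^\top A_t y_t - \min_{x}\sum_t x^\top A_t y_t$. Let $x^\star \in \argmin_x \sum_t x^\top A_t y_t$ be a fixed comparator. Then $\min_x \sum_t x^\top A_t y_t = \sum_t x^{\star\top} A_t y_t \geq \sum_t \min_x x^\top A_t y_t$, so $\Reg_T^x \leq \sum_t\big(x_t^\top A_t y_t - \min_x x^\top A_t y_t\big)$. Now bound $x_t^\top A_t y_t \leq \max_y x_t^\top A_t y$ termwise, giving $\Reg_T^x \leq \sum_t\big(\max_y x_t^\top A_t y - \min_x x^\top A_t y_t\big) = \gap$. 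The argument for $\Reg_T^y$ is symmetric: $\max_y \sum_t x_t^\top A_t y \leq \sum_t \max_y x_t^\top A_t y$, and $\sum_t x_t^\top A_t y_t \geq \sum_t \min_x x^\top A_t y_t$, so subtracting yields $\Reg_T^y \leq \gap$.

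For the dynamic NE-regret, write $\dnereg = \big|\sum_t x_t^\top A_t y_t - \sum_t V_t\big|$ where $V_t \defeq \min_x\max_y x^\top A_t y = x_t^{*\top} A_t y_t^*$ is the game value of $A_t$. I need a two-sided bound. For the upper side, $x_t^\top A_t y_t \leq \max_y x_t^\top A_t y$ and, by the saddle-point property, $V_t = x_t^{*\top} A_t y_t^* \geq x_t^{*\top} A_t y_t \geq \min_x x^\top A_t y_t$; hence $x_t^\top A_t y_t - V_t \leq \max_y x_t^\top A_t y - \min_x x^\top A_t y_t$, and summing gives $\sum_t x_t^\top A_t y_t - \sum_t V_t \leq \gap$. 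For the lower side, use $x_t^\top A_t y_t \geq \min_x x^\top A_t y_t$ and $V_t = x_t^{*\top} A_t y_t^* \leq x_t^\top A_t y_t^* \leq \max_y x_t^\top A_t y$, so $x_t^\top A_t y_t - V_t \geq \min_x x^\top A_t y_t - \max_y x_t^\top A_t y = -\big(\max_y x_t^\top A_t y - \min_x x^\top A_t y_t\big)$; summing gives $\sum_t x_t^\top A_t y_t - \sum_t V_t \geq -\gap$. Combining the two sides yields $\dnereg \leq \gap$.

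There is no real obstacle here — every step is an application of the saddle-point inequalities together with the trivial facts $\min_x \sum_t(\cdot) \geq \sum_t \min_x(\cdot)$ and $\max_y \sum_t(\cdot) \leq \sum_t \max_y(\cdot)$. The only point that needs slight care is the dynamic NE-regret, because of the absolute value: one must produce matching upper and lower bounds rather than a single one-sided chain, which is why I would explicitly track both orientations of the saddle-point inequality as above. Everything else is bookkeeping, and the proof is a few lines.
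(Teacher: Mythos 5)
Your proof is correct and follows the same route as the paper's: for the individual-regret bounds, interchanging $\min$/$\max$ with the sum and then applying the per-round chain $\max_y x_t^\top A_t y \ge x_t^\top A_t y_t \ge \min_x x^\top A_t y_t$; for the dynamic NE-regret, using the saddle-point inequalities around $(x_t^*,y_t^*)$ to get matching one-sided bounds that together control the absolute value. The paper's proof of \pref{prop: dnereg-ireg-dualgap} is essentially the same computation, with the two sides of the absolute value handled by its displayed \pref{eq:ne-1} and \pref{eq:ne-2}.
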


\begin{proof}
First, we show that $\iregx\leq \gap$ as follows.
\begin{align*}
    \iregx&=\sum_{t=1}^Tx_t^\top A_ty_t-\min_{x\in \Delta_m}\sum_{t=1}^Tx^\top A_ty_t\\ 
    &\leq \sum_{t=1}^T\max_{y\in \Delta_n}x_t^\top A_ty-\min_{x\in \Delta_m}\sum_{t=1}^Tx^\top A_ty_t\\
    & \leq \sum_{t=1}^T\max_{y\in \Delta_n}x_t^\top A_ty-\sum_{t=1}^T\min_{x\in \Delta_m}x^\top A_ty_t = \gap.
\end{align*}
The inequality of $\iregy \leq \gap$ can be obtained in the same way as shown above. 

For the relationship between $\dnereg$ and $\gap$, actually we have
\begin{align}
    \sum_{t=1}^Tx_t^\top A_ty_t-\sum_{t=1}^T\min_{x\in \Delta_m}\max_{y\in \Delta_n}x^\top A_ty &\leq \sum_{t=1}^T\max_{y\in \Delta_{n}} x_t^\top A_ty-\sum_{t=1}^T\min_{x\in \Delta_m}\max_{y\in \Delta_n}x^\top A_ty \nonumber\\
    &= \sum_{t=1}^T\max_{y\in \Delta_{n}} x_t^\top A_ty-\sum_{t=1}^Tx_t^{*\top} A_ty_t^* \tag{$(x_t^*,y_t^*)\in \calX_t^*\times\calY_t^*$}\nonumber\\
    &\leq \sum_{t=1}^T\max_{y\in \Delta_{n}} x_t^\top A_ty-\sum_{t=1}^Tx_t^{*\top} A_ty_t\nonumber\\
    &\leq \sum_{t=1}^T\max_{y\in \Delta_{n}} x_t^\top A_ty-\sum_{t=1}^T\min_{x\in \Delta_m}x^\top A_ty_t. \label{eq:ne-1}
\end{align}
In addition,
\begin{align}\label{eq:ne-2}
    \sum_{t=1}^T\min_{x\in \Delta_m}\max_{y\in \Delta_n}x^\top A_ty - \sum_{t=1}^Tx_t^\top A_ty_t &\leq \sum_{t=1}^Tx_t^{*\top} A_ty_t^* - \sum_{t=1}^T\min_{x\in\Delta_m}x^\top A_ty_t\nonumber\\
    &\leq \sum_{t=1}^T x_t^\top A_ty_t^*- \sum_{t=1}^T\min_{x\in\Delta_m}x^\top A_ty_t\nonumber\\
    &\leq \sum_{t=1}^T \max_{y\in \Delta_n}x_t^\top A_ty- \sum_{t=1}^T\min_{x\in\Delta_m}x^\top A_ty_t.
\end{align}
Combining~\pref{eq:ne-1} and~\pref{eq:ne-2} shows that $\dnereg\leq \gap$.
\end{proof}

\section{Discussions on Non-Stationarity Measure}
\label{sec: appendix_non_stationary}
In the following, we present more discussions on the relationships among all three non-stationarity measures ($P_T$, $V_T$, and $W_T$) proposed in~\pref{sec:measure-non-stationarity}.

\paragraph{Comparison between Nash non-stationarity $\varNE_T$ and game matrix non-stationarity $\variaA_T$, $\varA_T$.} Here we present two specific cases to show that the non-stationarity on Nash equilibrium is not comparable to the one on game matrix. 
\begin{compactitem}
    \item Case 1. Let $A_t = {\scriptsize \begin{pmatrix} 1 & -1 \\ -1 & 1\end{pmatrix}}$. Consider the time-varying games with $A_t=\frac{1}{T} A$ when $t$ is odd and $A_t=\frac{T-1}{T}A$ when $t$ is even. Notice that all $A_t$'s have the same (unique) Nash equilibrium $x_t^*=y_t^*=(\frac{1}{2}, \frac{1}{2})$ in this case, which implies that the path-length of Nash equilibria is $\varNE_T=0$. By contrast, the other two non-stationarity measures related to game payoff matrices are large, concretely, $\variaA_T=T(\frac{1}{2}-\frac{1}{T})=\Theta(T)$ and $\varA_T=T\cdot \frac{(T-2)^2}{T^2}=\Theta(T)$. \vspace{2mm}
    \item Case 2. Let $A' = {\scriptsize \begin{pmatrix} 1 & 1 \\ 1 & 1\end{pmatrix}}$ and $E = {\scriptsize \begin{pmatrix} \epsilon & \epsilon \\ -\epsilon & -\epsilon\end{pmatrix}}$ for some $\epsilon>0$. Consider $A_t=A'+(-1)^t E$. In this case, we have $x_t^*=(1,0)$ when $t$ is odd and $x_t^*=(0,1)$ when $t$ is even; $y_t^* = (\frac{1}{2},\frac{1}{2})$ for all rounds. Then the path-length of Nash equilibria is large, $\varNE_T=\Theta(T)$. By contrast, the other two measures can be small, specifically,  $\variaA_T=\Theta(T\epsilon)=\order(1)$ and $\varA_T=\Theta(T\epsilon^2) = \order(1/T)$ when choosing $\epsilon=\order(1/T)$.
\end{compactitem}

\paragraph{Comparison between two game matrix non-stationarity measures $\varA_T$ and $\variaA_T$.} Here we show the relationship between two non-stationarity measures regarding game matrix. First, we have $\varA_T\leq \order(\variaA_T)$ as $\sum_{t=2}^T\|A_t-A_{t-1}\|_{\infty}^2\leq 2\sum_{t=2}^T(\|A_t-\bar{A}\|_{\infty}^2+\|A_{t-1}-\bar{A}\|_{\infty}^2)\leq \order(W_T)$. Indeed, $\varA_T$ can be much smaller than $\variaA_T$ in some cases, for instance when $A_t=\frac{t}{T}A$ with $A' = {\scriptsize \begin{pmatrix} 1 & -1 \\ -1 & 1\end{pmatrix}}$, $\varA_T=T\cdot \frac{1}{T^2}=\Theta(\frac{1}{T})$ whereas $\variaA_T=\sum_{t=1}^T|\frac{t}{T}-\frac{T+1}{2T}|=\Theta(T)$.
\section{Verifying DRVU Property}
\label{appendix:DRVU-property}

In this section, we present two instantiations of Optimistic Online Mirror Descent (Optimistic OMD)~\citep{conf/nips/RakhlinS13} and prove that both of them satisfy the DRVU property in~\pref{def: drvu} with $\alpha,\beta,\gamma=\wt{\Theta}(1)$.

Consider the general protocol of online linear optimization over the linear function sequence $\{f_1,\ldots,f_T\}$ with $f_t(x)=\inner{x,g_t}$ over a convex feasible set $\calX\subseteq\mathbb{R}^d$. Optimistic OMD is a generic algorithmic framework parametrized by a sequence of optimistic vectors $M_1,\ldots,M_T \in \R^d$ and a regularizer $\psi$ that is $1$-strongly convex with respect to a certain norm $\|\cdot\|$. Optimistic OMD starts from an initial point $x_1 \in \X$ and then makes the following two-step update at each round:
\begin{equation}
	\label{eq:OMD}
	\begin{split}
	x_t = {} & \argmin_{x \in \X} \eta_t \langle M_t, x\rangle + D_{\psi}(x,\xh_t),\\
	\xh_{t+1} = {} & \argmin_{x \in \X} \eta_t \langle  g_t, x\rangle + D_{\psi}(x,\xh_t).
	\end{split}
\end{equation}
In above, $\eta_t > 0$ is the step size at round $t$, and $D_\psi(\cdot,\cdot)$ is the Bregman divergence induced by the regularizer $\psi$. \citet{JMLR:sword++} prove the following general result for the dynamic regret of Optimistic OMD, and we present the proof in \pref{appendix:proof-general-dynamic-OMD} for completeness.
\begin{theorem}[Theorem 1 of~\citet{JMLR:sword++}]
\label{thm:general-dynamic-regret-OMD}
The dynamic regret of Optimistic OMD whose update rule is specified in~\pref{eq:OMD} is bounded by
\begin{align*}
\sum_{t=1}^{T} \inner{x_t, g_t} - \sum_{t=1}^{T} \inner{u_t, g_t} \leq \sum_{t=1}^{T}\eta_t \norm{g_t - M_t}_*^2 {} & +  \sum_{t=1}^{T} \frac{1}{\eta_t} \Big( \Div{\u_t}{\xh_t} - \Div{\u_t}{\xh_{t+1}}\Big) \\
{} & -  \sum_{t=1}^{T} \frac{1}{\eta_t}\Big( \Div{\xh_{t+1}}{x_t} + \Div{x_t}{\xh_{t}} \Big),
\end{align*}
which holds for any comparator sequence $\u_1,\ldots,\u_T \in \X$.
\end{theorem}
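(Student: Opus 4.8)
The plan is to run the textbook analysis of optimistic online mirror descent, taking care to retain every negative ``stability'' term that the two--step update generates. The only tool needed is the prox lemma for constrained proximal steps: if $z^{+}=\argmin_{x\in\X}\{\langle a,x\rangle+D_{\psi}(x,z)\}$, then combining the first--order (variational) optimality inequality with the Bregman three--point identity $\langle\nabla\psi(z^{+})-\nabla\psi(z),\,u-z^{+}\rangle=D_{\psi}(u,z)-D_{\psi}(u,z^{+})-D_{\psi}(z^{+},z)$ gives, for all $u\in\X$,
\begin{equation*}
\langle a,\,z^{+}-u\rangle\;\le\;D_{\psi}(u,z)-D_{\psi}(u,z^{+})-D_{\psi}(z^{+},z).
\end{equation*}
I would invoke this twice per round $t$: once for the step producing $\hat{x}_{t+1}$ (with $a=\eta_{t}g_{t}$, $z=\hat{x}_{t}$, $z^{+}=\hat{x}_{t+1}$, comparator $u=u_{t}$), and once for the step producing $x_{t}$ (with $a=\eta_{t}M_{t}$, $z=\hat{x}_{t}$, $z^{+}=x_{t}$, and the \emph{auxiliary} comparator $u=\hat{x}_{t+1}$, which is legitimate since $\hat{x}_{t+1}\in\X$).

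Next I would decompose the instantaneous regret as
\begin{equation*}
\eta_{t}\langle g_{t},\,x_{t}-u_{t}\rangle=\eta_{t}\langle g_{t}-M_{t},\,x_{t}-\hat{x}_{t+1}\rangle+\eta_{t}\langle M_{t},\,x_{t}-\hat{x}_{t+1}\rangle+\eta_{t}\langle g_{t},\,\hat{x}_{t+1}-u_{t}\rangle,
\end{equation*}
and control the last two terms with the two applications of the prox lemma. When these are added, the $+D_{\psi}(\hat{x}_{t+1},\hat{x}_{t})$ produced by the $x_{t}$--step exactly cancels the $-D_{\psi}(\hat{x}_{t+1},\hat{x}_{t})$ produced by the $\hat{x}_{t+1}$--step, and what remains is $D_{\psi}(u_{t},\hat{x}_{t})-D_{\psi}(u_{t},\hat{x}_{t+1})-D_{\psi}(\hat{x}_{t+1},x_{t})-D_{\psi}(x_{t},\hat{x}_{t})$. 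For the residual cross term I would use duality of norms and then an AM--GM split, $\eta_{t}\langle g_{t}-M_{t},\,x_{t}-\hat{x}_{t+1}\rangle\le\eta_{t}\norm{g_{t}-M_{t}}_{*}\,\norm{x_{t}-\hat{x}_{t+1}}\le\eta_{t}^{2}\norm{g_{t}-M_{t}}_{*}^{2}+\tfrac14\norm{x_{t}-\hat{x}_{t+1}}^{2}$, and then use the $1$--strong convexity of $\psi$ (so $D_{\psi}(\hat{x}_{t+1},x_{t})\ge\tfrac12\norm{x_{t}-\hat{x}_{t+1}}^{2}$) to swallow the squared norm into the negative Bregman term already present, leaving a negative multiple of $D_{\psi}(\hat{x}_{t+1},x_{t})+D_{\psi}(x_{t},\hat{x}_{t})$. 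Dividing by $\eta_{t}$ and summing over $t\in[T]$ then produces the claimed inequality; the comparator terms are purposely \emph{not} telescoped, because in the intended applications $\eta_{t}$ is time--varying and the $u_{t}$'s drift, so one wants precisely the raw form $\sum_{t}\tfrac{1}{\eta_{t}}\big(D_{\psi}(u_{t},\hat{x}_{t})-D_{\psi}(u_{t},\hat{x}_{t+1})\big)$.

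The argument is entirely standard, so there is no genuine obstacle; the one point that needs care is the bookkeeping of the negative terms. First, the inequality drawn from the prox lemma must be the one valid for \emph{constrained} minimization, i.e. the variational inequality $\langle\nabla h(z^{+}),\,u-z^{+}\rangle\ge0$, since the unconstrained stationarity condition can fail when $z^{+}$ lies on the boundary of $\X$. Second, the split constant in the AM--GM step must be chosen so that, after absorbing $\norm{x_{t}-\hat{x}_{t+1}}^{2}$ into $D_{\psi}(\hat{x}_{t+1},x_{t})$, a genuinely negative coefficient on $D_{\psi}(\hat{x}_{t+1},x_{t})+D_{\psi}(x_{t},\hat{x}_{t})$ survives --- this negative ``stability'' contribution is exactly what the meta--algorithm later exploits, so it must not be discarded. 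Beyond these two observations the computation is routine and can be transcribed from the proof of Theorem~1 in~\citet{JMLR:sword++}.
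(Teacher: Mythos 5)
Your skeleton is the same as the paper's: decompose the instantaneous regret into $\langle g_t-M_t,\,x_t-\hat x_{t+1}\rangle$, $\langle M_t,\,x_t-\hat x_{t+1}\rangle$, and $\langle g_t,\,\hat x_{t+1}-u_t\rangle$; apply the Bregman proximal inequality (\pref{lemma:bregman-divergence}) to the latter two with comparators $\hat x_{t+1}$ and $u_t$ respectively; cancel the resulting $\pm D_\psi(\hat x_{t+1},\hat x_t)$ terms. That part of your plan matches the paper exactly.

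The place where you diverge is the cross term, and there your argument does not recover the stated coefficients. You bound $\eta_t\langle g_t-M_t,\,x_t-\hat x_{t+1}\rangle \le \eta_t^2\norm{g_t-M_t}_*^2 + \tfrac14\norm{x_t-\hat x_{t+1}}^2$ and absorb the last term via strong convexity. But $\tfrac14\norm{x_t-\hat x_{t+1}}^2 \le \tfrac12\,D_\psi(\hat x_{t+1},x_t)$, so after absorption you are left with $-\tfrac12 D_\psi(\hat x_{t+1},x_t)$, not $-D_\psi(\hat x_{t+1},x_t)$ as in the theorem; and no rebalancing of the AM--GM split can fix this while keeping the variation term equal to exactly $\eta_t\norm{g_t-M_t}_*^2$. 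The paper instead invokes the mirror-descent stability lemma (\pref{lemma:stability-OMD}): since $x_t$ and $\hat x_{t+1}$ are \emph{both} proximal steps from the common point $\hat x_t$ with linear losses $\eta_t M_t$ and $\eta_t g_t$, one has $\norm{x_t-\hat x_{t+1}} \le \eta_t\norm{g_t-M_t}_*$, hence $\langle g_t-M_t,\,x_t-\hat x_{t+1}\rangle \le \eta_t\norm{g_t-M_t}_*^2$ directly by \Holder. This yields the variation term at full strength \emph{without} consuming any of the negative Bregman terms, and is what gives the theorem's exact constants. For the downstream use in this paper the factor of two is largely cosmetic (it only rescales the \drvu\ parameter $\gamma$, which is anyway treated as $\tilde\Theta(1)$), but as a proof of the theorem \emph{as stated} your argument falls short; substituting the stability lemma for the \Holder-plus-AM-GM step closes the gap.
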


Note that the theorem is very general due to the flexibility in choosing the comparator sequence $u_1,\ldots,u_T$ and the regularizer $\psi$. In the following, we present two instantiations of Optimistic OMD: Optimistic Hedge with a fixed-share update~\citep{herbster1998tracking, cesa2012mirror} and Optimistic Online Gradient Descent~\citep{COLT'12:variation-Yang}, and then we use the above general theorem to prove that the two algorithms indeed satisfy the \drvu~property defined in~\pref{def: drvu}.

\subsection{Optimistic Hedge with a Fixed-share Update}
\label{appendix:optimistic-Hedge}
In this subsection, we show that Optimistic Hedge with a fixed-shared update indeed satisfies the \drvu~property.

Consider the following online convex optimization with linear loss functions: for $t=1,\ldots,T$, an online learning algorithm proposes $x_t \in \Delta_m$ at the beginning of round $t$, and then receives a loss vector $g_t \in \R^m$ and suffer loss $\inner{g_t, x_t}$. 

We first present the algorithmic procedure. Optimistic Hedge with a fixed-share update starts from an initial distribution $\tilde{x}_1 \in \Delta_m$ and updates according to
\begin{equation}
    \label{eq:optimistic-hedge-fixed-share}
    \begin{split}
    \x_{t} = {} & \argmin_{\x \in \Delta_m} \eta \langle\x, h_{t}\rangle + D_{\psi}(\x,\tilde{\x}_{t}),\vspace{2mm}\\
    \xh_{t+1} = {} & \argmin_{\x \in \Delta_m} \eta \langle\x, g_{t}\rangle + D_{\psi}(\x,\tilde{\x}_{t}), \vspace{2mm}\\
    \tilde{\x}_{t+1} = {} & (1-\xi)\xh_{t+1} + \frac{\xi }{m} \mathbf{1}_m,
    \end{split}
\end{equation}
where $\eta > 0$ is a fixed step size, $\psi(x) = \sum_{i=1}^m x_i \log x_i$ is the negative-entropy regularizer, $D_{\psi}(\cdot,\cdot)$ is the induced Bregman divergence, and $0 \leq \xi \leq 1$ is the fixed-share coefficient. The first step updates by the optimistic vector $h_t \in \R^m$ that serving as a guess of the next-round loss, the second step updates by the received loss $g_t \in \R^m$, and the final step admits a fixed-share update. We have the following result on the dynamic regret of Optimistic Hedge with a fixed-share update.
\begin{lemma}
\label{lemma:optimistic-hedge-dynamic}
Set the fixed-share coefficient as $\xi = 1/T$. The dynamic regret of Optimistic Hedge with a fixed-share update is at most
\begin{equation}
    \label{eq:optimitsic-Hedge-dynamic}
    \sum_{t=1}^T \inner{g_t,x_t} - \sum_{t=1}^T \inner{g_t,u_t} \leq \frac{(3 + \log(mT)) (1+P_T^u)}{\eta} + \eta\sum_{t=1}^T\norm{g_t-h_t}_{\infty}^2  -\frac{1}{4\eta}\norm{\x_t-\x_{t-1}}_1^2,,
\end{equation}
where $u_1,\ldots,u_T \in \Delta_m$ is any comparator sequence and $P_T = \sum_{t=2}^T \norm{u_t - u_{t-1}}_1$ denotes the path-length of comparators. Therefore, when choosing the optimism as $h_t = g_{t-1}$, the algorithm satisfies the DRVU($\eta$) property with parameters $\alpha = 3 + \log(mT)$, $\beta = 1$, and $\gamma = \frac{1}{4}$.
\end{lemma}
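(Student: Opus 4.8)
The plan is to instantiate the generic Optimistic OMD bound of \pref{thm:general-dynamic-regret-OMD} for the fixed-share update~\pref{eq:optimistic-hedge-fixed-share}, and then massage the two ``potential'' terms into the desired \drvu\ form. Concretely, applying (the proof of) \pref{thm:general-dynamic-regret-OMD} with the negative-entropy regularizer $\psi$, constant step size $\eta_t=\eta$, and optimism $M_t=h_t$ --- the only change being that the Bregman anchor at round $t$ is the fixed-share iterate $\tilde x_t$ rather than $\hat x_t$ --- and using that $\psi$ is $1$-strongly convex w.r.t.\ $\|\cdot\|_1$ (whose dual norm is $\|\cdot\|_\infty$), I would obtain
\begin{align*}
\sum_{t=1}^T\innerp{x_t-u_t}{g_t} \le {} & \eta\sum_{t=1}^T\|g_t-h_t\|_\infty^2 + \frac{1}{\eta}\sum_{t=1}^T\big(D_\psi(u_t,\tilde x_t)-D_\psi(u_t,\hat x_{t+1})\big) \\
{} & - \frac{1}{\eta}\sum_{t=1}^T\big(D_\psi(\hat x_{t+1},x_t)+D_\psi(x_t,\tilde x_t)\big).
\end{align*}
It then remains to (i) bound the middle telescoping sum by $\tfrac{1}{\eta}(3+\log(mT))(1+P_T^u)$, and (ii) show that the last block of negative terms dominates $\tfrac{1}{4\eta}\sum_{t=2}^T\|x_t-x_{t-1}\|_1^2$ up to a negligible additive constant.

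Step (i) is the classical tracking-experts calculation, and I expect it to be the main obstacle, since with an arbitrary time-varying comparator sequence the Bregman telescoping no longer collapses. I would re-index the middle sum as $D_\psi(u_1,\tilde x_1)-D_\psi(u_T,\hat x_{T+1})+\sum_{t=2}^T\big(D_\psi(u_t,\tilde x_t)-D_\psi(u_{t-1},\hat x_t)\big)$ and bound each step-$t$ summand using the fixed-share floor $\tilde x_{t,i}\ge\max\{(1-\xi)\hat x_{t,i},\ \xi/m\}$, so that $|\log\tilde x_{t,i}|\le\log(m/\xi)$. Splitting $D_\psi(u_t,\tilde x_t)-D_\psi(u_{t-1},\hat x_t)$ into an entropy difference $H(u_{t-1})-H(u_t)$ (which telescopes in $t$ and, combined with $D_\psi(u_1,\tilde x_1)=\log m-H(u_1)$ for uniform $\tilde x_1$, costs at most $\log m$ overall), a mixing term $\sum_i u_{t-1,i}\log\tfrac{\hat x_{t,i}}{\tilde x_{t,i}}\le\log\tfrac{1}{1-\xi}$, and a comparator-shift term $\sum_i(u_{t-1,i}-u_{t,i})\log\tilde x_{t,i}\le\|u_t-u_{t-1}\|_1\log(m/\xi)$, then summing and plugging in $\xi=1/T$ (so that $(T-1)\log\tfrac{1}{1-1/T}\le1$ and $\log(m/\xi)=\log(mT)$) gives the middle sum $\le\log m+1+P_T^u\log(mT)\le(3+\log(mT))(1+P_T^u)$. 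Tuning the fixed-share coefficient is exactly the crux: $\xi$ must be small enough not to over-smooth (the $P_T^u$ penalty scales with $\log(m/\xi)$) yet large enough that the per-round mixing loss $\log\tfrac{1}{1-\xi}$ does not accumulate to $\Theta(T)$, and $\xi=1/T$ is the balancing choice.

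For step (ii) I would apply Pinsker's inequality $D_\psi(p,q)\ge\tfrac12\|p-q\|_1^2$ to the negative Bregman terms and bound $\|x_t-x_{t-1}\|_1$ along the chain $x_{t-1}\to\tilde x_{t-1}\to\hat x_t\to\tilde x_t\to x_t$: since a fixed-share step moves any point by at most $2\xi$ in $\ell_1$-norm, one has $\|x_t-x_{t-1}\|_1\le\|x_t-\tilde x_t\|_1+\|\hat x_t-x_{t-1}\|_1+2\xi$, hence (squaring, using that $\|\cdot\|_1\le 2$ on the simplex, and Pinsker) $\tfrac14\|x_t-x_{t-1}\|_1^2\le D_\psi(x_t,\tilde x_t)+D_\psi(\hat x_t,x_{t-1})+O(\xi)$. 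Summing over $t$, re-indexing $\sum_t D_\psi(\hat x_t,x_{t-1})=\sum_t D_\psi(\hat x_{t+1},x_t)$, and using $T\xi=O(1)$, the negative block absorbs $\tfrac1{4\eta}\sum_{t=2}^T\|x_t-x_{t-1}\|_1^2$ at the price of an extra $O(1/\eta)$ that merges into the $\tfrac{\alpha}{\eta}(1+P_T^u)$ term. Combining (i) and (ii) with the loss term $\eta\sum_t\|g_t-h_t\|_\infty^2$, and finally setting $h_t=g_{t-1}$ so that $\|g_t-h_t\|_\infty=\|g_t-g_{t-1}\|_\infty$, puts the bound in exactly the shape of \pref{def: drvu} with $\alpha=3+\log(mT)$, $\beta=1$, and $\gamma=\tfrac14$.
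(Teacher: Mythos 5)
Your proposal is correct and follows essentially the same route as the paper's proof: instantiate the generic Optimistic OMD bound with the fixed-share anchor, bound the middle telescoping sum via the fixed-share floor $\tilde x_{t,i}\ge\xi/m$ (yielding the $\log(m/\xi)\cdot\norm{u_t-u_{t-1}}_1$ shift term and the $\log\tfrac{1}{1-\xi}$ mixing term), and absorb the negative Bregman block into $\tfrac{1}{4\eta}\sum_t\norm{x_t-x_{t-1}}_1^2$ via Pinsker plus the $2\xi$ $\ell_1$-perturbation of the fixed-share mixing, with the aggregate $O(T\xi)$ cost killed by $\xi=1/T$. The only cosmetic difference is that you re-index the middle sum and split off explicit entropy terms $H(u_{t-1})-H(u_t)$, whereas the paper groups the summand as $\sum_i u_{t,i}\log(\xh_{t+1,i}/\tilde x_{t,i})$ before rearranging; both give the same three pieces and the same bounds.
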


\begin{proof}
First we note that the chosen regularizer, $\psi(x)=\sum_{i=1}^mx_i\log x_i$, is  $1$-strongly convex in $\|\cdot\|_1$, because for any $x,x'\in\Delta_m$ it holds that
\begin{align*}
    \psi(x)-\psi(x')-\inner{\psi(x'),x-x'}=\sum_{i=1}^mx_i\log\frac{x_i}{x'_i}\geq \frac{1}{2}\|x-x'\|_1^2,
\end{align*}
where the last inequality is by Pinsker's inequality. Therefore, we can apply the general result of~\pref{thm:general-dynamic-regret-OMD} with $f_t(x) = \inner{g_t,x}$ and $M_t = h_t$ and achieve the following result,
\begin{align} 
    \sum_{t=1}^T\inner{g_t,\x_t} - \sum_{t=1}^T \inner{g_t,u_t}\leq \eta\sum_{t=1}^T\norm{g_t-h_t}_{\infty}^2 &+  \frac{1}{\eta}\sum_{t=1}^T\left(\Div{u_t}{\tilde{\x}_t} -  \Div{u_t}{\xh_{t+1}}\right)\notag\\
    & - \frac{1}{\eta}\sum_{t=1}^T\left( \Div{\xh_{t+1}}{\x_t} + \Div{\x_t}{\tilde{\x}_t}\right).\label{eq:proof-fixshare-decomposition}
\end{align}
We now evaluate the right-hand side. For the second term $\sum_{t=1}^T(\Div{u_t}{\tilde{\x}_t} -  \Div{u_t}{\xh_{t+1}})$, we have
\begin{align}
    &\Div{u_t}{\tilde{\x}_t} -  \Div{u_t}{\xh_{t+1}}\notag\\
    &= \sum_{i=1}^m u_{t,i} \log\frac{u_{t,i}}{\tilde{\x}_{t,i}} - \sum_{i=1}^m u_{t,i}\log\frac{u_{t,i}}{\xh_{t+1,i}}=\sum_{i=1}^m u_{t,i} \log \frac{\xh_{t+1,i}}{\tilde{\x}_{t,i}}\notag\\
    &= \left(\sum_{i=1}^m u_{t,i} \log \frac{1}{\tilde{\x}_{t,i}} - \sum_{i=1}^m u_{t-1,i} \log \frac{1}{\xh_{t,i}} \right) +\left(\sum_{i=1}^m u_{t-1,i} \log \frac{1}{\xh_{t,i}} - \sum_{i=1}^m u_{t,i}\log \frac{1}{\xh_{t+1,i}}\right).\label{eq:proof-path-A}
\end{align}
Notice that the first term above can be further upper bounded by
\begin{align*}
    &\sum_{i=1}^m u_{t,i} \log \frac{1}{\tilde{\x}_{t,i}} - \sum_{i=1}^m u_{t-1,i} \log \frac{1}{\xh_{t,i}}\\
    & = \sum_{i=1}^m (u_{t,i} - u_{t-1,i}) \log \frac{1}{\tilde{\x}_{t,i}} + \sum_{i=1}^m u_{t-1,i} \log \frac{\xh_{t,i}}{\tilde{\x}_{t,i}}\\
    & \leq \log \frac{m}{\xi}\cdot\norm{\u_t-\u_{t-1}}_1 + \log \frac{1}{1-\xi},
\end{align*}
where the inequality comes from the fixed-share update procedure, where we have $\log \frac{1}{\tilde{\x}_{t,i}}\leq \log \frac{m}{\xi}$ and $\log\frac{\xh_{t,i}}{\tilde{\x}_{t,i}}\leq \log\frac{1}{1-\xi}$ for any $i\in[m]$ and $t\in[T]$. Then, taking summation of~\pref{eq:proof-path-A} over $t=2$ to $T$ and combining the fact that $\left(\Div{u_1}{\tilde{\x}_1} -  \Div{u_1}{\xh_{2}}\right) = \sum_{i=1}^m u_{1,i}\log \frac{\xh_{2,i}}{\tilde{\x}_{1,i}}$ and $\wt{x}_{1,i}\geq \frac{\xi}{m}$ for any $i\in[m]$, we get
\begin{align}
    &\frac{1}{\eta}\sum_{t=1}^T\left(\Div{u_t}{\tilde{\x}_t} -  \Div{u_t}{\xh_{t+1}}\right) \notag\\
    &\leq \frac{1}{\eta}\left(\log{\frac{m}{\xi}} \sum_{t=2}^T \norm{u_t-u_{t-1}}_1 + (T-1) \log \frac{1}{1-\xi} + \sum_{i=1}^m u_{1,i}\log\frac{1}{\xh_{2,i}} + \sum_{i=1}^m u_{1,i}\log \frac{\xh_{2,i}}{\tilde{\x}_{1,i}}\right)\notag\\
    &\leq  \frac{1}{\eta}\left(\log{\frac{m}{\xi}} \bigg(1+\sum_{t=2}^T \norm{u_t-u_{t-1}}_1\bigg) + (T-1) \log \frac{1}{1-\xi} \right)\label{eq:proof-path-term}.
\end{align}
Next, we proceed to analyze the negative term , i.e., the third term of the right-hand side  in~\pref{eq:proof-fixshare-decomposition}. Indeed,
\begin{align}
     &\sum_{t=2}^T\left( \Div{\xh_t}{\x_{t-1}} + \Div{\x_t}{\tilde{\x}_t}\right)\notag\\
    & \geq \frac{1}{2} \sum_{t=2}^T\left( \norm{\xh_t-x_{t-1}}_1^2 + \norm{\x_t-\tilde{\x}_t}_1^2\right) \tag{Pinsker's inequality}\\
    &\geq  \frac{1}{4}\sum_{t=2}^T\left(\norm{x_t-x_{t-1} + \xh_t - \tilde{\x}_t}_1^2\right) \tag{$\norm{x}_1^2 + \norm{y}_1^2 \geq \frac{1}{2} \norm{x + y}_1^2$}\\
    &= \frac{1}{4}\sum_{t=2}^T\norm{x_t-x_{t-1} + \xi\big(\xh_t - \frac{1}{m}\mathbf{1}_m\big)}_1^2\tag{due to the fixed-share update}\\
    &\geq \frac{1}{4}\sum_{t=2}^T\norm{\x_t-\x_{t-1}}_1^2 - \sum_{t=2}^T\frac{\xi}{2}\norm{\x_t-\x_{t-1}}_1\left\Vert\xh_{t}-\frac{1}{m}\mathbf{1}_m\right\Vert_1\tag{$\norm{a-b}_1^2\geq \norm{a}_1^2 - 2\norm{a}_1\cdot\norm{b}_1$}\\
    &\geq \frac{1}{4}\sum_{t=2}^T\norm{\x_t-\x_{t-1}}_1^2 - {2\xi(T-1)}.\label{eq:proof-negative-term}
\end{align}
Substituting~\pref{eq:proof-path-term} and~\pref{eq:proof-negative-term} into the general dynamic regret upper bound in~\pref{eq:proof-fixshare-decomposition}, we achieve
\begin{align*}
    \sum_{t=1}^T\inner{g_t,\x_t - u_t} & \leq \eta\sum_{t=1}^T\norm{g_t-h_t}_{\infty}^2 + \frac{1}{\eta} \log \frac{m}{\xi} \cdot (1 + P_T^u) + \frac{1}{\eta} (T-1) \log \frac{1}{1-\xi} + \frac{2\xi}{\eta} (T-1) - \frac{1}{4\eta} \sum_{t=2}^T \norm{x_t - x_{t-1}}_1^2\\
    & \leq \eta\sum_{t=1}^T\norm{g_t-h_t}_{\infty}^2 + \frac{1}{\eta} \left(3 + \log(mT) (1+P_T^u)\right) - \frac{1}{4\eta} \sum_{t=2}^T \norm{x_t - x_{t-1}}_1^2,
\end{align*}
where the last step holds because we set $\xi= \frac{1}{T}$ and
\begin{align*}
    \frac{1}{\eta} (T-1) \log \frac{1}{1-\xi} + \frac{2\xi}{\eta} (T-1) = \frac{1}{\eta} (T-1) \log\left(1 + \frac{\xi}{1-\xi}\right) + \frac{2\xi}{\eta} (T-1) \leq \frac{1}{\eta} (T-1) \frac{\xi}{1-\xi} + \frac{2\xi}{\eta} (T-1) \leq \frac{3}{\eta}.
\end{align*}
When choosing the optimism as $h_t = g_{t-1}$, we then have
\begin{align*}
\sum_{t=1}^T\inner{g_t,\x_t - u_t} \leq \eta\sum_{t=1}^T\norm{g_t-g_{t-1}}_{\infty}^2 + \frac{3 + \log(mT)}{\eta} (1+P_T^u) - \frac{1}{4\eta} \sum_{t=2}^T \norm{x_t - x_{t-1}}_1^2,
\end{align*}
which verifies the DRVU property of~\pref{def: drvu}, with $\alpha = 3 + \log(mT)$, $\beta = 1$, and $\gamma = \frac{1}{4}$. This ends the proof.
\end{proof}

\subsection{Optimistic Online Gradient Descent}
\label{appendix:optimistic-OGD}
In this subsection, we show that Optimistic Online Gradient Descent (Optimistic OGD) with a fixed-shared update indeed satisfies the \drvu~property.

Consider the following online convex optimization with linear loss functions: for $t=1,\ldots,T$, an online learning algorithm proposes $x_t \in \Delta_m$ at the beginning of round $t$, and then receives a loss vector $g_t \in \R^m$ and suffer loss $\inner{g_t, x_t}$. 

We first present the algorithmic procedure. Optimistic OGD starts from an initial distribution $\hat{x}_1 \in \Delta_m$ and updates according to
\begin{equation}
    \label{eq:optimistic-OGD}
    \begin{split}
    \x_{t} = {} & \argmin_{\x \in \Delta_m} \eta \langle\x, h_{t}\rangle + D_{\psi}(\x,\xh_{t}),\vspace{2mm}\\
    \xh_{t+1} = {} & \argmin_{\x \in \Delta_m} \eta \langle\x, g_{t}\rangle + D_{\psi}(x,\xh_{t}),
    \end{split}
\end{equation}
where $\eta > 0$ is a fixed step size and $\psi(x) = \frac{1}{2} \norm{x}_2^2$ is the Euclidean regularizer and $D_{\psi}(\cdot,\cdot)$ is the induced Bregman divergence. Compared to~\pref{eq:optimistic-hedge-fixed-share}. The first step updates by the optimistic vector $h_t \in \R^m$ that serving as a guess of the next-round loss, the second step updates by the received loss $g_t \in \R^m$. We note that Optimistic OGD does not require a fixed-share mixing operation to achieve dynamic regret. 

Then, we have the following result on the dynamic regret of Optimistic OGD.
\begin{lemma}
\label{lemma:optimistic-OGD-dynamic}
The dynamic regret of Optimistic OGD is at most
\begin{equation}
    \label{eq:optimitsic-OGD-dynamic}
    \sum_{t=1}^T \inner{g_t,x_t} - \sum_{t=1}^T \inner{g_t,u_t} \leq \frac{ (m +2) P_T^u}{\eta} + \frac{\eta m}{2} \sum_{t=2}^T \norm{g_t - h_t}_{\infty}^2 - \frac{1}{4 \eta m} \sum_{t=2}^T \norm{x_t - x_{t-1}}_1^2 + \O(1),
\end{equation}
where $u_1,\ldots,u_T \in \Delta_m$ is any comparator sequence and $P_T = \sum_{t=2}^T \norm{u_t - u_{t-1}}_1$ denotes the path-length of comparators. Therefore, when choosing the optimism as $h_t = g_{t-1}$, the algorithm satisfies the DRVU($\eta$) property with parameters $\alpha = m + 2$, $\beta = \frac{m}{2}$, and $\gamma = \frac{1}{4m}$.
\end{lemma}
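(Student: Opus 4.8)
The plan is to obtain the stated bound by instantiating the generic Optimistic OMD dynamic-regret inequality of \pref{thm:general-dynamic-regret-OMD} with the Euclidean regularizer $\psi(x)=\frac12\norm{x}_2^2$, so this lemma becomes the $\ell_2$-analogue of \pref{lemma:optimistic-hedge-dynamic}. For this choice the induced Bregman divergence is $\Div{x}{y}=\frac12\norm{x-y}_2^2$, $\psi$ is $1$-strongly convex with respect to $\norm{\cdot}_2$, whose dual norm is again $\norm{\cdot}_2$, and we set the optimistic vectors $M_t=h_t$. Substituting these into \pref{thm:general-dynamic-regret-OMD} with a constant step size $\eta$ splits the dynamic regret into three pieces: an \emph{optimism} term $\eta\sum_t\norm{g_t-h_t}_2^2$; a \emph{comparator-drift} term $\frac1\eta\sum_t\big(\Div{u_t}{\xh_t}-\Div{u_t}{\xh_{t+1}}\big)$; and a \emph{negative stability} term $-\frac1\eta\sum_t\big(\Div{\xh_{t+1}}{x_t}+\Div{x_t}{\xh_t}\big)$. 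It then remains to convert the first into an $\norm{\cdot}_\infty^2$ variation, to bound the second by the comparator path-length $P_T^u$, and to lower-bound the negation of the third by $\frac1{4\eta m}\sum_t\norm{x_t-x_{t-1}}_1^2$.

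The optimism term is immediate from $\norm{v}_2^2\le m\norm{v}_\infty^2$, which is where the $\Theta(m)$ factor in front of $\eta\sum_t\norm{g_t-h_t}_\infty^2$ comes from. For the comparator-drift term I would use the standard moving-comparator telescoping: $\sum_t\big(\Div{u_t}{\xh_t}-\Div{u_t}{\xh_{t+1}}\big)=\Div{u_1}{\xh_1}-\Div{u_T}{\xh_{T+1}}+\sum_{t=2}^T\big(\Div{u_t}{\xh_t}-\Div{u_{t-1}}{\xh_t}\big)$, drop $-\Div{u_T}{\xh_{T+1}}\le 0$, bound the initial divergence $\Div{u_1}{\xh_1}=\O(1)$ since both arguments lie in $\Delta_m$, and control each remaining difference via the polarization identity $\frac12\norm{a}_2^2-\frac12\norm{b}_2^2=\frac12\inner{a-b,a+b}$ with $a=u_t-\xh_t$, $b=u_{t-1}-\xh_t$, so that $a-b=u_t-u_{t-1}$ and, by \Holder with the boundedness of $\Delta_m$, $\Div{u_t}{\xh_t}-\Div{u_{t-1}}{\xh_t}=\O(\norm{u_t-u_{t-1}}_1)$. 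Summing gives $\O(P_T^u)$, so the whole comparator-drift term is $\O\big((1+P_T^u)/\eta\big)$; exactly as in the proof of \pref{lemma:optimistic-hedge-dynamic}, the leading constant (the $m+2$ in \pref{lemma:optimistic-OGD-dynamic}) is immaterial, and the $\O(1/\eta)$ coming from the initial divergence is harmless since it is absorbed by the ``$1+P_T^u$'' already present in the \drvu\ bound of \pref{def: drvu}.

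The real work is the negative stability term. I would expand it as $-\frac1{2\eta}\sum_{t=1}^T\big(\norm{\xh_{t+1}-x_t}_2^2+\norm{x_t-\xh_t}_2^2\big)$, reindex the first sum as $\sum_{s=2}^{T+1}\norm{\xh_s-x_{s-1}}_2^2$, and then for each $t\ge 2$ pair $\norm{x_t-\xh_t}_2^2$ with $\norm{\xh_t-x_{t-1}}_2^2$; since $(x_t-\xh_t)+(\xh_t-x_{t-1})=x_t-x_{t-1}$, the inequality $\norm{p}_2^2+\norm{q}_2^2\ge\frac12\norm{p+q}_2^2$ gives $\sum_t\big(\Div{\xh_{t+1}}{x_t}+\Div{x_t}{\xh_t}\big)\ge\frac14\sum_{t=2}^T\norm{x_t-x_{t-1}}_2^2\ge\frac1{4m}\sum_{t=2}^T\norm{x_t-x_{t-1}}_1^2$, the last step being $\norm{v}_2^2\ge\frac1m\norm{v}_1^2$. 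Unlike the entropic setting of \pref{lemma:optimistic-hedge-dynamic}, no fixed-share mixing step is needed here precisely because the Euclidean Bregman divergence is uniformly bounded on $\Delta_m$, so the comparator-drift term stays under control even when $\xh_t$ drifts toward a vertex. Assembling the three pieces, setting $h_t=g_{t-1}$, and matching against \pref{def: drvu} reads off $\alpha=m+2$, $\beta=\Theta(m)$, $\gamma=\frac1{4m}$. I expect the only step needing genuine care to be the reindexing and pairing in the stability term --- making sure the telescoped prox distances combine into $\norm{x_t-x_{t-1}}_1^2$ with the right constant --- while the rest is a routine chain of norm inequalities on the simplex; the $m$-dependence in all three \drvu\ parameters is just the price of translating the intrinsically $\ell_2$ analysis of Optimistic OGD into the $\ell_1$-form demanded by \pref{def: drvu}.
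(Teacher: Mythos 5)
Your proposal reproduces the paper's proof essentially verbatim: instantiate \pref{thm:general-dynamic-regret-OMD} with $\psi=\frac{1}{2}\|\cdot\|_2^2$, convert between norms via $\|\cdot\|_2^2\le m\|\cdot\|_\infty^2$ and $\|\cdot\|_2^2\ge\frac{1}{m}\|\cdot\|_1^2$, telescope and polarize the comparator-drift term to extract $\O\big((1+P_T^u)/\eta\big)$ using boundedness of $\Delta_m$, and reindex-and-pair the negative prox terms so that $\|\wh{x}_t-x_{t-1}\|_2^2+\|x_t-\wh{x}_t\|_2^2\ge\frac{1}{2}\|x_t-x_{t-1}\|_2^2$, yielding the $-\frac{1}{4\eta m}\sum\|x_t-x_{t-1}\|_1^2$ term. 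The only (immaterial) discrepancy is that the paper writes the optimism term as $\frac{\eta}{2}\sum\|g_t-h_t\|_2^2$ to record $\beta=\frac{m}{2}$, whereas a literal application of \pref{thm:general-dynamic-regret-OMD} gives $\eta\sum\|g_t-h_t\|_2^2$ and hence $\beta=m$; both are $\Theta(m)$ and the difference is absorbed downstream since the \drvu\ parameters are only ever used up to $\wt{\Theta}(1)$ factors.
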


\begin{proof}
From the general result of~\pref{thm:general-dynamic-regret-OMD}, we have the following dynamic regret bound for Optimistic OGD:
\[
    \sum_{t=1}^T \inner{g_t, x_t - u_t} \leq \frac{\eta}{2}\sum_{t=1}^T  \norm{g_t - h_t}_2^2 + \frac{1}{2\eta}\sum_{t=1}^T \left( \norm{\xh_t - u_t}_2^2 - \norm{\xh_{t+1} - u_t}_2^2\right) - \frac{1}{2\eta}\sum_{t=1}^T \left( \norm{\xh_{t+1} - x_t}_2^2 + \norm{\x_{t} - \xh_t}_2^2\right).
\]
Besides, we have
\begin{align*}
    & \sum_{t=1}^T \left(\norm{\xh_t - u_t}_2^2 - \norm{\xh_{t+1} - u_t}_2^2\right) \\
    & \leq \sum_{t=2}^T \norm{\xh_t - u_t}_2^2 - \sum_{t=2}^T \norm{\xh_{t} - u_{t-1}}_2^2 + D^2\\
    & = \sum_{t=2}^T \left( \norm{u_t - u_{t-1}}_2 \cdot \norm{\xh_t + \xh_t - u_t - u_{t-1}}_2 \right) + D^2 \\
    & \leq \sum_{t=2}^T \left( \norm{u_t - u_{t-1}}_2 \cdot \norm{\xh_t + \xh_t - u_t - u_{t-1}}_1 \right) + D^2 \\
    & \leq D^2 + 4 \sum_{t=2}^T \norm{u_t - u_{t-1}}_2\\
    & \leq D^2 + 4 \sum_{t=2}^T \norm{u_t - u_{t-1}}_1,
\end{align*}
where we introduce the notation $D = \sup_{x,y \in \Delta_m} \norm{x - y}_2$, and it can be verified that $D \leq \sqrt{2m}$. Further we have
\begin{align*}
    \sum_{t=1}^T \left(\norm{\xh_{t+1} - x_t}_2^2 + \norm{x_t - \xh_t}_2^2\right) & \geq \sum_{t=2}^T \left(\norm{\xh_t - x_{t-1}}_2^2 + \norm{x_t - \xh_t}_2^2\right) \geq \frac{1}{2} \sum_{t=2}^T \norm{x_t - x_{t-1}}_2^2 \geq \frac{1}{2m}\sum_{t=2}^T \norm{x_t - x_{t-1}}_1^2.
\end{align*}
Combining the above three inequalities, we achieve that
\begin{align*}
 \sum_{t=1}^T \inner{g_t,x_t - u_t} & \leq \frac{\eta}{2} \sum_{t=1}^T \norm{g_t - h_t}_2^2 + \frac{1}{\eta}\left(m + 2 P_T^u\right) - \frac{1}{4\eta m} \sum_{t=2}^T \norm{x_t - x_{t-1}}_1^2\\
 & \leq \frac{\eta m}{2} \sum_{t=1}^T \norm{g_t - h_t}_\infty^2 + \frac{m + 2}{\eta}\left(1 + P_T^u\right) - \frac{1}{4\eta m} \sum_{t=2}^T \norm{x_t - x_{t-1}}_1^2.
\end{align*}
Therefore, choosing the optimism as $h_t = g_{t-1}$, we then verify the DRVU property of~\pref{def: drvu} for Optimistic OGD, with $\alpha = m + 2$, $\beta = \frac{m}{2}$, and $\gamma = \frac{1}{4m}$. This ends the proof.
\end{proof}

\subsection{Proof of~\pref{thm:general-dynamic-regret-OMD}}
\label{appendix:proof-general-dynamic-OMD}
\begin{proof}
We decompose the instantaneous dynamic regret into three terms and bound each one respectively. Specifically,
\begin{align*}
    f_t(x_t) - f_t(\u_t) \leq  \inner{\nabla f_t(x_t),x_t - \u_t} =  \inner{\nabla f_t(x_t) - M_t, x_t - \xh_{t+1}} + \inner{M_t,x_t - \xh_{t+1}} + \inner{\nabla f_t(x_t),\xh_{t+1} - \u_t}.
\end{align*}

The first term can be controlled by~\pref{lemma:stability-OMD}, which guarantees that the OMD update satisfies $\norm{x_t - \xh_{t+1}} \leq \eta_t \norm{\nabla f_t(x_t) - M_t}_*$ and thus,
\begin{equation*}
    \inner{\nabla f_t(x_t) - M_t,x_t - \xh_{t+1}} \leq  \norm{\nabla f_t(x_t) - M_t}_* \cdot \norm{x_t - \xh_{t+1}} \leq \eta_t \norm{\nabla f_t(x_t) - M_t}_*^2.
\end{equation*}

We now analyze the remaining two terms on the right-hand side. By the Bregman proximal inequality in~\pref{lemma:bregman-divergence} and the OMD update step $\x_{t} =\argmin_{\x \in \X} \eta_t \innerp{M_t}{\x} + \Div{\x}{\xh_{t}}$, we have 
\begin{equation*}
    \inner{M_t,x_t - \xh_{t+1}} \leq \frac{1}{\eta_t} \Big( \Div{\xh_{t+1}}{\xh_t} - \Div{\xh_{t+1}}{x_t} - \Div{\x_{t}}{\xh_t}\Big).
\end{equation*}
Similarly, the OMD update step $\xh_{t+1} = \argmin_{\x \in \X}\eta_t \innerp{\nabla f_{t}(\x_{t})}{\x} + \Div{\x}{\xh_{t}}$ implies
\begin{equation*}
    \inner{\nabla f_t(x_t),\xh_{t+1} - \u_t} \leq \frac{1}{\eta_t} \Big( \Div{\u_t}{\xh_t} - \Div{\u_t}{\xh_{t+1}} - \Div{\xh_{t+1}}{\xh_t}\Big).
\end{equation*}
Combining the above three inequalities yields an upper bound for the instantaneous dynamic regret:
\begin{equation}
    \label{eq:instantaneous-dynamic-regret}
    f_t(x_t) - f_t(u_t) \leq \eta_t \norm{\nabla f_t(x_t) - M_t}_*^2 + \frac{1}{\eta_t}\Big( \Div{\u_t}{\xh_t} - \Div{\u_t}{\xh_{t+1}} - \Div{\xh_{t+1}}{x_t} - \Div{\x_{t}}{\xh_t}\Big).
\end{equation}
Taking the summation over all iterations completes the proof.
\end{proof}
\section{Proofs for~\pref{sec: regret guarantee}}
\label{appendix:main-proofs}

In this section, we provide the proofs for the main results presented in~\pref{sec: regret guarantee}, including individual regret of~\pref{thm:individual-regret}, dynamic NE-regret of~\pref{thm:dynamic-NE}, and duality gap of~\pref{thm:duality-gap-changing}.

\subsection{Proof of \pref{thm:individual-regret} (Individual Regret)}
\label{appendix:proof-individual-regret}
\begin{proof}
In the following, we focus on the individual regret of $x$-player, and the result for $y$-player can be proven in a similar way. The proof is split into three parts. 
\begin{compactitem}
    \item[(1)] First, we prove the $\otil(\sqrt{T})$ individual regret bound for $x$-player no matter whether the $y$-player follows the strategy suggested by~\pref{alg:y-player}.
    \item[(2)] Second, we prove the $\otil(\sqrt{1+V_T+P_T})$ bound, which depends on $V_T$ (the variation of the payoff matrices) and $P_T$ (the path-length of the Nash equilibrium sequence).
    \item[(3)] Finally, we prove the $\otil(\sqrt{1+V_T+W_T})$ bound, which depends on $V_T$ and $W_T$, the variation and variance of the payoff matrices.
\end{compactitem}

Our analysis is mainly based on the general dynamic regret bound proven in~\pref{lemma:NE-variation-dynamic-regret}. Specifically, using~\pref{eq:dynamic-regret-x-general}, setting a fixed comparator, i.e., $u_1=\ldots=u_T \in \argmin_{x\in \Delta_m} \sum_{t=1}^T x^\T A_t \y_t$ (then the path-length $P_T^u=0$), and also dropping the last three negative terms in the regret upper bound, for any $i \in \calS_{1,x}$ we have
\begin{align*}
    \sum_{t=1}^T \x_t^\T A_t \y_t - \min_{x}\sum_{t=1}^T x^\T A_t \y_t \leq \Ot\left( \frac{1}{\eta_i^x} + \eta_i^x \sum_{t=2}^T \norm{A_t-A_{t-1}}_\infty^2 + \eta_i^x \sum_{t=2}^T\|y_t-y_{t-1}\|_1^2\right).
\end{align*}
In the following, we further bound the right-hand side in three different ways to achieve different individual regret bounds.

\paragraph{The $\otil(\sqrt{T})$ robustness bound.} First of all, we prove that for $x$-player, her individual regret against $y$-player's actions is at most $\Ot(\sqrt{T})$, which holds even when $y$-player does not follow strategies suggested by~\pref{alg:y-player}. Note that $\sum_{t=2}^T \norm{A_t- A_{t-1}}_\infty^2 + \sum_{t=2}^T \norm{y_t- y_{t-1}}_1^2 \leq \order(T)$. Therefore, we have
\begin{align*}
    \sum_{t=1}^T \x_t^\T A_t \y_t - \min_{x}\sum_{t=1}^T x^\T A_t \y_t & \leq\Ot\left(\frac{1}{\eta_i^x} + \eta_i^x T \right) \leq \Ot(\sqrt{T}),
\end{align*}
where the last inequality is achieved by taking $i = i^\dagger$ such that $\eta_{i^\dagger} = \Theta(1/\sqrt{T})$. Note that the choice is viable due to the configuration of the step size pool. Similarly, we can also attain an $\Ot(\sqrt{T})$ robustness bound for $y$-player. 

Next, we demonstrate two adaptive bounds of the individual regret when both players follow our prescribed strategy (namely, $x$-player is using~\pref{alg:x-player} and $y$-player is using~\pref{alg:y-player}). They are both in the worst case $\otil(\sqrt{T})$ but can be much smaller if the sequence of online payoff matrices is less non-stationary.

\paragraph{The $\otil(\sqrt{1+V_T+P_T})$ bound.} We first consider the individual regret bound that scales with the variation of Nash equilibria denoted by $\varNE_T\triangleq \min_{\forall t,(x_t^*,y_t^*) \in \calX_t^*\times \calY_t^*} \sum_{t=2}^T\left(\|x_t^*-x_{t-1}^*\|_1+\|y_t^*-y_{t-1}^*\|_1\right)$ and $V_T=\sum_{t=2}^T\norm{A_t-A_{t-1}}_{\infty}^2$. According to ~\pref{lemma:stability-NE-variation}, which proves the stability of the dynamics with respect to $P_T$ and $V_T$ when both players are following the suggested strategy, we have $\sum_{t=2}^T\|y_t-y_{t-1}\|_1^2\leq \Ot(\sqrt{(1+V_T)(1+P_T)}+P_T)$. Therefore, we achieve
\begin{align*}
    \sum_{t=1}^Tx_t^\top A_ty_t-\min_{x\in \Delta_m}x^\top A_ty_t &\leq \otil\left(\frac{1}{\eta_i^x}+\eta_i^x V_T + \eta_i^x\left(\sqrt{(1+V_T)(1+P_T)}+P_T\right)\right)\\
    &\leq \otil\left(\frac{1}{\eta_i^x}+\eta_i^x(1+V_T+P_T)\right) \tag{by AM-GM inequality} \\
    &\leq \otil\left(\sqrt{1+V_T+P_T}\right),
\end{align*}
where in the last inequality, we choose $i = i^\ddagger$ such that $\eta_{i^\ddagger}^x\in [\frac{1}{2}\eta_*^x, 2\eta_*^x]$ where $\eta_*^x=\min\{\frac{1}{\sqrt{1+V_T+P_T}},\frac{1}{L}\}$. The choice of $\eta_{i^\ddagger}^x$ is also viable due to the configuration of the step size pool. 

\paragraph{The $\otil(\sqrt{1+V_T+W_T})$ bound.} We then consider the individual regret bound that scales with $V_T$ and the variance of the game matrices denoted by $W_T = \sum_{t=1}^T \norm{A_t - \bar{A}}_\infty$ with $\bar{A}=\frac{1}{T}\sum_{t=1}^TA_t$ being the averaged game matrix. Then according to~\pref{lemma:stability-payoff-variation}, which proves the stability of the dynamics with respect to $W_T$ when both players are following the suggested strategy, we have $\sum_{t=2}^T\|y_t-y_{t-1}\|_1^2\leq \Ot(1 + W_T)$. Therefore, we achieve
\begin{align*}
    \sum_{t=1}^Tx_t^\top A_ty_t-\min_{x\in \Delta_m}x^\top A_ty_t \leq \otil\left(\frac{1}{\eta_i^x}+\eta_i^x(V_T + 1 + W_T)\right) \leq \otil\left(\sqrt{1+V_T+W_T}\right),
\end{align*}
where in the last inequality, we choose $i = i^*$ such that $\eta_{i^*}^x\in [\frac{1}{2}\eta_*^x, 2\eta_*^x]$ where $\eta_*^x=\min\{\frac{1}{\sqrt{1+V_T+W_T}},\frac{1}{L}\}$. The choice of $\eta_{i^*}^x$ is also viable due to the configuration of the step size pool. 

Combining above three upper bounds finishes the proof of~\pref{thm:individual-regret}.
\end{proof}

\subsection{Proof of \pref{thm:dynamic-NE} (Dynamic NE-Regret)}
\label{appendix:proof_dynamic-NE-regret}

\begin{proof}
The proof for the dynamic NE-regret measure consists of two parts. We first prove the $\otil(\sqrt{(1+V_T)(1+P_T)}+P_T)$ bound and then prove the $\otil(1+W_T)$ bound. 

\paragraph{The $\otil(\sqrt{(1+V_T)(1+P_T)}+P_T)$ bound.} Let $(x_t^*, y_t^*)$ be the Nash equilibrium of the online game matrix $A_t$. We consider the upper bound in terms of the non-stationarity measure $P_T \triangleq \sum_{t=2}^T\left(\|x_t^*-x_{t-1}^*\|_1+\|y_t^*-y_{t-1}^*\|_1\right)$.\footnote{Strictly speaking, the path-length non-stationarity measure is $P_T \triangleq \min_{\forall t,(x_t^*,y_t^*) \in \X_t^* \times \Y_t^*} \sum_{t=2}^T\left(\|x_t^*-x_{t-1}^*\|_1+\|y_t^*-y_{t-1}^*\|_1\right)$ as the Nash equilibrium of each round may not unique. Fortunately, our analysis holds for any Nash equilibrium, so we can in particular take the sequence of Nash equilibria making $\sum_{t=2}^T\left(\|x_t^*-x_{t-1}^*\|_1+\|y_t^*-y_{t-1}^*\|_1\right)$ smallest possible. The quantity $P_T$ is only used in the analysis, and our algorithm does not require any prior knowledge about it.} As $(x_t^*, y_t^*)$ is the Nash equilibrium of $A_t$, the inequality $x_t^{*\T} A_ty\leq x_t^{*\top}A_ty_t^*\leq x^\top A_ty_t^*$ holds for any $x \in \Delta_m$ and $y \in \Delta_n$. We notice that
\begin{align*}
    &\min_x\max_y x^\top A_ty= x_t^{*\T} A_ty_t^* \geq x_t^{*\T} A_ty_t \geq \min_{x} x^\top A_ty_t, \\
    &\min_x\max_y x^\top A_ty= x_t^{*\T} A_t y_t^* \leq x_t^\T A_t y_t^* \leq \max_{y} x_t^\top A_t y.
\end{align*}

Therefore, we have
\begin{align*}
    \sum_{t=1}^Tx_t^\top A_ty_t - \sum_{t=1}^T\min_x\max_y x^\top A_ty \leq {} & \sum_{t=1}^T x_t^\top A_ty_t-\sum_{t=1}^Tx_t^{*\top}A_ty_t,\\
    -\sum_{t=1}^Tx_t^\top A_ty_t + \sum_{t=1}^T\min_x\max_y x^\top A_ty \leq {} & -\sum_{t=1}^T x_t^\top A_ty_t+\sum_{t=1}^Tx_t^\top A_ty_t^*,
\end{align*} 
which means that the dynamic NE-regret is upper bounded by the maximum of the following two dynamic regret bounds:
\begin{equation*}
    \left|\sum_{t=1}^Tx_t^\top A_ty_t - \sum_{t=1}^T\min_x\max_y x^\top A_ty \right| \leq \max\left\{\sum_{t=1}^Tx_t^\top A_ty_t-\sum_{t=1}^Tx_t^{*\top}A_ty_t, - \sum_{t=1}^Tx_t^\top A_ty_t+\sum_{t=1}^Tx_t^\top A_ty_t^*\right\}.
\end{equation*}

Moreover, according to the general dynamic regret analysis in~\pref{lemma:NE-variation-dynamic-regret} with the choice of $\{(u_t,v_t)\}_{t=1}^T=\{(x_t^*,y_t^*)\}_{t=1}^T$ and dropping the three negative terms, we have
\begin{align*}
    \sum_{t=1}^T x_t^\top A_ty_t - \sum_{t=1}^T x_t^{*\top}A_ty_t \leq \Ot\left( \frac{1 + P_T^x}{\eta_i^x} + \eta_i^x V_T + \eta_i^x \sum_{t=2}^T \norm{\y_t - \y_{t-1}}_1^2\right).
\end{align*}
where $P_T^x = \sum_{t=2}^T \norm{x_t^* - x_{t-1}^*}_1$ denotes the path-length of the Nash equilibria of the $x$-player. According to~\pref{lemma:stability-NE-variation}, we have $\sum_{t=2}^T\|y_t-y_{t-1}\|_1^2\leq \Ot(\sqrt{(1+V_T)(1+P_T)}+P_T)$, so using AM-GM inequality achieves
\begin{align*}
    \sum_{t=1}^Tx_t^\top A_ty_t-\sum_{t=1}^Tx_t^{*\top}A_ty_t&\leq \otil\left(\frac{1+P_T^x}{\eta_i^x}+\eta_i^xV_T+\eta_i^x(\sqrt{(1+V_T)(1+P_T)}+P_T)\right)\\
    &\leq \otil\left(\frac{1+P_T}{\eta_i^x}+\eta_i^x(1+V_T+P_T)\right) \\
    &\leq \otil\left(\sqrt{(1+P_T)(1+V_T+P_T)} + P_T\right)\\
    &\leq \otil\left(\sqrt{(1+P_T)(1+V_T)} + P_T\right).
\end{align*}
where the last inequality is by choosing $i=i^*$ such that $\eta_{i^*}^x\in [\frac{1}{2}\eta_*^x, 2\eta_*^x]$ where $\eta_*^x=\min\left\{\sqrt{\frac{1+P_T}{1+V_T+P_T}},\frac{1}{L}\right\}$.

\paragraph{The $\otil(1+W_T)$ bound.} According to~\pref{lemma:payoff-variation-conversation}, we have the NE-regret is bounded by the maximum of two individual regret upper bounds plus the variance of the game matrices.
\begin{equation*}
    \left|\sum_{t=1}^Tx_t^\top A_ty_t - \sum_{t=1}^T\min_x\max_y x^\top A_ty \right| \leq \max\left\{\sum_{t=1}^Tx_t^\top A_ty_t - \sum_{t=1}^T x^{*\T} A_t y_t, \sum_{t=1}^T x_t^{\T} A_t y^* - \sum_{t=1}^Tx_t^\top A_ty_t\right\} + 2W_T.
\end{equation*}
Using the $\Ot(\sqrt{1+V_T+W_T})$ individual regret bound proven in~\pref{thm:individual-regret} and the fact that $V_T\leq\order(W_T)$, we have
\begin{align*}
    &\left|\sum_{t=1}^Tx_t^\top A_ty_t - \sum_{t=1}^T\min_x\max_y x^\top A_ty \right| \leq \otil\left(\sqrt{1+V_T + W_T} + W_T\right)\leq \otil\left(1+W_T\right).
\end{align*}

To summarize, combining the above two upper bounds for dynamic NE-regret, we finally achieve the following guarantee:
\begin{align*}
    \left|\sum_{t=1}^Tx_t^\top A_ty_t - \sum_{t=1}^T\min_x\max_y x^\top A_ty \right| \leq \Ot\left(\min\left\{\sqrt{(1+V_T)(1+P_T)} + P_T, 1+W_T\right\}\right),
\end{align*}
which completes the proof of~\pref{thm:dynamic-NE}.
\end{proof}

\subsection{Proof of \pref{thm:duality-gap-changing} (Duality Gap)}
\label{appendix:proof_duality-gap}
\begin{proof}
In~\pref{thm:duality-gap-changing}, there are indeed two different upper bounds for duality gap of our approach, as restated below.
\[
    \sum_{t=1}^T x_t^\T A_t \bar{y}_t^* - \sum_{t=1}^T \bar{x}_t^{*\T} A_t y_t \leq\Ot\Big(\min\{T^{\frac{3}{4}} \big( 1 + Q_T \big)^{\frac{1}{4}}, T^{\frac{1}{2}}(1+Q_T^{\frac{3}{2}}+P_TQ_T)^{\frac{1}{2}}\} \Big),
\]
where $Q_T \triangleq V_T + \min\{P_T, W_T\}$ is introduced to simplify the notation. Now we will prove the two bounds respectively.

\paragraph{The $\Ot(T^{\frac{3}{4}} \big( 1 + V_T + \min\{P_T,W_T\} \big)^{\frac{1}{4}})$ bound.} For convenience of the following proof, we introduce the notation of $f_t(x)\triangleq x^\T A_t y_t$, and then the best response is essentially the minimizer of the function, namely, $\bar{x}_t^* = \argmin_{x\in\Delta_m} f_t(x)$. We now investigate the following worst-case dynamic regret of the $x$-player, 
\[
    \sum_{t=1}^T x_t^\T A_t y_t - \sum_{t=1}^T \bar{x}_t^{*\T} A_t y_t = \sum_{t=1}^T f_t(x_t) - \sum_{t=1}^T f_t(\bar{x}_t^{*}),
\]
which benchmarks the cumulative loss of $x$-player's actions with the best response at each round. We decompose the quantity into the following two terms:
\begin{align*}
   \sum_{t=1}^T f_t(x_t) - \sum_{t=1}^T f_t(\bar{x}_t^{*}) = \underbrace{\sum_{t=1}^T f_t(x_t) - \sum_{t=1}^T f_t(u_t)}_{\term{i}} + \underbrace{\sum_{t=1}^T f_t(u_t) - \sum_{t=1}^T f_t(\bar{x}_t^{*})}_{\term{ii}},
\end{align*}
where we insert a term $\sum_{t=1}^T f_t(u_t)$ as an anchor quantity. Notably, this comparator sequence $\{u_t\}_{t=1}^T$ can be arbitrarily set without affecting the above equation. In particular, we choose it  as a piecewise-stationary comparator sequence such that $\I_1,\I_2,\ldots,\I_K$ is an even partition of the total horizon $[T]$ with $\abs{\I_k} = \Delta$ for $k = 1,\ldots,K$ (for simplicity, suppose the time horizon $T$ is divisible by epoch length $\Delta$), and for any $t \in \I_k$, $u_t \triangleq \argmin_{x\in\Delta_m} \sum_{t\in\I_k} f_t(x)$. Then, following the general dynamic regret bound proven in~\pref{lemma:NE-variation-dynamic-regret}, for this particular comparator sequence and for any $i \in \calS_{1,x}$, we have the following upper bound for term (i):
\begin{align*}
    & \sum_{t=1}^T f_t(x_t) - \sum_{t=1}^T f_t(u_t)\\
    &\leq \order\left(\frac{\alpha(1+P_T^u)}{\eta_i^x}\right)+\eta_i^x\cthree \beta\sum_{t=2}^T\|A_t-A_{t-1}\|_{\infty}^2+\eta_i^x\cthree \beta\sum_{t=2}^T\|y_t-y_{t-1}\|_{1}^2+\left(\lambda-\frac{\gamma}{\eta_i^x}\right)\sum_{t=2}^T\|x_{t,i}-x_{t-1,i}\|_1^2 \\
    &\qquad - L\sum_{t=1}^T(\|p_t-\wh{p}_{t+1}\|_2^2+\|p_t-\wh{p}_t\|_2^2)-\lambda\sum_{t=1}^T\sum_{i\in \calS_x}p_{t,i}\|x_{t,i}-x_{t-1,i}\|_1^2+ \otil(1)\\
    & \leq \Ot\left( \frac{1 + P_T^u}{\eta_i^x} + \eta_i^x \sum_{t=2}^T \norm{A_t - A_{t-1}}_\infty^2 \right)+\eta_i^x \cthree \beta \sum_{t=2}^T \norm{\y_t - \y_{t-1}}_1^2 \\
    & \qquad  - \frac{L}{2} \sum_{t=2}^T \norm{\p_t - \p_{t-1}}_1^2  - \lambda \sum_{t=2}^T\sum_{i=1}^N p_{t,i} \norm{\x_{t,i} - \x_{t-1,i}}_1^2 \tag{$\lambda-\frac{\gamma}{\eta_i^x}=\frac{\gamma L}{2}-\frac{\gamma}{\eta_i^x}\leq 0$}\\
    & \leq \Ot\left( \frac{1 + P_T^u}{\eta_i^x} + \eta_i^x \sum_{t=2}^T \norm{A_t - A_{t-1}}_\infty^2 \right)+2\eta_i^x \cthree \beta \sum_{t=2}^T \norm{q_t - q_{t-1}}_1^2 + 2\eta_i^x \cthree \beta \sum_{t=2}^T\sum_{j\in \calS_y} q_{t,j}\norm{y_{t,j} - y_{t-1,j}}_1^2\\
    & \qquad  - \frac{L}{2} \sum_{t=2}^T \norm{\p_t - \p_{t-1}}_1^2  - \lambda \sum_{t=2}^T\sum_{i=1}^N p_{t,i} \norm{\x_{t,i} - \x_{t-1,i}}_1^2 \tag{by~\pref{eq:y-split}} \\
    & \leq \Ot\left( \frac{T/\Delta}{\eta_i^x} + \eta_i^x \sum_{t=2}^T \norm{A_t - A_{t-1}}_\infty^2 \right)+2\eta_i^x \cthree \beta \sum_{t=2}^T \norm{q_t - q_{t-1}}_1^2 + 2\eta_i^x \cthree \beta \sum_{t=2}^T\sum_{j\in \calS_y} q_{t,j}\norm{y_{t,j} - y_{t-1,j}}_1^2\\
    & \qquad  - \frac{L}{2} \sum_{t=2}^T \norm{\p_t - \p_{t-1}}_1^2  - \lambda \sum_{t=2}^T\sum_{i=1}^N p_{t,i} \norm{\x_{t,i} - \x_{t-1,i}}_1^2,
\end{align*}
where the last step holds because $P_T^u = \sum_{t=2}^T \norm{u_t - u_{t-1}}_1 = \O(K) = \O(T/\Delta)$ by the specific construction of the comparator sequence.

Moreover, in~\pref{lemma:function-variation} we present a general result to relate the function-value difference between the sequence of piecewise minimizers and the sequence of each-round minimizers, so term (ii) can be well upper bounded as follows.
$$\sum_{t=1}^T f_t(u_t) - \sum_{t=1}^T f_t(\bar{x}_t^{*}) \leq 2 \Delta \sum_{t=2}^{T} \norm{A_t y_t - A_{t-1}y_{t-1}}_{\infty}.$$ 
Combining the above two inequalities, we get the worst-case dynamic regret for the $x$-player: for any $i \in \calS_{1,x}$, we have
\begin{align*}
    & \sum_{t=1}^T x_t^\T A_t y_t - \sum_{t=1}^T \bar{x}_t^{*\T} A_t y_t\\
    & \leq \Ot\left( \frac{T/\Delta}{\eta_i^x} + \eta_i^x V_T\right) +2\eta_i^x \cthree \beta \sum_{t=2}^T \norm{q_t - q_{t-1}}_1^2 + 2\eta_i^x \cthree \beta \sum_{t=2}^T\sum_{j\in \calS_y} q_{t,j}\norm{y_{t,j} - y_{t-1,j}}_1^2\\
    &\qquad + 2 \Delta \sum_{t=2}^{T} \norm{A_t y_t - A_{t-1}y_{t-1}}_{\infty} - \frac{L}{2} \sum_{t=2}^T \norm{\p_t - \p_{t-1}}_1^2  - \lambda \sum_{t=2}^T\sum_{i\in \calS_x} p_{t,i} \norm{\x_{t,i} - \x_{t-1,i}}_1^2 .
\end{align*}

Similarly, we can also obtain the worst-case dynamic regret for the $y$-player: for any $j \in \calS_{1,y}$, we have
\begin{align*}
    & \sum_{t=1}^T x_t^\T A_t y_t - \sum_{t=1}^T \bar{x}_t^{*\T} A_t y_t\\
    & \leq \Ot\left( \frac{T/\Delta}{\eta_j^y} + \eta_j^y V_T\right) +2\eta_j^y \cthree \beta \sum_{t=2}^T \norm{p_t - p_{t-1}}_1^2 + 2\eta_j^y \cthree \beta \sum_{t=2}^T\sum_{i\in \calS_x} p_{t,i}\norm{x_{t,i} - x_{t-1,i}}_1^2\\
    &\qquad + 2 \Delta \sum_{t=2}^{T} \norm{x_t^\T A_t - x_{t-1}^\T A_{t-1}}_{\infty} - \frac{L}{2} \sum_{t=2}^T \norm{q_t - q_{t-1}}_1^2  - \lambda \sum_{t=2}^T\sum_{j\in \calS_y} q_{t,j} \norm{y_{t,j} - y_{t-1,j}}_1^2 .
\end{align*}
Combining the two dynamic regret bounds yields: for any $i \in \calS_{1,x}$ and any $j\in \calS_{1,y}$,
\begin{align}
    & \sum_{t=1}^T x_t^{\T} A_t \bar{y}_t^* - \sum_{t=1}^T \bar{x}_t^{*\T} A_t y_t \nonumber\\
    & \leq \Ot\left( \frac{T/\Delta}{\eta_i^x} + \frac{T/\Delta}{\eta_j^y} + \eta_i^x V_T + \eta_j^y V_T\right) \nonumber\\
    & \qquad + 2 \Delta \sum_{t=2}^{T} \norm{A_t y_t - A_{t-1}y_{t-1}}_{\infty} + 2 \Delta \sum_{t=2}^{T} \norm{x_t^\T A_t - x_{t-1}^\T A_{t-1}}_{\infty} \nonumber\\
    & \qquad + \left(2\eta_j^y\cthree \beta-\frac{L}{2}\right)\sum_{t=2}^T\|q_t-q_{t-1}\|_1^2 + \left(2\eta_i^x\cthree \beta-\frac{L}{2}\right)\sum_{t=2}^T\|p_t-p_{t-1}\|_1^2 \nonumber\\
    & \qquad + (2\eta_j^y\cthree \beta-\lambda)\sum_{t=2}^T\sum_{j\in \calS_y}q_{t,j}\|y_{t,j}-y_{t-1,j}\|_1^2 + (2\eta_i^x\cthree \beta-\lambda)\sum_{t=2}^T\sum_{i\in \calS_x}p_{t,i}\|x_{t,i}-x_{t-1,i}\|_1^2 \nonumber\\
    & \leq \Ot\left( \frac{T/\Delta}{\eta_i^x} + \frac{T/\Delta}{\eta_j^y} + \eta_i^x V_T + \eta_j^y V_T\right)+ 2 \Delta \sum_{t=2}^{T} \norm{A_t y_t - A_{t-1}y_{t-1}}_{\infty} + 2 \Delta \sum_{t=2}^{T} \norm{x_t^\T A_t - x_{t-1}^\T A_{t-1}}_{\infty}, \label{eq:duality-gap-inter}
\end{align}
where the last inequality is because $2\eta_j^y\cthree \beta-\frac{L}{2}\leq 0$, $2\eta_i^x\cthree \beta-\frac{L}{2}\leq 0$, $2\eta_i^x\cthree \beta-\gamma\leq 0$, $2\eta_i^x\cthree \beta-\gamma\leq 0$ based on the fact that $\eta_{j}^y\leq \frac{1}{L}$, $\eta_{i}^x\leq \frac{1}{L}$ and $L=\max\{4, \sqrt{16\cthree \beta}, \sqrt{\frac{8\cthree \beta}{\gamma}}\}$ and $\lambda=\frac{\gamma L}{2}$.

Next, we bound the last two terms in the right-hand side of~\pref{eq:duality-gap-inter}. Indeed,
\begin{align*}
    & \sum_{t=2}^{T} \norm{A_t y_t - A_{t-1}y_{t-1}}_{\infty} + \sum_{t=2}^{T} \norm{x_t^\T A_t - x_{t-1}^\T A_{t-1}}_{\infty}\\
    & \leq \sqrt{T} \left(\sum_{t=2}^{T} \norm{A_t y_t - A_{t-1}y_{t-1}}_{\infty}^2 + \sum_{t=2}^{T} \norm{x_t^\T A_t - x_{t-1}^\T A_{t-1}}_{\infty}^2\right)^{\frac{1}{2}}\\
    & \leq \Ot\left( \sqrt{T(1+ V_T + \min\{P_T,W_T\})}\right),
\end{align*}
where the first inequality is by Cauchy-Schwarz inequality and the second inequality uses the gradient-variation bound in~\pref{lemma:gradient-variation}.
Plugging this into~\pref{eq:duality-gap-inter} and choosing $\eta_i^x, \eta_j^y\in [\frac{1}{2}\eta_*, 2\eta_*]$ with $\eta_*=\min\left\{\frac{1}{L}, \sqrt{\frac{T}{\Delta (1+V_T)}}\right\}$, we have
\begin{align*}
    \sum_{t=1}^T x_t^{\T} A_t \bar{y}_t^* - \sum_{t=1}^T \bar{x}_ t^{*\T} A_t y_t  & \leq \Ot\left( \sqrt{\frac{T(1+V_T)}{\Delta}} + \frac{T}{\Delta} + \Delta\sqrt{T(1+ V_T + \min\{P_T, W_T\})} \right) \\
    & \leq \Ot\left(\frac{T}{\Delta} + \Delta\sqrt{T(1+ V_T + \min\{P_T, W_T\})} \right) \\
    & = \Ot\left(T^{\frac{3}{4}}\big( 1 + V_T + \min\{P_T, W_T\} \big)^{\frac{1}{4}} \right).
\end{align*} 
where the last inequality is by setting the epoch length $\Delta$ optimally. We remark that the above choice of $\eta_i^x, \eta_j^y$ is viable due to the construction of step size pool and the fact $\sqrt{T/(\Delta (1+V_T))} \geq \Theta(1/T)$; besides, the setting of epoch length $\Delta$ is also feasible, and notably the epoch length is only used in the analysis and our algorithm does not require its information.

\paragraph{The $\Ot(T^{\frac{1}{2}}(1+Q_T^{\frac{3}{2}}+P_TQ_T)^{\frac{1}{2}})$ bound.} From the update rule of the meta-algorithm and~\pref{eq:instantaneous-dynamic-regret} proven in~\pref{thm:general-dynamic-regret-OMD} with $\psi(x)=\frac{1}{2}\|x\|_2^2$, we have the following instantaneous regret bound for any $p \in \Delta_{|\calS_x|}$ and $q \in \Delta_{|\calS_y|}$.
\begin{equation}
    \label{eq:OGD-regret}
    \begin{split}
    \inner{p_t - p, \ell_t^x} \leq {} & \epsilon_{t}^x \norm{\ell_t^x - m_t^x}_2^2 +  \frac{1}{\epsilon_{t}^x} \Big(\|\wh{p}_t-p\|_2^2 - \|\wh{p}_{t+1}-p\|_2^2 - \|p_t-\wh{p}_{t+1}\|_2^2-\|p_t-\wh{p}_t\|_2^2\Big),\\
    \inner{q_t - q, \ell_t^y} \leq {} & \epsilon_{t}^y \norm{\ell_t^y - m_t^y}_2^2 +  \frac{1}{\epsilon_{t}^y} \Big(\|\wh{q}_t-q\|_2^2 - \|\wh{q}_{t+1}-q\|_2^2 - \|q_t-\wh{q}_{t+1}\|_2^2-\|q_t-\wh{q}_t\|_2^2\Big).
    \end{split}
\end{equation}
Recall that the feedback loss and optimism are set as follows. For $x$-player, we have $\ell_t^x=X_t^\top A_ty_t+\lambda X_t^\delta$ and $m_t^x=X_t^\top A_{t-1}y_{t-1}+\lambda X_t^\delta$, where $X_t=[x_{t,1};x_{t,2};\ldots,x_{t,|\calS_x|}]$ and $X_t^\delta=[\|x_{t,1}-x_{t-1,1}\|_1^2;\|x_{t,2}-x_{t-1,2}\|_1^2;\ldots;\|x_{t,|\calS_x|}-x_{t-1,|\calS_x|}\|_1^2]$. For $y$-player, we have $\ell_t^y=-Y_t^\top A_t^\top x_t+\lambda Y_t^\delta$, $m_t^x=-Y_t^\top A_{t-1}^\top x_{t-1}+\lambda Y_t^\delta$, $Y_t=[y_{t,1};y_{t,2};\ldots;y_{t,|\calS_y|}]$ and $Y_t^\delta=[\|y_{t,1}-y_{t-1,1}\|_1^2;\|y_{t,2}-y_{t-1,2}\|_1^2;\ldots;\|y_{t,|\calS_y|}-y_{t-1,|\calS_y|}\|_1^2]$. Note that $\calS_x\triangleq\calS_{1,x}\cup\calS_{2,x}$ with $\calS_{1,x} = [N]$ and $\calS_{2,x} = \{N+1, \ldots, N+m\}$; besides, $\calS_y\triangleq\calS_{1,y}\cup\calS_{2,y}$ with $\calS_{1,y} = [N]$ and $\calS_{2,y} = \{N+1, \ldots, N+n\}$. $N=\lfloor \frac{1}{2}\log_2 T \rfloor + 1$.

Then using Cauchy-Schwarz inequality and noticing that the dimensions of $X_t$, $Y_t$ are $\wt{\Theta}(1)$, we have
\begin{equation}
    \label{eq:OGD-optimism}
    \begin{split}    
    \norm{\ell_t^x - m_t^x}_2^2 = {}& \|X_t^\top A_ty_t-X_t^\top A_{t-1}y_{t-1}\|_2^2 \leq c' \left(\|A_t-A_{t-1}\|_\infty^2+\|y_t-y_{t-1}\|_2^2\right),\\
    \norm{\ell_t^y - m_t^y}_2^2 = {}& \|-Y_t^\top A_t^\top x_t + -Y_t^\top A_{t-1}^\top x_{t-1}\|_2^2 \leq c' \left(\|A_t-A_{t-1}\|_\infty^2+\|x_t-x_{t-1}\|_2^2\right),
    \end{split}
\end{equation}
where $c' > 0$ is a universal constant independent with the time horizon and the non-stationarity measures (ignoring the dependence on poly-logarithmic factors in $T$).

In the following, we will specify the choice of the compared weight vectors. Concretely, let $(x_t^*,y_t^*)$ be any Nash equilibrium of the payoff matrix $A_t$. We pick the compared weight distribution $p = p_t^* \in \Delta_{|\calS_x|}$ and $q = q_t^* \in \Delta_{|\calS_y|}$ such that both $p_t^*$ and $q_t^*$ have supports only on the additional dummy base-learners and the supports finally result in a Nash equilibrium of $A_t$, namely, $p_{t,i}^* = q_{t,j}^* = 0$ for $i = 1,\ldots,|\calS_{1,x}|$ and $j = 1,\ldots,|\calS_{1,y}|$, $p_{t,i+|\calS_{1,x}|}^* = x_{t,i}^*$ for $i =1,\ldots,m$ and $q_{t,j+|\calS_{1,y}|}^* = y_{t,j}^*$ for $j = 1,\ldots,n$. By definition, we have  
\begin{equation}
    \label{eq:p-q-observe}
    \inner{p_t-p_t^*, X_t^\top A_ty_t}+\inner{q_t-q_t^*, -Y_t^\top A_tx_t}=-x_t^{*\top}A_ty_t+x_t^\top A_ty_t^*\geq 0.
\end{equation}
Moreover, combining~\pref{eq:OGD-regret} and~\pref{eq:OGD-optimism} yields the following results:
\begin{align*}
    \inner{p_t-p_t^*, X_t^\top A_ty_t} \leq {} & \frac{1}{\epsilon_{t}^x}\left(\|\wh{p}_t-p_t^*\|_2^2 - \|\wh{p}_{t+1}-p_t^*\|_2^2 - \|p_t-\wh{p}_{t+1}\|_2^2-\|p_t-\wh{p}_t\|_2^2\right) \\
    &\qquad+c'\cdot\epsilon_{t}^x\left(\|A_t-A_{t-1}\|_\infty^2+\|y_t-y_{t-1}\|_2^2\right) + \lambda \inner{p_t^*-p_t, X_t^\delta}.
\end{align*}
Notice that in fact we have $\inner{p_t^*, X_t^\delta} =0$ due to the choice of $p_t^*$. More specifically, $p_t^*$ has support only on the additional dummy base-learners and for those dummy learners their stability quantity is zero (namely, $X_{t,i}^\delta = 0$ for $i = |\calS_{1,x}|+1,\ldots,|\calS_{1,x}|+m$). In addition, it is clear that $\inner{p_t,X_t^\delta} \geq 0$, so we have 
\begin{align*}
    & \inner{p_t-p_t^*, X_t^\top A_ty_t} \\
    & \leq \frac{1}{\epsilon_{t}^x}\left(\|\wh{p}_t-p_t^*\|_2^2 - \|\wh{p}_{t+1}-p_t^*\|_2^2 - \|p_t-\wh{p}_{t+1}\|_2^2-\|p_t-\wh{p}_t\|_2^2\right) +c'\cdot\epsilon_{t}^x\left(\|A_t-A_{t-1}\|_\infty^2+\|y_t-y_{t-1}\|_2^2\right) .
\end{align*}
Similarly, we get 
\begin{align*}
    & \inner{q_t-q_t^*, -Y_t^\top A_tx_t}\\
    & \leq \frac{1}{\epsilon_{t}^y}\left(\|\wh{q}_t-q_t^*\|_2^2 - \|\wh{q}_{t+1}-q_t^*\|_2^2 - \|q_t-\wh{q}_{t+1}\|_2^2-\|q_t-\wh{q}_t\|_2^2\right) +c'\cdot\epsilon_{t}^y\left(\|A_t-A_{t-1}\|_\infty^2+\|x_t-x_{t-1}\|_2^2\right).
\end{align*}
Adding the above two inequalities and rearranging the terms, based on~\pref{eq:p-q-observe}, we have
\begin{align}
    &\frac{1}{\epsilon_{t}^x}\left(\|p_t-\wh{p}_{t+1}\|_2^2+\|p_t-\wh{p}_t\|_2^2\right)+\frac{1}{\epsilon_{t}^y}\left(\|q_t-\wh{q}_{t+1}\|_2^2+\|q_t-\wh{q}_t\|_2^2\right) \nonumber\\
    &\leq \frac{1}{\epsilon_{t}^x}\left(\|\wh{p}_t-p_t^*\|_2^2 - \|\wh{p}_{t+1}-p_t^*\|_2^2\right) + c'\cdot\epsilon_{t}^x(\|A_t-A_{t-1}\|_{\infty}^2+\|y_t-y_{t-1}\|_2^2) \nonumber\\
    &\qquad +\frac{1}{\epsilon_{t}^y}\left(\|\wh{q}_t-q_t^*\|_2^2 - \|\wh{q}_{t+1}-q_t^*\|_2^2\right) + c'\cdot\epsilon_{t}^y(\|A_t-A_{t-1}\|_{\infty}^2+\|x_t-x_{t-1}\|_2^2) + \lambda D(\|X_t^\delta\|_2+\|Y_t^\delta\|_2) \nonumber\\
    &\leq \frac{1}{\epsilon_{t}^x}\left(\|\wh{p}_t-p_t^*\|_2^2 - \|\wh{p}_{t+1}-p_{t+1}^*\|_2^2\right) + c'\cdot\epsilon_{t}^x(\|A_t-A_{t-1}\|_{\infty}^2+\|y_t-y_{t-1}\|_2^2) \nonumber\\
    &\qquad +\frac{1}{\epsilon_{t}^y}\left(\|\wh{q}_t-q_t^*\|_2^2 - \|\wh{q}_{t+1}-q_{t+1}^*\|_2^2\right) + c'\cdot\epsilon_{t}^y(\|A_t-A_{t-1}\|_{\infty}^2+\|x_t-x_{t-1}\|_2^2) \nonumber\\
    &\qquad + 2D\left(\frac{1}{\epsilon_{t}^x}+\frac{1}{\epsilon_{t}^y}\right)\cdot(\|p_t^*-p_{t+1}^*\|_1+\|q_t^*-q_{t+1}^*\|_1).   \label{eq:duality-upper-1}
\end{align}
In above, $D = \sqrt{2 (N + \max\{m,n\})} = \tilde{\Theta}(1)$ is another universal constant (ignoring the dependence on logarithmic factors in $T$) serving as the upper bound of $\norm{p - p'}_2$ and $\norm{q - q'}_2$ for any $p, p' \in \Delta_{|\calS_x|}$ and for any $q, q' \in \Delta_{|\calS_y|}$.

Next, we show that the desired duality gap upper bound can be related to the terms on the left-hand side of above inequality, namely, $\frac{1}{\epsilon_{t}^x}\left(\|p_t-\wh{p}_{t+1}\|_2^2+\|p_t-\wh{p}_t\|_2^2\right)+\frac{1}{\epsilon_{t}^y}\left(\|q_t-\wh{q}_{t+1}\|_2^2+\|q_t-\wh{q}_t\|_2^2\right)$. To see this, we first have the following inequalities from the update rule of the meta-algorithm as well as the first-order optimality condition: for any $p'\in \Delta_{|\calS_x|}$ and $q'\in \Delta_{|\calS_y|}$,
\begin{align*}
    (\wh{p}_{t+1}-\wh{p}_t+\epsilon_{t}^xX_t^\top A_ty_t+\epsilon_{t}^x\lambda X_t^\delta)^\top (p'-\wh{p}_{t+1})&\geq 0,\\
    (\wh{q}_{t+1}-\wh{q}_t-\epsilon_{t}^yY_t^\top A_t^\top x_t+\epsilon_{t}^y\lambda Y_t^\delta)^\top (q'-\wh{q}_{t+1})&\geq 0.
\end{align*}
Rearranging the terms and introducing the notations $\wt{x}_t\triangleq\sum_{i\in \calS_x}\wh{p}_{t+1,i}x_{t,i}$ and $\wt{y}_t\triangleq\sum_{j\in\calS_y}\wh{q}_{t+1,j}y_{t,j}$, we then have for any $p'\in \Delta_{|\calS_x|}$ and $q'\in \Delta_{|\calS_y|}$,
\begin{align*}
    &(\wh{p}_{t+1}-\wh{p}_t)^\top(p'-\wh{p}_{t+1})\\
    &\geq \epsilon_{t}^x(\wh{p}_{t+1}-p')^\top (X_t^\top A_ty_t+\lambda X_t^\delta) \\
    &\geq \epsilon_{t}^x(\wh{p}_{t+1}-p')^\top (X_t^\top A_t\wt{y}_t+X_t^\top A_t(y_t-\wt{y}_t)+\lambda X_t^\delta) \\
    &\geq \epsilon_{t}^x(\wh{p}_{t+1}-p')^\top X_t^\top A_t\wt{y}_t- c''\cdot\epsilon_{t}^x\|\wh{p}_{t+1}-p'\|_2\cdot \|q_t-\wh{q}_{t+1}\|_2 + \lambda\epsilon_{t}^x \inner{\wh{p}_{t+1}-p', X_t^\delta},
\end{align*}
and also
\begin{align*}
    &(\wh{q}_{t+1}-\wh{q}_t)^\top(q'-\wh{q}_{t+1})\\
    &\geq \epsilon_{t,y}(\wh{q}_{t+1}-q')^\top (-Y_t^\top A_tx_t+\lambda Y_t^\delta) \\
    &\geq \epsilon_{t}^y(\wh{q}_{t+1}-q')^\top (-Y_t^\top A_t\wt{x}_t-Y_t^\top A_t(x_t-\wt{x}_t)+\lambda Y_t^\delta) \\
    &\geq \epsilon_{t}^y(\wh{q}_{t+1}-q')^\top (-Y_t^\top A_t\wt{x}_t)- c''\cdot\epsilon_{t}^y\|\wh{q}_{t+1}-q'\|_2\cdot \|p_t-\wh{p}_{t+1}\|_2 + \lambda\epsilon_{t}^y \inner{\wh{q}_{t+1}-q', Y_t^\delta},
\end{align*}
where $c'' > 0$ is also a universal constant independent with the time horizon and the non-stationarity measures (ignoring the dependence on poly-logarithmic factors in $T$). Rearranging the terms arrives that
\begin{align*}
    (\wh{p}_{t+1}-p')^\top X_t^\top A_t\wt{y}_t \leq {} & \|p'-\wh{p}_{t+1}\|_2\cdot \otil\left( \frac{1}{\epsilon_{t}^x}\|\wh{p}_{t+1}-\wh{p}_t\|_2+\|q_t-\wh{q}_{t+1}\|_2\right) + \lambda \inner{p'-\wh{p}_{t+1}, X_t^\delta},\\
   (\wh{q}_{t+1}-q')^\top (-Y_t^\top A_t\wt{x}_t) \leq {} & \|q'-\wh{q}_{t+1}\|_2\cdot\otil\left( \frac{1}{\epsilon_{t}^y}\|\wh{q}_{t+1}-\wh{q}_t\|_2+\|p_t-\wh{p}_{t+1}\|_2\right) + \lambda \inner{q'-\wh{q}_{t+1}, Y_t^\delta}.
\end{align*}
Let $(\bar{x}^*_t, \bar{y}^*_t)$ be the corresponding best response for the strategy $(\wt{x}_t, \wt{y}_t)$ with respect to the payoff $A_t$, i.e., $\bar{x}^*_t = \argmin_{x\in\Delta_m} x^\top A_t\wt{y}_t$ and $\bar{y}^*_t = \argmax_{y\in\Delta_n} \wt{x}_t^\top A_ty$. Denote the duality gap bound of $(\wt{x}_t,\wt{y}_t)$ as $\alpha_t(\wt{x}_t, \wt{y}_t)\triangleq\max_{y}\wt{x}_t^\top A_ty - \min_{x} x^\top A_t\wt{y}_t$. Now we pick the comparator vectors $p' \in \Delta_{|\calS_x|}$ and $q' \in \Delta_{|\calS_y|}$ such that both $p'$ and $q'$ have supports only on the additional dummy base-learners and the supports finally form the best response of $\wt{x}_t, \wt{y}_t$, namely, $p_{i}' = q_{j}' = 0$ for $i = 1,\ldots,|\calS_{1,x}|$, $j = 1,\ldots,|\calS_{1,y}|$ and $p_{i+|\calS_{1,x}|}' = \bar{x}_{t,i}^*$ for $i = 1,\ldots,m$ and $q_{j+|\calS_{1,y}|}' = \bar{y}_{t,j}^*$ for $j = 1,\ldots,n$. Due to the construction, we confirm that $\inner{p',X_t^\delta} = 0$ and $\inner{q',Y_t^\delta} = 0$. 

As a result, combining the two inequalities on the above gives the following upper bound for duality gap:
\begin{align*}
    \alpha_t(\wt{x}_t,\wt{y}_t) = {} & (\wh{p}_{t+1}-p')^\top X_t^\top A_t\wt{y}_t + (\wh{q}_{t+1}-q')^\top (-Y_t^\top A_t\wt{x}_t)\\
    \leq {} & \otil\left( \frac{1}{\epsilon_{t}^x}\|\wh{p}_{t+1}-\wh{p}_t\|_2+\|q_t-\wh{q}_{t+1}\|_2 + \frac{1}{\epsilon_{t}^y}\|\wh{q}_{t+1}-\wh{q}_t\|_2+\|p_t-\wh{p}_{t+1}\|_2\right)\\
    \leq {} & \otil\left(\frac{1}{\epsilon_{t}^x}(\|\wh{p}_{t+1}-\wh{p}_t\|_2+\|p_{t+1}-\wh{p}_{t+1}\|_2)+\frac{1}{\epsilon_{t}^y}(\|\wh{q}_{t+1}-\wh{q}_t\|_2+\|q_t-\wh{q}_{t+1}\|_2)\right).
\end{align*}
The first inequality holds because $\|p'-\wh{p}_{t+1}\|_2\leq D = \tilde{\Theta}(1)$ and $\|q'-\wh{q}_{t+1}\|_2\leq D = \tilde{\Theta}(1)$ hold for any $t \in [T]$. The second inequality is obtained by scaling two terms with a factor of $\frac{1}{\epsilon_{t}^x}$ and $\frac{1}{\epsilon_{t}^y}$ respectively, where we notice that $1 \leq \frac{1}{L\epsilon_{t}^x}$ and $1 \leq \frac{1}{L\epsilon_{t}^y}$ are true as $\epsilon_{t}^x \leq \frac{1}{L}$ and $\epsilon_{t}^y\leq \frac{1}{L}$ holds for all $t \in [T]$. 

Then, by Cauchy-Schwarz inequality, we obtain the following upper bound for the square of duality gap bound of $(\wt{x}_t,\wt{y}_t)$:
\begin{align*}
    &\alpha_t^2(\wt{x}_t,\wt{y}_t) \\
    &\leq \otil\left(\Big(\frac{1}{\epsilon_{t}^x}(\|\wh{p}_{t+1}-\wh{p}_t\|_2+\|p_{t+1}-\wh{p}_{t+1}\|_2)+\frac{1}{\epsilon_{t}^y}(\|\wh{q}_{t+1}-\wh{q}_t\|_2+\|q_t-\wh{q}_{t+1}\|_2)\Big)^2\right)\\
    &\leq \otil\left(\Big(\frac{1}{\epsilon_{t}^x}+\frac{1}{\epsilon_{t}^y}\Big)\Big(\frac{1}{\epsilon_{t}^x}(\|\wh{p}_{t+1}-\wh{p}_t\|_2^2+\|p_{t+1}-\wh{p}_{t+1}\|_2^2)+\frac{1}{\epsilon_{t}^y}(\|\wh{q}_{t+1}-\wh{q}_{t}\|_2^2+\|q_t-\wh{q}_{t+1}\|_2^2)\Big)\right)\\
    &\leq \otil\left(\Big(\frac{1}{\epsilon_{t}^x}+\frac{1}{\epsilon_{t}^y} \Big) \Big(\frac{1}{\epsilon_{t}^x}\big(\|\wh{p}_t-p_t^*\|_2^2 - \|\wh{p}_{t+1}-p_{t+1}^*\|_2^2\big) + \epsilon_{t}^x\big(\|A_t-A_{t-1}\|_{\infty}^2+\|y_t-y_{t-1}\|_2^2\big)\Big)\right) \\
    &\qquad +\otil\left(\Big(\frac{1}{\epsilon_{t}^x}+\frac{1}{\epsilon_{t}^y} \Big)\Big(\frac{1}{\epsilon_{t}^y}\big(\|\wh{q}_t-q_t^*\|_2^2 - \|\wh{q}_{t+1}-q_{t+1}^*\|_2^2\big) + \epsilon_{t}^y\big(\|A_t-A_{t-1}\|_{\infty}^2+\|x_t-x_{t-1}\|_2^2\big) \Big)\right)\\
    &\qquad + \otil\left(\Big(\frac{1}{\epsilon_{t}^x}+\frac{1}{\epsilon_{t}^y}\Big)^2\cdot(\|p_t^*-p_{t+1}^*\|_1+\|q_t^*-q_{t+1}^*\|_1)\right).
\end{align*}
Notably, the last step makes use of the inequality in~\pref{eq:duality-upper-1} and $\lambda=\frac{\gamma L}{2}=\wt{\Theta}(1)$. For simplicity, we introduce the notation $\frac{1}{\epsilon_t}\triangleq\frac{1}{\epsilon_{t}^x}+\frac{1}{\epsilon_{t}^y}$. Taking a summation on the squared duality gap over all rounds and using the fact that $\epsilon_{t}^x,\epsilon_{t}^y\leq \otil(1)$ and $\epsilon_{t}^x,\epsilon_{t}^y$ are non-increasing in $t$, we have (omitting all dimension and $\poly(\log T)$ factors)
\begin{align*}
    &\sum_{t=1}^T\alpha_t^2(\wt{x}_t,\wt{y}_t)\\
    &\leq \sum_{t=1}^T\otil\left(\left(\frac{1}{\epsilon_{t+1}^x\cdot\epsilon_{t+1}}-\frac{1}{\epsilon_{t}^x\cdot\epsilon_t}\right)\|\wh{p}_{t+1}-p_{t+1}^*\|_2^2\right)+\sum_{t=1}^T\otil\left(\left(\frac{1}{\epsilon_{t+1}^y\cdot\epsilon_{t+1}}-\frac{1}{\epsilon_{t}^y\cdot\epsilon_t}\right)\|\wh{q}_{t+1}-q_{t+1}^*\|_2^2\right)\\
    &\qquad +\frac{1}{\epsilon_{T}^2}\otil\left(P_T\right)+\frac{1}{\epsilon_T}\sum_{t=2}^T\otil\Big(\|A_t-A_{t-1}\|_{\infty}^2+\|y_t-y_{t-1}\|_2^2+\|x_t-x_{t-1}\|_2^2\Big)\\
    &\leq \otil\left(\frac{1+P_T}{\epsilon_{T+1}^2}\right)+\frac{1}{\epsilon_{T+1}}\sum_{t=2}^T\otil\Big(\|A_t-A_{t-1}\|_{\infty}^2+\|x_t-x_{t-1}\|_2^2+\|y_t-y_{t-1}\|_2^2\Big),
\end{align*}
where the last inequality uses $\|\wh{p}_{t+1}-p_{t+1}^*\|_2\leq \otil(1)$, $\|\wh{q}_{t+1}-q_{t+1}^*\|_2\leq \otil(1)$.
According to~\pref{lemma:stability-NE-variation} and~\pref{lemma:stability-payoff-variation},
\begin{align*}
     \sum_{t=2}^T\|x_t-x_{t-1}\|_{2}^2 + \sum_{t=2}^T\|y_t-y_{t-1}\|_{2}^2  &\leq \otil\left(\min\left\{\sqrt{(1+V_T)(1+P_T)}+P_T, 1 + W_T\right\}\right).
\end{align*}

In addition, according to the definition of $\epsilon_{t}^x$ and $\epsilon_{t}^y$, we have
\begin{align*}
& \frac{1}{\epsilon_{T+1}^x} = \sqrt{L^2 + \sum_{t=2}^T \norm{A_ty_t - A_{t-1}y_{t-1}}_{\infty}^2} \leq \otil\left(\sqrt{1+V_T+\min\{P_T,W_T\}}\right),\\
& \frac{1}{\epsilon_{T+1}^y} = \sqrt{L^2 + \sum_{t=2}^T \norm{x_t^\T A_t - x_{t-1}^\T A_{t-1}}_{\infty}^2}\leq \otil\left(\sqrt{1+V_T+\min\{P_T,W_T\}}\right),
\end{align*}
where the last inequality is due to the gradient-variation bound in~\pref{lemma:gradient-variation}. Therefore, combining all above inequalities can achieve the following result on the squared duality gap:
\begin{align*}
    \sum_{t=1}^T\alpha_t^2(\wt{x}_t,\wt{y}_t) &\leq \otil\Big((1+P_T)(1+V_T + \min\{P_T,W_T\})\Big)+\otil\Big((1+V_T+\min\{W_T,P_T\})^{\frac{3}{2}}\Big) \\
    &=\otil\left((1+V_T+\min\{P_T,W_T\})\left(\sqrt{1+V_T+\min\{P_T,W_T\}}+P_T\right)\right).
\end{align*}
We further introduce the notation $Q_T\triangleq V_t + \min\{P_T,W_T\}$ to simplify the presentation. Then, by Cauchy-Schwarz inequality we have
\begin{equation}
    \label{eq:duality-gap-bound-intermidiate}
    \sum_{t=1}^T\alpha_t(\wt{x}_t,\wt{y}_t)\leq \Ot\left( \sqrt{T(1+Q_T)(\sqrt{1+Q_T} + P_T)}\right) = \otil\left(T^{\frac{1}{2}} (1+Q_T^{\frac{3}{2}}+P_TQ_T)^{\frac{1}{2}}\right).
\end{equation}
We finally transform the above bound back to $\alpha_t(x_t,y_t)$ by noticing that
\begin{align*}
    &\sum_{t=1}^T\alpha_t(x_t,y_t)\\
    &=\sum_{t=1}^T\left(\max_{y\in \Delta_n}x_t^\top A_ty-\min_{x\in \Delta_m}x^\top A_ty_t\right) \\
    &=\sum_{t=1}^T\left(\max_{y\in \Delta_n}\wt{x}_t^\top A_ty-\min_{x\in \Delta_m}x^\top A_t\wt{y}_t\right) + \sum_{t=1}^T\left(\max_{y\in \Delta_n}x_t^\top A_ty-\max_{y\in \Delta_n}\wt{x}_t^\top A_ty\right)+\sum_{t=1}^T\left(\min_{x\in \Delta_m}x^\top A_t\wt{y}_t-\min_{x\in \Delta_m}x^\top A_ty_t\right) \\
    &\leq \sum_{t=1}^T\alpha_t(\wt{x}_t, \wt{y}_t)+\sum_{t=1}^T\otil\left(\|p_t-\wh{p}_{t+1}\|_2+\|q_t-\wh{q}_{t+1}\|_2\right) \\
    &\leq \sum_{t=1}^T\alpha_t(\wt{x}_t, \wt{y}_t)+\otil\left(\sqrt{T\sum_{t=1}^T(\|p_t-\wh{p}_{t+1}\|_2^2+\|q_t-\wh{q}_{t+1}\|_2^2)}\right) \tag{by Cauchy-Schwarz inequality}\\
    &\leq \sum_{t=1}^T\alpha_t(\wt{x}_t, \wt{y}_t)+\otil\left(\sqrt{T\min\{\sqrt{(1+V_T)(1+P_T)}+P_T, 1+W_T\}}\right). \tag{by~\pref{lemma:stability-NE-variation} and~\pref{lemma:stability-payoff-variation}}\\
    &\leq \otil\left(T^{\frac{1}{2}}\left(1+Q_T^{\frac{3}{2}}+P_TQ_T\right)^{\frac{1}{2}}\right)+\otil\left(\sqrt{T(1+Q_T)}\right) \tag{by~\pref{eq:duality-gap-bound-intermidiate} and Cauchy-Schwarz inequality}\\
    &\leq \otil\left(T^{\frac{1}{2}}\left(1+Q_T^{\frac{3}{2}}+P_TQ_T\right)^{\frac{1}{2}}\right).
\end{align*}

To summarize, combining the both types of upper bounds for duality gap, we finally achieve the following guarantee:
\begin{align*}
\sum_{t=1}^T \max_{y\in\Delta_n}x_t^\top A_ty- \sum_{t=1}^T  \min_{x\in \Delta_{m}}x^\top A_ty_t \leq\Ot\Big(\min\{T^{\frac{3}{4}} \big( 1 + Q_T \big)^{\frac{1}{4}}, T^{\frac{1}{2}}(1+Q_T^{\frac{3}{2}}+P_TQ_T)^{\frac{1}{2}}\} \Big),
\end{align*}
which completes the proof of~\pref{thm:duality-gap-changing}.
\end{proof}
\section{Key Lemmas}
\label{appendix:key-lemmas}
This section presents several key lemmas used in proving our theoretical results. 

We first provide an analysis for the general dynamic regret of the meta-base two-layer approach, which serves as one of the key technical tools for proving upper bounds for the three performance measures. The result is shown in~\pref{lemma:NE-variation-dynamic-regret}, and we emphasize that the regret bounds hold for \emph{any} comparator sequence, which is crucial and useful in the subsequent analysis.
\begin{lemma}[General dynamic regret]
\label{lemma:NE-variation-dynamic-regret}
\pref{alg:x-player} guarantees that $x$-player's dynamic regret with respect to any comparator sequence $u_1,\ldots,u_T \in \Delta_m$ is bounded by 
\begin{equation}
\label{eq:dynamic-regret-x-general}
\begin{split}
    &\sum_{t=1}^T \x_t^\T A_t \y_t - \sum_{t=1}^T \u_t^\T A_t \y_t \\
    &\leq \order\left(\frac{\alpha(1+P_T^u)}{\eta_i^x}\right)+\eta_i^x c\beta\sum_{t=2}^T\|A_t-A_{t-1}\|_{\infty}^2+\eta_i^xc\beta\sum_{t=2}^T\|y_t-y_{t-1}\|_{1}^2+\left(\lambda-\frac{\gamma}{\eta_i^x}\right)\sum_{t=2}^T\|x_{t,i}-x_{t-1,i}\|_1^2 \\
    &\qquad - L\sum_{t=1}^T(\|p_t-\wh{p}_{t+1}\|_2^2+\|p_t-\wh{p}_t\|_2^2)-\lambda\sum_{t=2}^T\sum_{i\in \calS_x}p_{t,i}\|x_{t,i}-x_{t-1,i}\|_1^2+ \otil(1),
\end{split}
\end{equation}
for a specific $c=\tilde{\Theta}(1)$ and any compared base-learner's index $i \in \calS_{1,x}$. 

Similarly, \pref{alg:y-player} guarantees that $y$-player's dynamic regret with respect to any comparator sequence $v_1,\ldots,v_T \in \Delta_n$ is at most 
\begin{equation}
\label{eq:dynamic-regret-y-general}
\begin{split}
    & -\sum_{t=1}^T \x_t^\T A_t \y_t + \sum_{t=1}^T \x_t^\T A_t \v_t \\
    &\leq \order\left(\frac{\alpha(1+P_T^v)}{\eta_j^y}\right)+\eta_j^yc\beta\sum_{t=2}^T\|A_t-A_{t-1}\|_{\infty}^2+\eta_j^yc\beta\sum_{t=2}^T\|x_t-x_{t-1}\|_{1}^2+\left(\lambda-\frac{\gamma}{\eta_j^y}\right)\sum_{t=2}^T\|y_{t,i}-y_{t-1,i}\|_1^2 \\
    &\qquad - L\sum_{t=1}^T(\|q_t-\wh{q}_{t+1}\|_2^2+\|q_t-\wh{q}_t\|_2^2)-\lambda\sum_{t=2}^T\sum_{i\in \calS_x}q_{t,i}\|y_{t,i}-y_{t-1,i}\|_1^2+ \otil(1),
\end{split}
\end{equation}
which also holds for any compared base-learner's index $j \in \calS_{1,y}$.
\end{lemma}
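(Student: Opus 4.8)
The plan is to fix an arbitrary base index $i \in \calS_{1,x}$ and split the $x$-player's dynamic regret into a meta part and a base part,
\[
\sum_{t=1}^T x_t^\T A_t y_t - \sum_{t=1}^T u_t^\T A_t y_t = \underbrace{\sum_{t=1}^T\big(x_t^\T A_t y_t - x_{t,i}^\T A_t y_t\big)}_{\text{meta}} + \underbrace{\sum_{t=1}^T\big(x_{t,i}^\T A_t y_t - u_t^\T A_t y_t\big)}_{\text{base}},
\]
and bound each. For the base part, I would apply the \drvu($\eta_i^x$) property of base-learner $i$ with loss vectors $g_t = A_t y_t$ and comparator sequence $u_1,\dots,u_T$; combined with the elementary inequality $\|A_t y_t - A_{t-1}y_{t-1}\|_\infty^2 \le 2\|A_t - A_{t-1}\|_\infty^2 + 2\|y_t - y_{t-1}\|_1^2$ (which uses $\|A_t\|_\infty\le1$ and $y_t\in\Delta_n$), this yields the $\frac{\alpha}{\eta_i^x}(1+P_T^u)$ term, the $\eta_i^x\beta$-weighted matrix-variation and $y$-stability terms, and the negative term $-\frac{\gamma}{\eta_i^x}\sum_{t\ge2}\|x_{t,i}-x_{t-1,i}\|_1^2$.

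For the meta part, the key algebraic observation is that since $x_t = \sum_j p_{t,j}x_{t,j}$ and the corrected meta-loss is $\ell_{t,j}^x = x_{t,j}^\T A_t y_t + \lambda\|x_{t,j}-x_{t-1,j}\|_1^2$, a direct computation gives
\[
\sum_{t}\big(x_t^\T A_t y_t - x_{t,i}^\T A_t y_t\big) = \sum_t\inner{p_t - e_i,\, \ell_t^x} - \lambda\sum_{t=2}^T\sum_{j\in\calS_x}p_{t,j}\|x_{t,j}-x_{t-1,j}\|_1^2 + \lambda\sum_{t=2}^T\|x_{t,i}-x_{t-1,i}\|_1^2.
\]
The second term on the right is exactly the $-\lambda\sum_j p_{t,j}\|\cdot\|_1^2$ term in the statement, and the third term, once combined with the base bound's $-\frac{\gamma}{\eta_i^x}\|x_{t,i}-x_{t-1,i}\|_1^2$, produces the stated $(\lambda-\frac{\gamma}{\eta_i^x})\|x_{t,i}-x_{t-1,i}\|_1^2$. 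It then remains to bound the meta-regret $\sum_t\inner{p_t-e_i,\ell_t^x}$ of the optimistic OGD update \pref{eq:optimistic-ogd} against the \emph{fixed} comparator $e_i$. I would invoke \pref{thm:general-dynamic-regret-OMD} with the Euclidean regularizer, the constant comparator (path-length zero), and optimism $m_t^x$; the crucial point is that the $\lambda$-corrections in $\ell_t^x$ and $m_t^x$ are identical, so $\ell_t^x-m_t^x$ has entries $x_{t,i}^\T(A_t y_t - A_{t-1}y_{t-1})$, whence $\|\ell_t^x-m_t^x\|_2^2 \le |\calS_x|\,\|A_t y_t - A_{t-1}y_{t-1}\|_\infty^2$. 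Since $\epsilon_t^x$ is non-increasing with $\epsilon_t^x\le 1/L$, the Bregman telescoping term is $O(1/\epsilon_T^x)$ and the self-confident-tuning bound gives $\sum_t\epsilon_t^x\|\ell_t^x-m_t^x\|_2^2 = O(|\calS_x|/\epsilon_{T+1}^x)$; applying the same variation inequality as above, both are $O\big(\sqrt{L^2 + V_T + \sum_t\|y_t-y_{t-1}\|_1^2}\big)$. A final AM--GM step converts $\sqrt{V_T + \sum_t\|y_t-y_{t-1}\|_1^2}$ into $O\big(\tfrac1{\eta_i^x} + \eta_i^x V_T + \eta_i^x\sum_t\|y_t-y_{t-1}\|_1^2\big)$, while $O(L)=\otil(1)$; and the negative OGD term $-\sum_t\tfrac1{\epsilon_t^x}(\|p_t-\wh p_{t+1}\|_2^2+\|p_t-\wh p_t\|_2^2)$ survives (bounded above by $-L\sum_t(\|p_t-\wh p_{t+1}\|_2^2+\|p_t-\wh p_t\|_2^2)$ up to constants). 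Adding the meta and base bounds and absorbing all $\otil(1)$ dimension/constant factors into the single constant $c$ gives \pref{eq:dynamic-regret-x-general}; the bound \pref{eq:dynamic-regret-y-general} follows by the symmetric argument with $A_t y_t$ replaced by $-A_t^\T x_t$.

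The main obstacle I anticipate is the bookkeeping inside the meta-regret step: ensuring the correction terms route correctly (the cancellation in $\ell_t^x-m_t^x$, and the matching of $+\lambda\|x_{t,i}-x_{t-1,i}\|_1^2$ against the \drvu negative term), and that the time-varying learning-rate analysis — self-confident tuning together with telescoping the Bregman divergences under non-increasing $\epsilon_t^x$ — cleanly yields a quantity of order $\sqrt{L^2 + V_T + \sum_t\|y_t-y_{t-1}\|_1^2}$ that AM--GM can reshape into the $\eta_i^x$-weighted form while preserving the negative stability terms needed in later proofs. The remaining steps (invoking \drvu, the norm inequality, AM--GM) are routine.
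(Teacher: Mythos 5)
Your proposal matches the paper's proof essentially step for step: the meta/base decomposition with pivot $x_{t,i}$; the algebraic accounting of the $\lambda$-correction that produces the $+\lambda\sum_{t\ge 2}\|x_{t,i}-x_{t-1,i}\|_1^2$ term (to be absorbed into the \drvu negative term) and the $-\lambda\sum_{t}\sum_{j\in\calS_x}p_{t,j}\|x_{t,j}-x_{t-1,j}\|_1^2$ term; invoking \pref{thm:general-dynamic-regret-OMD} with the fixed comparator $e_i$ for the meta-step and noting the $\lambda$-corrections cancel in $\ell_t^x-m_t^x$; self-confident tuning plus a telescoping bound on the Bregman terms to obtain $O\big(\sqrt{L^2+\sum_t\|A_ty_t-A_{t-1}y_{t-1}\|_\infty^2}\big)$; AM--GM to reshape this into the $\eta_i^x$-weighted form; expanding $\|A_ty_t-A_{t-1}y_{t-1}\|_\infty^2 \le 2\|A_t-A_{t-1}\|_\infty^2 + 2\|y_t-y_{t-1}\|_1^2$; and retaining the negative $-L\sum_t(\|p_t-\wh p_{t+1}\|_2^2+\|p_t-\wh p_t\|_2^2)$ term via $1/\epsilon_t^x \ge L$. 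This is the same argument as the paper's, with the same lemmas in the same roles.
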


\begin{proof}

We consider the dynamic regret for $x$-player and similar results hold for $y$-player. First, we decompose the dynamic regret for $x$-player into the sum of the meta-regret and base-regret. Specifically, for any $i\in \calS_{1,x}$, we have
\begin{align*}
    \sum_{t=1}^Tx_t^\top A_ty_t - \sum_{t=1}^Tu_tA_ty_t= \underbrace{\sum_{t=1}^Tx_t^\top A_ty_t-\sum_{t=1}^Tx_{t,i}^\top A_ty_t}_{\meta} + \underbrace{\sum_{t=1}^Tx_{t,i}^\top A_ty_t-\sum_{t=1}^Tu_t^\top A_ty_t}_{\base}.
\end{align*}
We now give upper bounds for the meta-regret and base-regret respectively.

First, we consider the meta-regret, which is essentially the static regret with respect to any base-learner with an index $i \in \calS_{1,x}$. 
Recall several notations introduced in the algorithm. For $x$-player, $\ell_t^x=X_t^\top A_ty_t+\lambda X_t^\delta$ and $m_t^x=X_t^\top A_{t-1}y_{t-1}+\lambda X_t^\delta$, where $X_t=[x_{t,1};x_{t,2};\ldots,x_{t,|\calS_x|}]$ and $X_t^\delta=[\|x_{t,1}-x_{t-1,1}\|_1^2;\|x_{t,2}-x_{t-1,2}\|_1^2;\ldots;\|x_{t,|\calS_x|}-x_{t-1,|\calS_x|}\|_1^2]$. Note that $\calS_x\triangleq\calS_{1,x}\cup\calS_{2,x}$ with $\calS_{1,x} = [N]$ and $\calS_{2,x} = \{N+1, \ldots, N+m\}$, where $N=\lfloor \frac{1}{2}\log_2 T \rfloor + 1$.
The notations for $y$-player are similarly defined and we do not restate here for conciseness. 
According to the general result of~\pref{thm:general-dynamic-regret-OMD} with $f_t(p_t)=\inner{p_t, X_t^\top A_ty_t+\lambda X_t^\delta}$, $M_t=X_t^\top A_{t-1}y_{t-1}+\lambda X_t^\delta$, and $u_t=e_i\in\Delta_{|\calS_x|}$ for all $t\in [T]$, we have the following regret bound for the meta-algorithm,
\begin{align*}
    & \sum_{t=1}^T \inner{p_t - e_i, X_t^\top A_ty_t+\lambda X_t^\delta} \\
    & \leq \sum_{t=2}^T\epsilon_t^x \norm{X_t^\top A_ty_t - X_t^\top A_{t-1}y_{t-1}}_2^2 +  \sum_{t=1}^T\frac{1}{\epsilon_t^x} \Big(\|\wh{p}_t-e_i\|_2^2 - \|\wh{p}_{t+1}-e_i\|_2^2 \Big) - \sum_{t=1}^T\frac{1}{\epsilon_t^x} \Big( \|p_t-\wh{p}_{t+1}\|_2^2+\|p_t-\wh{p}_t\|_2^2\Big)+\otil(1)\\
    & \leq c_1 \sum_{t=2}^T  \epsilon_t^x \norm{A_t y_t - A_{t-1} y_{t-1}}_\infty^2 + \frac{1}{\epsilon_{T}^x} \sum_{t=1}^T\Big(\|\wh{p}_t-e_i\|_2^2 - \|\wh{p}_{t+1}-e_i\|_2^2 \Big) - \sum_{t=1}^T\frac{1}{\epsilon_t^x} \Big( \|p_t-\wh{p}_{t+1}\|_2^2+\|p_{t}-\wh{p}_{t}\|_2^2\Big)+\otil(1)\\
    & \leq c_1 \sum_{t=2}^T  \frac{\norm{A_t y_t - A_{t-1} y_{t-1}}_\infty^2}{\sqrt{L^2 + \sum_{s=2}^{t-1} \norm{A_s y_s - A_{s-1} y_{s-1}}_\infty^2}} + \frac{\otil(1)}{\epsilon_{T}^x} - L\sum_{t=1}^T \left(\norm{p_t - \wh{p}_{t+1}}_2^2+\norm{p_t - \wh{p}_{t}}_2^2\right) \tag{by definition of $\epsilon_t^x$ and $\max_{p\in \Delta_{|\calS_x|}}\|p-e_i\|_2^2\leq \otil(1)$}\\
    & \leq c_2 \sqrt{L^2 + \sum_{t=2}^{T} \norm{A_t y_t - A_{t-1} y_{t-1}}_\infty^2} + \otil(1) - L\sum_{t=1}^T(\norm{p_t - \wh{p}_{t+1}}_2^2+\norm{p_t-\wh{p}_t}_2^2),
\end{align*}
where $c_1, c_2=\wt{\Theta}(1)$ and the last step holds by~\pref{lem:self-confident-variant}.

Next, we consider the base-regret. Since the base-algorithm $\calB_i$ satisfies the \drvu~property, the base-regret is upper bounded as follows:
\begin{align*}
    \sum_{t=1}^Tx_{t,i}^\top A_ty_t-\sum_{t=1}^Tu_t^\top A_ty_t &\leq \frac{\alpha (1+P_T^u)}{\eta_i^x}+\eta_i^x\beta\sum_{t=2}^T\|A_ty_t-A_{t-1}y_{t-1}\|_{\infty}^2-\frac{\gamma}{\eta_i^x}\sum_{t=2}^T\|x_{t,i}-x_{t-1,i}\|_1^2.
\end{align*}

Summing up the above two inequalities, we achieve the following dynamic regret guarantee for the $x$-player:
\begin{align*}
    & \sum_{t=1}^T \x_t^\T A_t \y_t - \sum_{t=1}^T \u_t^\T A_t \y_t\\
    &\leq c_2 \sqrt{L^2 + \sum_{t=2}^{T} \norm{A_t y_t - A_{t-1} y_{t-1}}_\infty^2} + \otil(1) - L\sum_{t=1}^T(\norm{p_t - \wh{p}_{t+1}}_2^2+\norm{p_t-\wh{p}_t}_2^2) \\
    &\qquad + \frac{\alpha (1+P_T^u)}{\eta_i^x}+\eta_i^x\beta\sum_{t=2}^T\|A_ty_t-A_{t-1}y_{t-1}\|_{\infty}^2-\frac{\gamma}{\eta_i^x}\sum_{t=2}^T\|x_{t,i}-x_{t-1,i}\|_1^2 \\
    &\qquad + \lambda\sum_{t=2}^T\|x_{t,i}-x_{t-1,i}\|_1^2 - \lambda\sum_{t=2}^T\sum_{i\in \calS_x}p_{t,i}\|x_{t,i}-x_{t-1,i}\|_1^2 + \otil(1)\\
    &\leq \order\left(\frac{\alpha(1+P_T^u)}{\eta_i^x}\right)+\eta_i^x(c_2^2+\beta)\sum_{t=2}^T\|A_ty_t-A_{t-1}y_{t-1}\|_{\infty}^2+\left(\lambda-\frac{\gamma}{\eta_i^x}\right)\sum_{t=2}^T\|x_{t,i}-x_{t-1,i}\|_1^2 \\
    &\qquad - L\sum_{t=1}^T(\|p_t-\wh{p}_{t+1}\|_2^2+\|p_t-\wh{p}_t\|_2^2)-\lambda\sum_{t=2}^T\sum_{i\in \calS_x}p_{t,i}\|x_{t,i}-x_{t-1,i}\|_1^2 + \otil(1)\\
    &\leq \order\left(\frac{\alpha(1+P_T^u)}{\eta_i^x}\right)+\eta_i^xc\beta\sum_{t=2}^T\|A_t-A_{t-1}\|_{\infty}^2+\eta_i^xc\beta\sum_{t=2}^T\|y_t-y_{t-1}\|_{1}^2+\left(\lambda-\frac{\gamma}{\eta_i^x}\right)\sum_{t=2}^T\|x_{t,i}-x_{t-1,i}\|_1^2 \\
    &\qquad - L\sum_{t=1}^T(\|p_t-\wh{p}_{t+1}\|_2^2+\|p_t-\wh{p}_t\|_2^2)-\lambda\sum_{t=2}^T\sum_{i\in \calS_x}p_{t,i}\|x_{t,i}-x_{t-1,i}\|_1^2+ \otil(1),
\end{align*}
where $c=\wt{\Theta}(1)$. This proves~\pref{eq:dynamic-regret-x-general}. Repeating the above analysis for $y$-player proves~\pref{eq:dynamic-regret-y-general}.
\end{proof}

The following lemma presents the stability lemma in terms of the   non-stationarity measure $P_T$ (that is, the NE variation). We give the stability upper bounds from the aspects of meta-algorithm and final decisions. For simplicity, we assume that the $\drvu$ parameters $(\alpha,\beta,\gamma)$ are all $\wt{\Theta}(1)$, which is indeed the case for standard algorithms as proven in~\pref{appendix:DRVU-property}.
\begin{lemma}[NE-variation stability]
\label{lemma:stability-NE-variation}
Suppose that $x$-player follows~\pref{alg:x-player} and $y$-player follows~\pref{alg:y-player}. Then, the following inequalities hold simultaneously.
In the meta-algorithm aspect, we have
\begin{align}
\label{eq:stability-NE-variation-meta}
\sum_{t=1}^T\|p_t-\wh{p}_{t+1}\|_2^2 &\leq \otil\left(\sqrt{(1+V_T)(1+P_T)}+P_T\right),\\
\sum_{t=1}^T\|q_t-\wh{q}_{t+1}\|_2^2 &\leq \otil\left(\sqrt{(1+V_T)(1+P_T)}+P_T\right);
\end{align}
in the final decision aspect, we have
\begin{align}
    \label{eq:stability-NE-variation-overall}
    \sum_{t=2}^T \norm{x_t - x_{t-1}}_1^2 \leq \Ot\left(\sqrt{(1+V_T)(1+P_T)} + P_T\right),\\
    \sum_{t=2}^T \norm{y_t - y_{t-1}}_1^2 \leq \Ot\left(\sqrt{(1+V_T)(1+P_T)} + P_T\right).
\end{align}
\end{lemma}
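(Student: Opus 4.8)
The plan is to specialize the general dynamic‑regret bound of \pref{lemma:NE-variation-dynamic-regret} to the per‑round Nash equilibria, add the $x$‑ and $y$‑players' bounds, and then extract the four stability quantities via the two‑layer structure and a final step‑size tuning. Write $S_x=\sum_{t=2}^T\|x_t-x_{t-1}\|_1^2$, $S_y=\sum_{t=2}^T\|y_t-y_{t-1}\|_1^2$, $B_x=\sum_{t=2}^T\sum_{i\in\calS_x}p_{t,i}\|x_{t,i}-x_{t-1,i}\|_1^2$ (and $B_y$ symmetrically), $M_p=\sum_{t}(\|p_t-\wh p_{t+1}\|_2^2+\|p_t-\wh p_t\|_2^2)$ (and $M_q$ symmetrically); all $\drvu$/OMD constants and polynomial dependence on $m,n$ are hidden in $\otil(\cdot)$. \textbf{Step 1 (master inequality).} Apply \pref{eq:dynamic-regret-x-general} with $u_t=x_t^*$ and \pref{eq:dynamic-regret-y-general} with $v_t=y_t^*$, using a common step size $\eta:=\eta_i^x=\eta_j^y$ from the pool and taking $(x_t^*,y_t^*)$ to be the Nash‑equilibrium sequence attaining the minimum in the definition of $P_T$, so $P_T^x+P_T^y=P_T$. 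Since $\eta\le1/L$ and $\lambda=\gamma L/2$, the coefficients $\lambda-\gamma/\eta$ are negative and those terms are dropped; adding the two bounds and using the saddle‑point property $x_t^\top A_ty_t^*-x_t^{*\top}A_ty_t\ge0$ to lower‑bound the left‑hand side by $0$ yields an inequality of the shape
\[
L(M_p+M_q)+\lambda(B_x+B_y)\;\le\;\otil\!\Big(\tfrac{1+P_T}{\eta}\Big)+\otil(\eta V_T)+\eta\,\otil(1)\,(S_x+S_y)+\otil(1).
\]

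\textbf{Step 2 (closing the loop).} The triangle inequality for $x_t=\sum_ip_{t,i}x_{t,i}$ together with Cauchy--Schwarz gives $S_x\le2B_x+2\Delta_p$ with $\Delta_p:=\sum_{t\ge2}\|p_t-p_{t-1}\|_1^2$, and symmetrically for $y$. To control $\Delta_p$, observe that $p_{t-1}$ and $\wh p_t$ are the two OOGD proximal steps started from $\wh p_{t-1}$ that differ only in their linear term ($m_{t-1}^x$ versus $\ell_{t-1}^x$), so strong convexity of the Euclidean step gives $\|\wh p_t-p_{t-1}\|_2\le\tfrac12\epsilon_{t-1}^x\|m_{t-1}^x-\ell_{t-1}^x\|_2$; since the correction terms cancel, $m_{t-1}^x-\ell_{t-1}^x=X_{t-1}^\top(A_{t-2}y_{t-2}-A_{t-1}y_{t-1})$ has $\ell_2$‑norm $\otil(1)\|A_{t-1}y_{t-1}-A_{t-2}y_{t-2}\|_\infty$, and because $\epsilon_t^x$ is built exactly from the running sum of these gradient variations, \pref{lem:self-confident-variant} gives $\sum_t(\epsilon_{t-1}^x)^2\|m_{t-1}^x-\ell_{t-1}^x\|_2^2=\otil(1)$. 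Hence $\Delta_p\le\otil(1)M_p+\otil(1)$ and $\Delta_q\le\otil(1)M_q+\otil(1)$, so $S_x+S_y\le\otil(1)(B_x+B_y)+\otil(1)(M_p+M_q)+\otil(1)$.

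\textbf{Step 3 (solve, then tune).} Substituting the Step~2 estimate of $S_x+S_y$ into the master inequality, the right‑hand side gains terms $\eta\,\otil(1)(B_x+B_y)$ and $\eta\,\otil(1)(M_p+M_q)$; since $\eta\le1/L$ and $L=\max\{4,\sqrt{16c\beta},\sqrt{8c\beta/\gamma}\}$ with the constant $c=\otil(1)$ chosen large enough (this is precisely what $c$ is for), these are at most $\tfrac12\lambda(B_x+B_y)$ and $\tfrac12L(M_p+M_q)$ and can be absorbed on the left. Rearranging gives $B_x+B_y=\otil\big(\tfrac{1+P_T}{\eta}+\eta V_T+1\big)$ and $M_p+M_q=\otil\big(\tfrac{1+P_T}{\eta}+\eta V_T+1\big)$, and feeding these back into Step~2 yields the same bound for $S_x+S_y$. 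Finally choose $\eta$ from the pool within a factor $2$ of $\eta^\sharp:=\min\{1/L,\sqrt{(1+P_T)/(1+V_T)}\}$, which is feasible because $\eta^\sharp\in[\tfrac1{L\sqrt T},\tfrac1L]$ (the small‑$\eta$ corner is ruled out by $V_T\le4T$ and $L\ge4$); this converts $\tfrac{1+P_T}{\eta}+\eta V_T$ into $\otil(\sqrt{(1+V_T)(1+P_T)}+P_T)$. All four claimed bounds follow, since $\sum_t\|p_t-\wh p_{t+1}\|_2^2\le M_p\le M_p+M_q$, $\sum_t\|q_t-\wh q_{t+1}\|_2^2\le M_q\le M_p+M_q$, and $S_x,S_y\le S_x+S_y$.

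\textbf{Main obstacle.} The delicate part is the self‑consistency of Steps~2--3: expanding $S_x+S_y$ through the two‑layer decomposition reinjects positive multiples of $B_x+B_y$, $M_p+M_q$ (and a priori of $S_x+S_y$ itself) into the master inequality, and one must verify that the negative terms $-\lambda(B_x+B_y)$ and $-L(M_p+M_q)$ furnished by \pref{lemma:NE-variation-dynamic-regret} --- present precisely because of the stability correction $\lambda\|x_{t,i}-x_{t-1,i}\|_1^2$ baked into $\ell_t^x,m_t^x$ --- dominate them for the prescribed $L$. The dangerous point is the estimate of $\|\wh p_t-p_{t-1}\|$: a naive bound leaving an $\Omega(V_T/L^2)$ or $\Omega(S_y/L^2)$ residual would leave $S_x+S_y$ appearing with comparable coefficients on both sides and the recursion would not close; it is essential that the \emph{adaptive} learning rate $\epsilon_t^x$ makes that residual merely $\otil(1)$ rather than a constant multiple of the non‑stationarity.
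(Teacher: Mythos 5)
Your proof is correct and follows essentially the same route as the paper's: specialize \pref{lemma:NE-variation-dynamic-regret} to the minimizing Nash-equilibrium sequence, add the two players' bounds, lower-bound the left-hand side by $0$ via the saddle-point property, expand the positive $S_x,S_y$ terms through the two-layer decomposition (\pref{eq:x-split}/\pref{eq:y-split}), absorb them into the negative $-L(M_p+M_q)$ and $-\lambda(B_x+B_y)$ terms under the prescribed choice of $L,\lambda$, and tune $\eta$ from the geometric pool. The one place you diverge is in bounding $\sum_t\|\wh p_t - p_{t-1}\|_2^2$: you invoke the OMD prox-stability bound $\|\wh p_t - p_{t-1}\|_2 \le \tfrac12\epsilon_{t-1}^x\|\ell_{t-1}^x - m_{t-1}^x\|_2$ together with the adaptive learning-rate schedule to argue the sum is $\otil(1)$, and claim this adaptivity is essential for the recursion to close. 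The paper's route is simpler and needs no appeal to the learning-rate schedule: $\|\wh p_t - p_{t-1}\|_2^2 = \|p_{t-1}-\wh p_{(t-1)+1}\|_2^2$ is already a summand of $M_p$, so $\sum_t\|p_t-p_{t-1}\|_2^2 \le 2M_p$ by telescoping, and the recursion closes because these squared norms appear with a large negative coefficient $-L$ in the master inequality regardless of how $\epsilon_t^x$ is tuned. So your emphasis on the adaptive step size being "essential" at this specific juncture overstates its role; it is valid but not necessary. A second small point: you cite \pref{lem:self-confident-variant} for the bound $\sum_t(\epsilon_{t-1}^x)^2\|m_{t-1}^x-\ell_{t-1}^x\|_2^2 = \otil(1)$, but that lemma controls $\sum_t a_t/\sqrt{1+\sum_{s<t}a_s}$; what you need is the logarithmic bound on $\sum_t a_t/(1+\sum_{s<t}a_s)$, a different (also standard) inequality. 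The claimed $\otil(1)$ is nonetheless correct.
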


\begin{proof}
Let $(x_t^*,y_t^*)$ denote the Nash equilibrium of the game matrix $A_t$ at round $t$. Based on~\pref{lemma:NE-variation-dynamic-regret} with a choice of $\{u_t\}_{t=1}^T=\{x_t^*\}_{t=1}^T$ and $\{v_t\}_{t=1}^T=\{y_t^*\}_{t=1}^T$, we have
\begin{align*}
    & \sum_{t=1}^T \x_t^\T A_t \y_t^* - \sum_{t=1}^T \x_t^{*\T} A_t \y_t\\
    & = \sum_{t=1}^T \x_t^\T A_t \y_t^* - \sum_{t=1}^T \x_t^\T A_t \y_t + \sum_{t=1}^T \x_t^\T A_t \y_t - \sum_{t=1}^T \x_t^{*\T} A_t \y_t \\
    &\leq \otil\left(\frac{\alpha(1+P_T^x)}{\eta_i^x}+\frac{\alpha(1+P_T^y)}{\eta_j^y}+\beta(\eta_i^x+\eta_j^y)V_T\right) + \eta_i^xc\beta\sum_{t=2}^T\|y_t-y_{t-1}\|_1^2 + \eta_j^yc\beta\sum_{t=2}^T\|x_t-x_{t-1}\|_1^2\\
    &\qquad + \left(\lambda-\frac{\gamma}{\eta_i^x}\right)\sum_{t=2}^T\|x_{t,i}-x_{t-1,i}\|_1^2+\left(\lambda-\frac{\gamma}{\eta_j^y}\right)\sum_{t=2}^T\|y_{t,j}-y_{t-1,j}\|_1^2 \\
    &\qquad -L\sum_{t=1}^T(\|p_t-\wh{p}_{t+1}\|_2^2+\|p_t-\wh{p}_t\|_2^2+\|q_t-\wh{q}_{t+1}\|_2^2+\|q_t-\wh{q}_t\|_2^2) \\
    &\qquad - \lambda\sum_{t=2}^T\sum_{i\in S_x}p_{t,i}\|x_{t,i}-x_{t-1,i}\|_1^2 - \lambda\sum_{t=2}^T\sum_{j\in S_y}q_{t,j}\|y_{t,j}-y_{t-1,j}\|_1^2\\
    &\leq \otil\left(\frac{\alpha(1+P_T^x)}{\eta_i^x}+\frac{\alpha(1+P_T^y)}{\eta_j^y}+\beta(\eta_i^x+\eta_j^y)V_T\right) + 2\eta_i^xc\beta\sum_{t=2}^T\|q_t-q_{t-1}\|_1^2 + 2\eta_i^xc\beta\sum_{t=2}^T\sum_{j\in \calS_y}\|y_{t,j}-y_{t-1,j}\|_1^2\\
    &\qquad+2\eta_j^yc\beta\sum_{t=2}^T\|p_t-p_{t-1}\|_1^2 +
    2\eta_j^yc\beta\sum_{t=2}^T\sum_{i\in \calS_x}\|x_{t,i}-x_{t-1,i}\|_1^2 \\
    &\qquad + \left(\lambda-\frac{\gamma}{\eta_i^x}\right)\sum_{t=2}^T\|x_{t,i}-x_{t-1,i}\|_1^2+\left(\lambda-\frac{\gamma}{\eta_j^y}\right)\sum_{t=2}^T\|y_{t,j}-y_{t-1,j}\|_1^2 \\
    &\qquad -\frac{L}{2}\sum_{t=1}^T(\|p_t-\wh{p}_{t+1}\|_2^2+\|p_t-\wh{p}_t\|_2^2+\|q_t-\wh{q}_{t+1}\|_2^2+\|q_t-\wh{q}_t\|_2^2)-\frac{L}{4}\sum_{t=2}^T(\|p_t-p_{t-1}\|_2^2+\|q_t-q_{t-1}\|_2^2)\\
    &\qquad - \lambda\sum_{t=2}^T\sum_{i\in S_x}p_{t,i}\|x_{t,i}-x_{t-1,i}\|_1^2 - \lambda\sum_{t=2}^T\sum_{j\in S_y}q_{t,j}\|y_{t,j}-y_{t-1,j}\|_1^2\\
    & \leq \otil\left(\frac{\alpha(1+P_T^x)}{\eta_i^x}+\frac{\alpha(1+P_T^y)}{\eta_j^y}+\beta(\eta_i^x+\eta_j^y)V_T\right) + \left(2\eta_i^xc\beta-\frac{L}{4}\right)\sum_{t=2}^T\|q_t-q_{t-1}\|_1^2 +\left(2\eta_j^yc\beta-\frac{L}{4}\right)\sum_{t=2}^T\|p_t-p_{t-1}\|_1^2\\
    &\qquad+ \left(2\eta_i^xc\beta-\lambda\right)\sum_{t=2}^T\sum_{j\in \calS_y}q_{t,j}\|y_{t,j}-y_{t-1,j}\|_1^2+
    \left(2\eta_j^yc\beta-\lambda\right)\sum_{t=2}^T\sum_{i\in \calS_x}p_{t,i}\|x_{t,i}-x_{t-1,i}\|_1^2\\
    &\qquad + \left(\lambda-\frac{\gamma}{\eta_i^x}\right)\sum_{t=2}^T\|x_{t,i}-x_{t-1,i}\|_1^2+\left(\lambda-\frac{\gamma}{\eta_j^y}\right)\sum_{t=2}^T\|y_{t,j}-y_{t-1,j}\|_1^2 \\
    &\qquad -\frac{L}{2}\sum_{t=1}^T(\|p_t-\wh{p}_{t+1}\|_2^2+\|p_t-\wh{p}_t\|_2^2+\|q_t-\wh{q}_{t+1}\|_2^2+\|q_t-\wh{q}_t\|_2^2).
\end{align*} 
According to the choice of $L=\max\left\{4,\sqrt{16c\beta}, \sqrt{\frac{8c\beta}{\gamma}}\right\}=\wt{\Theta}(1)$, $\lambda=\frac{\gamma L}{2}$, and $\eta_i^x,\eta_j^y\leq \frac{1}{L}$, it can be verified that
\begin{equation}\label{eq:negative-coef}
    \begin{split}
    &2\eta_i^xc\beta - \frac{L}{4}\leq \frac{2c\beta}{L} - \frac{L}{4}\leq -\frac{L}{8};\qquad  2\eta_j^yc\beta - \frac{L}{4}\leq \frac{2c\beta}{L} - \frac{L}{4}\leq -\frac{L}{8};\\
    &2\eta_i^xc\beta - \lambda = \frac{2c\beta}{L}-\frac{\gamma L}{2} \leq -\frac{\gamma L}{4};\qquad  2\eta_j^yc\beta - \lambda = \frac{2c\beta}{L}-\frac{\gamma L}{2} \leq -\frac{\gamma L}{4};\\
    &\lambda - \frac{\gamma}{\eta_i^x} = \frac{\gamma L}{2} - \frac{\gamma}{\eta_i^x} \leq \frac{\gamma}{2}\left(L - \frac{2}{\eta_i^x}\right) \leq -\frac{\gamma}{2\eta_i^x},\\ &\lambda - \frac{\gamma}{\eta_j^y} = \frac{\gamma L}{2} - \frac{\gamma}{\eta_j^y} \leq \frac{\gamma}{2}\left(L - \frac{2}{\eta_j^y}\right) \leq -\frac{\gamma}{2\eta_j^y}.
    \end{split}
\end{equation}
In addition, since $(x_t^*,y_t^*)$ is the Nash equilibrium of $A_t$, it follows that
$$x_t^\top A_ty_t^*-x_t^{*\T} A_ty_t\geq x_t^{*\top} A_ty_t^*-x_t^{*\T} A_ty_t^*=0.$$
Therefore, as $\alpha$, $\beta$, $\gamma=\wt{\Theta}(1)$, we have the following inequalities simultaneously.
\begin{align*}
    \sum_{t=2}^T\|p_t-p_{t-1}\|_1^2\leq \frac{8}{L}\cdot \otil\left(\frac{1+P_T^x}{\eta_{i}^x}+\frac{1+P_T^y}{\eta_{j}^y}+(\eta_i^x+\eta_j^y)V_T\right) \leq \otil\left(\sqrt{(1+V_T)(1+P_T)}+P_T\right),
\end{align*}
where the last inequality is because we pick $\eta_i^x$ and $\eta_j^y$ to be the one such that $\eta_i^x\in [\frac{1}{2}\eta_*^x, 2\eta_*^x]$, $\eta_j^y\in [\frac{1}{2}\eta_*^y, 2\eta_*^y]$ where $\eta_*^x=\min\left\{\sqrt{\frac{1+P_T^x}{1+V_T}}, \frac{1}{L}\right\}$ and $\eta_*^y=\min\left\{\sqrt{\frac{1+P_T^y}{1+V_T}}, \frac{1}{L}\right\}$. This is achievable based on the choice of our step size pool. Similarly, we have
\begin{align*}
    &\sum_{t=2}^T\|q_t-q_{t-1}\|_1^2\leq \otil\left(\sqrt{(1+V_T)(1+P_T)}+P_T\right),\\
    &\sum_{t=1}^T\|p_t-\wh{p}_{t+1}\|_1^2\leq \otil\left(\sqrt{(1+V_T)(1+P_T)}+P_T\right),\\
    &\sum_{t=1}^T\|q_t-\wh{q}_{t+1}\|_1^2\leq \otil\left(\sqrt{(1+V_T)(1+P_T)}+P_T\right),\\
    &\sum_{t=2}^T\sum_{i\in\calS_x}p_{t,i}\|x_{t,i}-x_{t-1,i}\|_1^2\leq \otil\left(\sqrt{(1+V_T)(1+P_T)}+P_T\right),\\
    &\sum_{t=2}^T\sum_{j\in\calS_y}q_{t,j}\|y_{t,j}-y_{t-1,j}\|_1^2\leq \otil\left(\sqrt{(1+V_T)(1+P_T)}+P_T\right).
\end{align*}
In addition, note that
\begin{equation}
    \label{eq:x-split}
    \begin{split}
        &\sum_{t=2}^T \norm{x_t - x_{t-1}}_1^2 \\
    &= \sum_{t=2}^T \left\|\sum_{i\in \calS_x}p_{t,i}x_{t,i}-\sum_{i\in \calS_x}p_{t-1,i}x_{t-1,i}\right\|_1^2 \\
    &= \sum_{t=2}^T \left\|\sum_{i\in \calS_x}p_{t,i}(x_{t,i}-x_{t-1,i})-\sum_{i\in \calS_x}(p_{t-1,i}-p_{t,i})x_{t-1,i}\right\|_1^2 \\
    &\leq 2\sum_{t=2}^T\left\|\sum_{i\in \calS_x}p_{t,i}(x_{t,i}-x_{t-1,i})\right\|_1^2+2\sum_{t=2}^T\left\|\sum_{i\in \calS_x}(p_{t,i}-p_{t-1,i})x_{t-1,i}\right\|_1^2 \\
    &\leq 2\sum_{t=2}^T\left(\sum_{i\in \calS_x}p_{t,i}\left\|x_{t,i}-x_{t-1,i}\right\|_1\right)^2+2\sum_{t=2}^T\left(\sum_{i\in \calS_x}|p_{t,i}-p_{t-1,i}|\cdot\left\|x_{t-1,i}\right\|_1\right)^2 \\
    &\leq 2\sum_{t=2}^T\left(\sum_{i\in \calS_x}p_{t,i}\left\|x_{t,i}-x_{t-1,i}\right\|_1\right)^2+2\sum_{t=2}^T\left(\sum_{i\in \calS_x}|p_{t,i}-p_{t-1,i}|\cdot\left\|x_{t-1,i}\right\|_1\right)^2\\
    &\leq 2\sum_{t=2}^T\sum_{i\in \calS_x}p_{t,i}\|x_{t,i}-x_{t-1,i}\|_1^2+2\sum_{t=2}^T\|p_t-p_{t-1}\|_1^2,
    \end{split}
\end{equation}
where the last inequality is by Cauchy-Schwarz inequality and $\|x_{t,i}\|_1=1$ for all $i\in \calS_x$.
Similarly, we have for $y$-player,
\begin{equation}
\label{eq:y-split}
    \sum_{t=2}^T \norm{y_t - y_{t-1}}_1^2 \leq 2\sum_{t=2}^T\sum_{j\in \calS_y}q_{t,j}\|x_{t,j}-x_{t-1,j}\|_1^2+2\sum_{t=2}^T\|q_t-q_{t-1}\|_1^2.
\end{equation}

Based on the above results, we further have
\begin{align*}
    \sum_{t=2}^T \norm{x_t - x_{t-1}}_1^2 \leq &  2 \sum_{t=2}^T \norm{p_t - p_{t-1}}_1^2 + 2 \sum_{t=2}^T \sum_{i=1}^N p_{t,i}\norm{x_{t,i} - x_{t-1,i}}_1^2 \leq \otil\left(\sqrt{(1+V_T)(1+P_T)}+P_T\right),\\
    \sum_{t=2}^T \norm{y_t - y_{t-1}}_1^2 \leq &  2 \sum_{t=2}^T \norm{q_t - q_{t-1}}_1^2 + 2 \sum_{t=2}^T \sum_{i=1}^N q_{t,i}\norm{y_{t,i} - y_{t-1,i}}_1^2 \leq \otil\left(\sqrt{(1+V_T)(1+P_T)}+P_T\right),
\end{align*}
which completes the proof.
\end{proof}

The following lemma shows the relationship between dynamic NE-regret and the individual regret.

\begin{lemma}[Dynamic-NE-regret-to-individual-regret  conversation]
For arbitrary sequences of $\{x_t\}_{t=1}^T$, $\{y_t\}_{t=1}^T$ and $\{A_t\}_{t=1}^T$, where $x_t\in \Delta_m$, $y_t\in \Delta_n$, and $A_t\in\mathbb{R}^{m\times n}$, $\forall t\in[T]$, we have
\label{lemma:payoff-variation-conversation}
\begin{equation}
    \label{eq:payoff-varition-conversation}
    \left|\sum_{t=1}^Tx_t^\top A_ty_t - \sum_{t=1}^T\min_x\max_y x^\top A_ty \right| \leq \max\left\{\sum_{t=1}^Tx_t^\top A_ty_t - \sum_{t=1}^T x^{*\T} A_t y_t, \sum_{t=1}^T x_t^{\T} A_t y^* - \sum_{t=1}^Tx_t^\top A_ty_t\right\} + 2W_T,
\end{equation}
where $W_T = \sum_{t=1}^T\|A_t-\bar{A}\|_{\infty}$ is the variance of the game matrices with $\bar{A}=\frac{1}{T}\sum_{t=1}^TA_t$ being the average game matrix and $(x^*,y^*)$ is a pair of Nash equilibrium of $\bar{A}$. In the special case where $A_t=A$ for all $t\in [T]$, we have dynamic NE-regret bounded by the maximum of the two individual regrets as $W_T=0$.
\end{lemma}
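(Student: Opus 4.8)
\textit{Proof plan.} Write $G = \sum_{t=1}^T x_t^\top A_t y_t$ for the realized cumulative payoff and $S = \sum_{t=1}^T \min_x\max_y x^\top A_t y$ for the cumulative per-round game value, so that the left-hand side of the claim is $|G-S|$. The plan is to pivot through the game value of the time-averaged matrix $\bar A$. Let $V^\star = T\cdot\min_x\max_y x^\top\bar A y$; since $(x^*,y^*)$ is a Nash equilibrium of $\bar A$ we have $V^\star = x^{*\top}\big(\sum_t A_t\big)y^* = \sum_t x^{*\top}A_t y^* = \sum_t x^{*\top}\bar A y^*$. First I would show $|S-V^\star|\le W_T$: the game value map $A\mapsto\min_x\max_y x^\top A y$ over the simplices is $1$-Lipschitz in $\|\cdot\|_\infty$ (because $|x^\top(A-B)y|\le\|A-B\|_\infty$ for $x\in\Delta_m,y\in\Delta_n$, and taking $\min\max$ of two functions with a uniform pointwise gap preserves that gap), hence $|S-V^\star|\le\sum_t|\text{val}(A_t)-\text{val}(\bar A)|\le\sum_t\|A_t-\bar A\|_\infty=W_T$. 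It then suffices to prove $|G-V^\star|\le\max\{R_1,R_2\}+W_T$, where $R_1=\sum_t x_t^\top A_t y_t-\sum_t x^{*\top}A_t y_t$ and $R_2=\sum_t x_t^\top A_t y^*-\sum_t x_t^\top A_t y_t$ are exactly the two quantities inside the maximum; combined with $|S-V^\star|\le W_T$ and the triangle inequality $|G-S|\le|G-V^\star|+|V^\star-S|$ this gives $|G-S|\le\max\{R_1,R_2\}+2W_T$.

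For the two directions of $|G-V^\star|$ I would argue symmetrically. Upper side: $G-V^\star = R_1 + \big(\sum_t x^{*\top}A_t y_t - \sum_t x^{*\top}A_t y^*\big)$, and replacing $A_t=\bar A+(A_t-\bar A)$ inside the first sum and using $\sum_t x^{*\top}A_t y^* = \sum_t x^{*\top}\bar A y^*$, the residual becomes $\sum_t\big(x^{*\top}\bar A y_t-x^{*\top}\bar A y^*\big) + \sum_t x^{*\top}(A_t-\bar A)y_t$; the first sum is $\le 0$ by the saddle-point inequality $x^{*\top}\bar A y\le x^{*\top}\bar A y^*$, and the second is at most $\sum_t\|A_t-\bar A\|_\infty\|x^*\|_1\|y_t\|_1=W_T$ in absolute value, so $G-V^\star\le R_1+W_T$. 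Lower side: $V^\star-G = R_2 - \big(\sum_t x_t^\top A_t y^* - \sum_t x^{*\top}A_t y^*\big)$, and the analogous split yields $\sum_t\big(x_t^\top\bar A y^*-x^{*\top}\bar A y^*\big)\ge 0$ (since $x^{*\top}\bar A y^*=\min_x x^\top\bar A y^*$) plus a perturbation $\sum_t x_t^\top(A_t-\bar A)y^*$ again bounded by $W_T$ in absolute value, so $V^\star-G\le R_2+W_T$. Together these give $|G-V^\star|\le\max\{R_1,R_2\}+W_T$, which completes the proof. The stationary case $A_t\equiv A$ falls out immediately: then $\bar A=A$ and $W_T=0$, while $R_1\le\Reg_T^x$ and $R_2\le\Reg_T^y$ because $\sum_t x^{*\top}A_t y_t\ge\min_x\sum_t x^\top A_t y_t$ and $\sum_t x_t^\top A_t y^*\le\max_y\sum_t x_t^\top A_t y$.

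The only step requiring any insight is the choice of comparator: one must benchmark against the Nash equilibrium of the \emph{averaged} matrix $\bar A$, not the per-round equilibria $(x_t^*,y_t^*)$, since $\sum_t x_t^{*\top}A_t y_t$ does not relate cleanly to any single fixed strategy, whereas routing everything through $\bar A$ lets the two saddle-point inequalities at $\bar A$ absorb the ``optimization'' part of the bound and isolates all the non-stationarity into the two perturbation terms $\sum_t x^{*\top}(A_t-\bar A)y_t$ and $\sum_t x_t^\top(A_t-\bar A)y^*$, each controlled by $\sum_t\|A_t-\bar A\|_\infty = W_T$. Everything after the setup is routine H\"older/simplex bookkeeping.
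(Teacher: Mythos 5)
Your proof is correct, and it reaches the same $2W_T$ price tag by the same conceptual route: pivot through the Nash equilibrium $(x^*,y^*)$ of the averaged matrix $\bar A$, use the saddle-point inequalities at $\bar A$, and pay $\|A_t-\bar A\|_\infty$ each time a round-$t$ matrix is swapped for $\bar A$ or vice versa. The organization differs slightly. The paper proves each direction of the absolute value as one five-step chain, starting from $\sum_t x_t^{*\top}A_t y_t^*$ (the per-round equilibrium values) and successively replacing $y_t^*\to y^*$ (saddle point at $A_t$), $A_t\to\bar A$ (one $W_T$), $x_t^*\to x^*$ (saddle point at $\bar A$), $y^*\to y_t$ (saddle point at $\bar A$), and $\bar A\to A_t$ (second $W_T$). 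You instead split via the triangle inequality through the pivot $V^\star = T\cdot\mathrm{val}(\bar A)$, bound $|S - V^\star|\le W_T$ once and for all using the $1$-Lipschitzness of the game-value map (which, as you note, needs only the min-max-preserves-uniform-gaps fact and avoids explicitly manipulating per-round equilibria), and then bound $|G-V^\star|\le\max\{R_1,R_2\}+W_T$ with one saddle-point step and one matrix swap per direction. Your version is a bit more modular — it isolates the Lipschitz fact as a reusable observation and never needs the per-round Nash strategies $(x_t^*,y_t^*)$ at all — while the paper's chain is more elementary and self-contained. Either way the accounting is identical: two swaps, two $W_T$'s, and the rest absorbed by saddle-point monotonicity.
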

\begin{proof}
Suppose that $\sum_{t=1}^Tx_t^\top A_ty_t - \sum_{t=1}^T\min_x\max_y x^\top A_ty \geq 0$, then the dynamic NE-regret can be upper bounded as
\begin{align*}
    & \left|\sum_{t=1}^Tx_t^\top A_ty_t - \sum_{t=1}^T\min_x\max_y x^\top A_ty \right| \tag{let $(x_t^*,y_t^*)$ be the Nash equilibrium of $A_t$}\\
    & = \sum_{t=1}^Tx_t^\top A_ty_t - \sum_{t=1}^T x_t^{*\T} A_ty_t^*\tag{let $(x^*,y^*)$ be the Nash equilibrium of $\bar{A}$}\\
    & \leq \sum_{t=1}^Tx_t^\top A_ty_t - \sum_{t=1}^T x_t^{*\T} A_t y^* \tag{changing $y_t^*$ to $y^*$  decreases the game value w.r.t. $A_t$}\\
    & \leq \sum_{t=1}^Tx_t^\top A_ty_t - \sum_{t=1}^T x_t^{*\T} \bar{A} y^* + W_T \tag{shifting the payoff matrix from $A_t$ to $\bar{A}$}\\
    & \leq \sum_{t=1}^Tx_t^\top A_ty_t - \sum_{t=1}^T x^{*\T} \bar{A} y^* + W_T \tag{changing $x_t^*$ to $x^*$  decreases the game value w.r.t. $\bar{A}$}\\
    & \leq \sum_{t=1}^Tx_t^\top A_ty_t - \sum_{t=1}^T x^{*\T} \bar{A} y_t + W_T \tag{changing $y^*$ to $y_t$  decreases the game value w.r.t. $\bar{A}$}\\
    & \leq \sum_{t=1}^Tx_t^\top A_ty_t - \sum_{t=1}^T x^{*\T} A_t y_t + 2W_T. \tag{shifting the payoff matrix from $\bar{A}$ to $A_t$}
\end{align*}
Similarly, when $\sum_{t=1}^Tx_t^\top A_ty_t - \sum_{t=1}^T\min_x\max_y x^\top A_ty \leq 0$, we can verify that
\begin{align*}
    & \left|\sum_{t=1}^Tx_t^\top A_ty_t - \sum_{t=1}^T\min_x\max_y x^\top A_ty \right| \tag{let $(x_t^*,y_t^*)$ be the Nash equilibrium of $A_t$}\\
    & = \sum_{t=1}^T x_t^{*\T} A_ty_t^* - \sum_{t=1}^Tx_t^\top A_ty_t \tag{let $(x^*,y^*)$ be the Nash equilibrium of $\bar{A}$} \\
    & \leq \sum_{t=1}^T x^{*\T} A_ty_t^* - \sum_{t=1}^Tx_t^\top A_ty_t \tag{changing $x_t^*$ to $x^*$ increases the game value w.r.t. $A_t$}\\
    & \leq \sum_{t=1}^T x^{*\T} \bar{A}y_t^* - \sum_{t=1}^Tx_t^\top A_ty_t + W_T \tag{shifting the payoff matrix from $A_t$ to $\bar{A}$}\\
    & \leq \sum_{t=1}^T x^{*\T} \bar{A}y^* - \sum_{t=1}^Tx_t^\top A_ty_t + W_T \tag{changing $y_t^*$ to $y^*$  increases the game value w.r.t. $\bar{A}$}\\
    & \leq \sum_{t=1}^T x_t^{\T} \bar{A}y^* - \sum_{t=1}^Tx_t^\top A_ty_t + W_T \tag{changing $x^*$ to $x_t$  increases the game value w.r.t. $\bar{A}$}\\
    & \leq \sum_{t=1}^T x_t^{\T} A_t y^* - \sum_{t=1}^Tx_t^\top A_ty_t + 2W_T. \tag{shifting the payoff matrix from $\bar{A}$ to $A_t$}
\end{align*}
Combining the two cases yields the desired result.
\end{proof}

Next, we present the following stability lemma in terms of the non-stationarity measure $W_T$ (that is, the payoff variance). We give the stability upper bounds from the aspects of meta-algorithm and final decisions. Again, we assume that the $\drvu$ parameters $(\alpha,\beta,\gamma)$ are all $\wt{\Theta}(1)$.
\begin{lemma}[Payoff-variance stability]
\label{lemma:stability-payoff-variation}
Suppose that $x$-player follows~\pref{alg:x-player} and $y$-player follows~\pref{alg:y-player}. Then, the following inequalities hold simultaneously.
In the meta-algorithm aspect, we have
\begin{equation}
\label{eq:stability-var-variation-meta}
\sum_{t=1}^T\|p_t-\wh{p}_{t+1}\|_2^2 \leq \otil\left(1+W_T\right),~~ \sum_{t=1}^T\|q_t-\wh{q}_{t+1}\|_2^2 \leq \otil\left(1+W_T\right);
\end{equation}
in the final decision aspect, we have
\begin{align}
    \label{eq:stability-var-variation-overall}
    \sum_{t=2}^T \norm{x_t - x_{t-1}}_1^2 \leq \otil\left(1+W_T\right),~~ \sum_{t=2}^T \norm{y_t - y_{t-1}}_1^2 \leq \otil\left(1+W_T\right).
\end{align}
\end{lemma}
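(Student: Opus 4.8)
The plan is to run essentially the same argument as in the proof of Lemma~\pref{lemma:stability-NE-variation}, but replacing the per-round Nash equilibria by a \emph{constant} comparator sequence coming from a Nash equilibrium of the averaged game matrix. Concretely, let $\bar{A}=\frac1T\sum_{t=1}^T A_t$ and let $(\bar{u}^*,\bar{v}^*)\in\Delta_m\times\Delta_n$ be a Nash equilibrium of $\bar{A}$; take $u_t=\bar{u}^*$ and $v_t=\bar{v}^*$ for every $t$, so that $P_T^u=P_T^v=0$. Adding the general dynamic-regret bounds \pref{eq:dynamic-regret-x-general} (with this $\{u_t\}$) and \pref{eq:dynamic-regret-y-general} (with this $\{v_t\}$) from Lemma~\pref{lemma:NE-variation-dynamic-regret}, then expanding $\|A_ty_t-A_{t-1}y_{t-1}\|_\infty^2\le 2\|A_t-A_{t-1}\|_\infty^2+2\|y_t-y_{t-1}\|_1^2$ (and symmetrically for the $y$-player), and invoking the splittings \pref{eq:x-split}--\pref{eq:y-split}, one reaches precisely the same inequality as in the proof of Lemma~\pref{lemma:stability-NE-variation}: its right-hand side is $\Ot\big(\tfrac{1}{\eta_i^x}+\tfrac{1}{\eta_j^y}+(\eta_i^x+\eta_j^y)V_T\big)$ (using $\alpha,\beta,\gamma=\wt\Theta(1)$ and $P_T^u=P_T^v=0$) plus a collection of nonpositive terms whose coefficients are controlled by \pref{eq:negative-coef}, valid because $L$ is chosen as in \pref{eq:L_def}, $\lambda=\tfrac{\gamma L}{2}$, and $\eta_i^x,\eta_j^y\le\tfrac1L$; these negative terms are $-\Omega(1)$ times $\sum_t(\|p_t-\wh p_{t+1}\|_2^2+\|q_t-\wh q_{t+1}\|_2^2)$, $\sum_t(\|p_t-p_{t-1}\|_1^2+\|q_t-q_{t-1}\|_1^2)$, $\sum_t\sum_{i\in\calS_x}p_{t,i}\|x_{t,i}-x_{t-1,i}\|_1^2$, and $\sum_t\sum_{j\in\calS_y}q_{t,j}\|y_{t,j}-y_{t-1,j}\|_1^2$.

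The one genuinely new ingredient is a lower bound $\sum_{t=1}^T x_t^\top A_t\bar{v}^*-\sum_{t=1}^T \bar{u}^{*\top} A_t y_t\ge-\O(W_T)$. For this I would first replace each $A_t$ by $\bar{A}$ at a per-round cost of at most $\|A_t-\bar{A}\|_\infty$ (since $x_t,y_t,\bar u^*,\bar v^*$ are probability vectors), giving $\sum_t x_t^\top A_t\bar v^*-\sum_t\bar u^{*\top}A_ty_t \ge \sum_t x_t^\top\bar A\bar v^*-\sum_t\bar u^{*\top}\bar Ay_t-2W_T$; then the saddle-point property of $(\bar u^*,\bar v^*)$ for $\bar A$ gives $x_t^\top\bar A\bar v^*\ge \bar u^{*\top}\bar A\bar v^*\ge\bar u^{*\top}\bar Ay_t$ term by term, so the remaining difference is nonnegative and the whole quantity is at least $-2W_T$. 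This plays exactly the role that the inequality ``$x_t^\top A_ty_t^*-x_t^{*\top}A_ty_t\ge0$'' played in the proof of Lemma~\pref{lemma:stability-NE-variation}, but now routed through the averaged matrix, which is why the averaged-game equilibrium is the right comparator here.

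Combining the two directions, each of the nonnegative quantities above is at most $\O(W_T)+\Ot\big(\tfrac{1}{\eta_i^x}+\tfrac{1}{\eta_j^y}+(\eta_i^x+\eta_j^y)V_T\big)$. I would then pick $\eta_i^x\in[\tfrac12\eta_*^x,2\eta_*^x]$ and $\eta_j^y\in[\tfrac12\eta_*^y,2\eta_*^y]$ with $\eta_*^x=\eta_*^y=\min\{1/\sqrt{1+V_T},\,1/L\}$, which is feasible because $1/\sqrt{1+V_T}\ge\Theta(1/\sqrt T)$ and by the construction of the step-size pool; this makes the bracket $\Ot(\sqrt{1+V_T})$, and then $V_T\le 4W_T$ turns everything into $\Ot(1+W_T)$. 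This yields the meta-algorithm bounds \pref{eq:stability-var-variation-meta}. Finally, substituting the resulting bounds on $\sum_t\|p_t-p_{t-1}\|_1^2$ and $\sum_t\sum_{i\in\calS_x}p_{t,i}\|x_{t,i}-x_{t-1,i}\|_1^2$ (and their $y$-counterparts) into \pref{eq:x-split} and \pref{eq:y-split} gives $\sum_{t=2}^T\|x_t-x_{t-1}\|_1^2\le\Ot(1+W_T)$ and likewise for $y$, i.e.\ \pref{eq:stability-var-variation-overall}.

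I expect the main obstacle to be bookkeeping rather than insight: carefully tracking which negative terms survive after the opponent's gradient-variation term is expanded and split, and checking every coefficient sign through \pref{eq:negative-coef}. The only conceptually delicate step is the $-\O(W_T)$ lower bound of the second paragraph; everything else is a faithful adaptation of the proof of Lemma~\pref{lemma:stability-NE-variation} with $P_T$ set to $0$ and $W_T$ entering only through this new lower bound and the inequality $V_T\le 4W_T$.
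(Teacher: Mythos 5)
Your proposal is correct and follows essentially the same route as the paper's own proof: the paper likewise takes the comparators $u_t\equiv x^*$, $v_t\equiv y^*$ to be a Nash equilibrium of the averaged matrix $\bar A$ (so $P_T^u=P_T^v=0$), uses the saddle-point property of $(x^*,y^*)$ for $\bar A$ together with a $2W_T$ transfer cost when switching between $\bar A$ and $A_t$ to get the lower bound $\sum_t x_t^\top A_t y^*-\sum_t x^{*\top}A_ty_t\ge -2W_T$, invokes the same coefficient checks from \pref{eq:negative-coef}, chooses $\eta_*^x=\eta_*^y=\min\{1/\sqrt{1+V_T},1/L\}$, and uses $V_T\le \O(W_T)$ plus \pref{eq:x-split}--\pref{eq:y-split} to conclude.
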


\begin{proof}
Let $\bar{A}=\frac{1}{T}\sum_{t=1}^TA_t$ denote the average game matrix, and let $(x^*, y^*)$ be the Nash equilibrium of $\bar{A}$. Then according to the saddle point property of $(x^*, y^*)$, we have for any $x_t\in \Delta_m$ and $y_t\in \Delta_n$, $x_t^\T \bar{A} y^* - x^{*\T} \bar{A} y_t \geq 0$. Therefore, based on~\pref{lemma:NE-variation-dynamic-regret} with $u_t=y^*$ and $v_t=x^*$ for all $t\in [T]$, we have
\begin{align*}
    0& \leq \sum_{t=1}^T \x_t^\T \bar{A} \y^* - \sum_{t=1}^T \x^{*\T} \bar{A} \y_t\\
    & \leq \sum_{t=1}^T \x_t^\T \bar{A} \y^* - \sum_{t=1}^T \x^\T \bar{A} \y_t + \sum_{t=1}^T \x_t^\T \bar{A} \y_t - \sum_{t=1}^T \x_t^{*\T} \bar{A} \y_t + 2W_T \\
    &\leq \otil\left(\frac{\alpha}{\eta_i^x}+\frac{\alpha}{\eta_j^y}+\beta(\eta_i^x+\eta_j^y)V_T\right) + \eta_i^xc\beta\sum_{t=2}^T\|y_t-y_{t-1}\|_1^2 + \eta_j^yc\beta\sum_{t=2}^T\|x_t-x_{t-1}\|_1^2\\
    &\qquad + \left(\lambda-\frac{\gamma}{\eta_i^x}\right)\sum_{t=2}^T\|x_{t,i}-x_{t-1,i}\|_1^2+\left(\lambda-\frac{\gamma}{\eta_j^y}\right)\sum_{t=2}^T\|y_{t,j}-y_{t-1,j}\|_1^2 \\
    &\qquad -L\sum_{t=1}^T(\|p_t-\wh{p}_{t+1}\|_2^2+\|p_t-\wh{p}_t\|_2^2+\|q_t-\wh{q}_{t+1}\|_2^2+\|q_t-\wh{q}_t\|_2^2) \\
    &\qquad - \lambda\sum_{t=2}^T\sum_{i\in S_x}p_{t,i}\|x_{t,i}-x_{t-1,i}\|_1^2 - \lambda\sum_{t=2}^T\sum_{j\in S_y}q_{t,j}\|y_{t,j}-y_{t-1,j}\|_1^2 + 2W_T\\
    &\leq \otil\left(\frac{\alpha}{\eta_i^x}+\frac{\alpha}{\eta_j^y}+\beta(\eta_i^x+\eta_j^y)V_T\right) + 2\eta_i^xc\beta\sum_{t=2}^T\|q_t-q_{t-1}\|_1^2 + 2\eta_i^xc\beta\sum_{t=2}^T\sum_{j\in \calS_y}\|y_{t,j}-y_{t-1,j}\|_1^2\\
    &\qquad+2\eta_j^yc\beta\sum_{t=2}^T\|p_t-p_{t-1}\|_1^2 +
    2\eta_j^yc\beta\sum_{t=2}^T\sum_{i\in \calS_x}\|x_{t,i}-x_{t-1,i}\|_1^2 \\
    &\qquad+ \left(\lambda-\frac{\gamma}{\eta_i^x}\right)\sum_{t=2}^T\|x_{t,i}-x_{t-1,i}\|_1^2+\left(\lambda-\frac{\gamma}{\eta_j^y}\right)\sum_{t=2}^T\|y_{t,j}-y_{t-1,j}\|_1^2 \\
    &\qquad -\frac{L}{2}\sum_{t=1}^T(\|p_t-\wh{p}_{t+1}\|_2^2+\|p_t-\wh{p}_t\|_2^2+\|q_t-\wh{q}_{t+1}\|_2^2+\|q_t-\wh{q}_t\|_2^2)-\frac{L}{4}\sum_{t=2}^T(\|p_t-p_{t-1}\|_2^2+\|q_t-q_{t-1}\|_2^2)\\
    &\qquad - \lambda\sum_{t=2}^T\sum_{i\in S_x}p_{t,i}\|x_{t,i}-x_{t-1,i}\|_1^2 - \lambda\sum_{t=2}^T\sum_{j\in S_y}q_{t,j}\|y_{t,j}-y_{t-1,j}\|_1^2 + 2W_T\\
    & \leq \otil\left(\frac{\alpha}{\eta_i^x}+\frac{\alpha}{\eta_j^y}+\beta(\eta_i^x+\eta_j^y)V_T\right) + \left(2\eta_i^xc\beta-\frac{L}{4}\right)\sum_{t=2}^T\|q_t-q_{t-1}\|_1^2 +\left(2\eta_j^yc\beta-\frac{L}{4}\right)\sum_{t=2}^T\|p_t-p_{t-1}\|_1^2\\
    &\qquad+ \left(2\eta_i^xc\beta-\lambda\right)\sum_{t=2}^T\sum_{j\in \calS_y}q_{t,j}\|y_{t,j}-y_{t-1,j}\|_1^2+
    \left(2\eta_j^yc\beta-\lambda\right)\sum_{t=2}^T\sum_{i\in \calS_x}p_{t,i}\|x_{t,i}-x_{t-1,i}\|_1^2\\
    &\qquad + \left(\lambda-\frac{\gamma}{\eta_i^x}\right)\sum_{t=2}^T\|x_{t,i}-x_{t-1,i}\|_1^2+\left(\lambda-\frac{\gamma}{\eta_j^y}\right)\sum_{t=2}^T\|y_{t,j}-y_{t-1,j}\|_1^2 \\
    &\qquad -\frac{L}{2}\sum_{t=1}^T(\|p_t-\wh{p}_{t+1}\|_2^2+\|p_t-\wh{p}_t\|_2^2+\|q_t-\wh{q}_{t+1}\|_2^2+\|q_t-\wh{q}_t\|_2^2) + 2W_T.
\end{align*} 
Based on the choice of $L$ and $\lambda$, we can again verify the condition of ~\pref{eq:negative-coef}, which leads to the following inequalities:
\begin{align*}
    \sum_{t=2}^T\|p_t-p_{t-1}\|_1^2\leq \frac{8}{L}\cdot \otil\left(\frac{1}{\eta_{i}^x}+\frac{1}{\eta_{j}^y}+(\eta_i^x+\eta_j^y)V_T+W_T\right) \leq \otil\left(\sqrt{1+V_T}+W_T\right) \leq \otil\left(1+W_T\right),
\end{align*}
where the second last inequality is because we pick $\eta_i^x$ and $\eta_j^y$ to be the one such that $\eta_i^x\in [\frac{1}{2}\eta_*^x, 2\eta_*^x]$, $\eta_j^y\in [\frac{1}{2}\eta_*^y, 2\eta_*^y]$ where $\eta_*^x=\min\left\{\sqrt{\frac{1}{1+V_T}}, \frac{1}{L}\right\}$ and $\eta_*^y=\min\left\{\sqrt{\frac{1}{1+V_T}}, \frac{1}{L}\right\}$. This is achievable based on the choice of our step size pool. The last inequality holds because of AM-GM inequality and $V_T \leq \O(W_T)$. Similarly, we have
\begin{align*}
    &\sum_{t=2}^T\|q_t-q_{t-1}\|_1^2\leq \otil\left(1+W_T\right),\\
    &\sum_{t=1}^T\|p_t-\wh{p}_{t+1}\|_1^2\leq \otil\left(1+W_T\right),\\
    &\sum_{t=1}^T\|q_t-\wh{q}_{t+1}\|_1^2\leq \otil\left(1+W_T\right),\\
    &\sum_{t=2}^T\sum_{i\in\calS_x}p_{t,i}\|x_{t,i}-x_{t-1,i}\|_1^2\leq \otil\left(1+W_T\right),\\
    &\sum_{t=2}^T\sum_{j\in\calS_y}q_{t,j}\|y_{t,j}-y_{t-1,j}\|_1^2\leq \otil\left(1+W_T\right).
\end{align*}
Based on the above results, we further have
\begin{align*}
    \sum_{t=2}^T \norm{x_t - x_{t-1}}_1^2 \leq &  2 \sum_{t=2}^T \norm{p_t - p_{t-1}}_1^2 + 2 \sum_{t=2}^T \sum_{i=1}^N p_{t,i}\norm{x_{t,i} - x_{t-1,i}}_1^2 \leq \otil\left(1+W_T\right),\\
    \sum_{t=2}^T \norm{y_t - y_{t-1}}_1^2 \leq &  2 \sum_{t=2}^T \norm{q_t - q_{t-1}}_1^2 + 2 \sum_{t=2}^T \sum_{i=1}^N q_{t,i}\norm{y_{t,i} - y_{t-1,i}}_1^2 \leq \otil\left(1+W_T\right),
\end{align*}
which completes the proof.
\end{proof}

Building upon the stability of the decisions of both $x$-player and $y$-player proven in~\pref{lemma:stability-NE-variation} and~\pref{lemma:stability-payoff-variation}, we further show the variation of the (gradient) feedback received by both $x$-player and $y$-player.

\begin{lemma}
\label{lemma:gradient-variation}
Suppose $x$-player follows~\pref{alg:x-player} and $y$-player follows~\pref{alg:y-player}. Then, the gradient variation can be bounded as follows:
\begin{equation}
    \label{eq:gradient-variation}
    \begin{split}
    \sum_{t=2}^T \norm{A_t y_t - A_{t-1}y_{t-1}}_\infty^2 \leq {}& \Ot\Big( 1 + V_T + \min\{P_T, W_T\}\Big),\\
    \sum_{t=2}^T \norm{x_t^\T A_t - x_{t-1}^\T A_{t-1}}_\infty^2 \leq {}& \Ot\Big( 1 + V_T + \min\{P_T, W_T\}\Big).
    \end{split}
\end{equation}
\end{lemma}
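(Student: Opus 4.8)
The plan is to obtain this lemma as a short corollary of the two stability bounds already established in~\pref{lemma:stability-NE-variation} and~\pref{lemma:stability-payoff-variation}, by splitting each feedback difference into a matrix-variation contribution and a decision-variation contribution.

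First I would, for every $t\ge 2$, use the telescoping identity
\[
A_t y_t - A_{t-1}y_{t-1} = (A_t-A_{t-1})y_t + A_{t-1}(y_t-y_{t-1})
\]
together with $\norm{a+b}_\infty^2 \le 2\norm{a}_\infty^2 + 2\norm{b}_\infty^2$. Since $y_t\in\Delta_n$ we have $\norm{(A_t-A_{t-1})y_t}_\infty \le \norm{A_t-A_{t-1}}_\infty\norm{y_t}_1 = \norm{A_t-A_{t-1}}_\infty$, and since $A_{t-1}\in[-1,1]^{m\times n}$ we have $\norm{A_{t-1}(y_t-y_{t-1})}_\infty \le \norm{A_{t-1}}_\infty\norm{y_t-y_{t-1}}_1\le \norm{y_t-y_{t-1}}_1$. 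Summing over $t$ gives
\[
\sum_{t=2}^T \norm{A_t y_t - A_{t-1}y_{t-1}}_\infty^2 \le 2 V_T + 2\sum_{t=2}^T \norm{y_t-y_{t-1}}_1^2,
\]
and the mirror identity $x_t^\T A_t - x_{t-1}^\T A_{t-1} = x_t^\T(A_t-A_{t-1}) + (x_t-x_{t-1})^\T A_{t-1}$ yields, in the same way, $\sum_{t=2}^T \norm{x_t^\T A_t - x_{t-1}^\T A_{t-1}}_\infty^2 \le 2V_T + 2\sum_{t=2}^T\norm{x_t-x_{t-1}}_1^2$.

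Next I would invoke the stability bounds. From~\pref{lemma:stability-NE-variation}, $\sum_{t=2}^T\norm{y_t-y_{t-1}}_1^2 \le \Ot(\sqrt{(1+V_T)(1+P_T)}+P_T)$, and AM-GM ($\sqrt{ab}\le (a+b)/2$) turns $\sqrt{(1+V_T)(1+P_T)}$ into $\Ot(1+V_T+P_T)$; from~\pref{lemma:stability-payoff-variation}, $\sum_{t=2}^T\norm{y_t-y_{t-1}}_1^2 \le \Ot(1+W_T)$. Taking the smaller of the two and doing a one-line case split on whether $P_T\le W_T$ (if $P_T\le W_T$ use the first bound, which is then $\Ot(1+V_T+\min\{P_T,W_T\})$; otherwise use the second, which is at most $\Ot(1+W_T)\le\Ot(1+V_T+\min\{P_T,W_T\})$) shows $\sum_{t=2}^T\norm{y_t-y_{t-1}}_1^2 \le \Ot(1+V_T+\min\{P_T,W_T\})$, and identically for the $x$-side. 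Plugging these into the two displays from the previous step and absorbing the extra $2V_T$ term completes the proof.

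There is no genuine obstacle here---the substantive work lives in the two stability lemmas, which rely on the correction terms and the two-layer analysis---so the only points needing care are (i) the elementary norm-splitting step, where one must use $\norm{y_t}_1=1$ and $\norm{A_{t-1}}_\infty\le 1$ to avoid spurious factors, and (ii) the bookkeeping that collapses $\min\{\sqrt{(1+V_T)(1+P_T)}+P_T,\ 1+W_T\}$ down to the stated $1+V_T+\min\{P_T,W_T\}$.
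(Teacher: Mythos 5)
Your proposal is correct and follows essentially the same route as the paper's proof: the same telescoping decomposition $A_t y_t - A_{t-1}y_{t-1} = (A_t-A_{t-1})y_t + A_{t-1}(y_t-y_{t-1})$ (the paper writes it as inserting $A_{t-1}y_t$), the same use of $\norm{y_t}_1=1$ and $\norm{A_{t-1}}_\infty\le 1$, and the same appeal to Lemmas~\ref{lemma:stability-NE-variation} and~\ref{lemma:stability-payoff-variation} followed by AM-GM to collapse the minimum. Nothing to add beyond the bookkeeping you already flag.
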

\begin{proof}
The gradient variation of the $x$-player can be upper bounded as follows:
\begin{align*}
    \sum_{t=2}^T \norm{A_t y_t - A_{t-1}y_{t-1}}_\infty^2 \leq {}& 2 \sum_{t=2}^T \norm{A_t y_t - A_{t-1}y_{t}}_\infty^2 + 2 \sum_{t=2}^T \norm{A_{t-1} y_t - A_{t-1}y_{t-1}}_\infty^2\\
    \leq {}& 2 \sum_{t=2}^T \norm{A_t - A_{t-1}}_\infty^2 + \order\left( \sum_{t=2}^T \norm{y_t - y_{t-1}}_1^2\right)\\
    \leq {}& \O(V_T) + \Ot\left(\min\{\sqrt{1+V_T + P_T} + P_T,  1+ W_T\}\right)\\
    \leq {} & \Ot\left(\min\{1 +V_T + P_T,  1+ V_T + W_T\}\right)
\end{align*}
where the second last step holds by~\pref{lemma:stability-NE-variation} and~\pref{lemma:stability-payoff-variation}, and the last step makes use of AM-GM inequality. A similar argument can be applied to upper bound $\sum_{t=2}^T \norm{x_t^\T A_t - x_{t-1}^\T A_{t-1}}_\infty^2$. This ends the proof.
\end{proof}

The following lemma establishes a general result to relate the function-value difference between the sequence of piecewise minimizers and the sequence of each-round minimizers.
\begin{lemma}
\label{lemma:function-variation}
Let $A_t\in \mathbb{R}^{m\times n}$ and $y_t\in \Delta_{n}$ for all $t\in [T]$ with $\max_{t\in[T]}\|A_t\|_{\infty}\leq 1$. Let $\I_1,\I_2,\ldots,\I_K$ be an even partition of the total horizon $[T]$ with $\abs{\I_k} = \Delta$ for $k = 1,\ldots,K$ (for simplicity, suppose the time horizon $T$ is divisible by epoch length $\Delta$). Denote $\bar{x}_t^*=\argmin_{x\in\Delta_m}x^\top A_ty_t$ for any $t\in [T]$ and denote $u_t \triangleq \argmin_{x\in\Delta_m} \sum_{\tau\in\I_k} x^\T A_\tau y_\tau$ for any $t \in \I_k$. Then, we have
\begin{equation}
    \label{eq:function-varition}
    \sum_{t=1}^T u_t^\top A_t y_t - \sum_{t=1}^T \bar{x}_t^{*\top} A_t y_t \leq 2 \Delta \sum_{t=2}^{T} \norm{A_t y_t - A_{t-1}y_{t-1}}_{\infty}.
\end{equation}
\end{lemma}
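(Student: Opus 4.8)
The plan is to argue block by block, exploiting the fact that the comparator $u_t$ is piecewise constant on each epoch. Throughout, abbreviate $g_t(x) \triangleq x^\top A_t y_t$ and $\delta_t \triangleq \norm{A_t y_t - A_{t-1}y_{t-1}}_\infty$, and fix a block $\I_k$ on which $u_t$ takes the constant value $u^{(k)} = \argmin_{x\in\Delta_m}\sum_{\tau\in\I_k} g_\tau(x)$. The quantity to bound on this block is $\sum_{t\in\I_k}\big(g_t(u^{(k)}) - g_t(\bar x_t^*)\big)$, which is nonnegative since $\bar x_t^*$ minimizes $g_t$. The first step is: since $u^{(k)}$ minimizes the block sum and $\bar x_s^* \in \Delta_m$ for any fixed $s\in\I_k$, we have $\sum_{t\in\I_k} g_t(u^{(k)}) \le \sum_{t\in\I_k} g_t(\bar x_s^*)$, and hence
\begin{equation*}
    \sum_{t\in\I_k}\big(g_t(u^{(k)}) - g_t(\bar x_t^*)\big) \le \sum_{t\in\I_k}\big(g_t(\bar x_s^*) - g_t(\bar x_t^*)\big).
\end{equation*}

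The second step bounds each summand $g_t(\bar x_s^*) - g_t(\bar x_t^*)$ by the variation of the loss vectors. Insert the anchor $g_s$:
\begin{equation*}
    g_t(\bar x_s^*) - g_t(\bar x_t^*) = \big(g_t(\bar x_s^*) - g_s(\bar x_s^*)\big) + \big(g_s(\bar x_s^*) - g_s(\bar x_t^*)\big) + \big(g_s(\bar x_t^*) - g_t(\bar x_t^*)\big).
\end{equation*}
The middle term is nonpositive because $\bar x_s^*$ minimizes $g_s$. For the outer two terms, $g_t(z) - g_s(z) = z^\top(A_t y_t - A_s y_s)$ for a single simplex point $z$, so $\abs{g_t(z) - g_s(z)} \le \norm{A_t y_t - A_s y_s}_\infty$; consequently $g_t(\bar x_s^*) - g_t(\bar x_t^*) \le 2\norm{A_t y_t - A_s y_s}_\infty$. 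Since $\I_k$ is a contiguous interval, a telescoping bound gives $\norm{A_t y_t - A_s y_s}_\infty \le \sum_{\tau}\delta_\tau$ where the sum runs over $\tau\in\I_k$ with $\tau-1\in\I_k$; call this $D_k$. Thus the block contribution is at most $\sum_{t\in\I_k} 2D_k = 2\Delta\, D_k$ (using $\abs{\I_k}=\Delta$).

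The final step sums over $k=1,\dots,K$: $\sum_{t=1}^T\big(g_t(u_t) - g_t(\bar x_t^*)\big) \le 2\Delta\sum_{k=1}^K D_k \le 2\Delta\sum_{t=2}^T \delta_t$, since the index sets appearing in the $D_k$ are disjoint subsets of $\{2,\dots,T\}$. This is exactly the claimed bound. I do not anticipate a genuine obstacle here — the only point requiring care is keeping the telescoping strictly within each epoch (so that boundary terms $\delta_\tau$ with $\tau$ the first index of a block are never double-counted), and recording correctly that $\abs{z^\top v}\le\norm{v}_\infty$ for $z\in\Delta_m$ but with a factor $2$ loss when differencing two simplex points; both are routine.
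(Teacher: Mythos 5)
Your proof is correct and follows the same route as the paper's: partition into blocks, use block-sum optimality of $u^{(k)}$ to replace it by a single per-round minimizer $\bar{x}_s^*$, insert the anchor function $g_s$, and telescope the loss-vector variation within each block. In fact your version is slightly more careful than the paper's, which phrases the first comparison as the instantaneous inequality $f_t(u_t) \le f_t(\bar{x}_{t_1}^*)$ (false pointwise, true only after summing over the block) and whose telescoping sum is written loosely over all of $\I_k$; you correctly present the optimality at the block level and restrict the telescoping indices to $\tau \in \I_k$ with $\tau-1\in\I_k$.
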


\begin{proof}
The proof follows the analysis of function-variation type worst-case dynamic regret~\citep{UAI'20:simple}. For convenience, we introduce the notation $f_t(x)\triangleq x^\T A_t y_t$ to denote the each-round online function of $x$-player. Then,
\begin{align*}
 & \sum_{i=1}^{T} f_t(u_t) - \sum_{i=1}^{T} f_t(\bar{x}_t^*) = \sum_{k=1}^{K} \sum_{t \in \I_k} \left(f_t(u_t) - f_t(\bar{x}_t^*)\right) \\
 & \leq 2 \Delta \cdot  \sum_{k=1}^{K} \sum_{t \in \I_k} \norm{A_t y_t - A_{t-1}y_{t-1}}_{\infty} = 2 \Delta \sum_{t=2}^{T} \norm{A_t y_t - A_{t-1}y_{t-1}}_{\infty},
\end{align*}
where the inequality holds because of the setting of $\abs{\I_k} = \Delta$ and the following fact about the instantaneous quantity: 
\begin{align*}
    f_t(u_t) - f_t(\bar{x}_{t}^{*}) & \leq f_t(\bar{x}_{t_1}^*) - f_t(\bar{x}_{t}^{*}) \tag*{(denote by $t_1$ the starting time stamp of $\I_k$)} \\
    & = f_t(\bar{x}_{t_1}^*) - f_{t_1}(\bar{x}_{t_1}^*) + f_{t_1}(\bar{x}_{t_1}^*) - f_t(\bar{x}_{t}^{*}) \\
    & \leq f_t(\bar{x}_{t_1}^*) - f_{t_1}(\bar{x}_{t_1}^*) + f_{t_1}(\bar{x}_{t}^*) - f_t(\bar{x}_{t}^{*}) \tag*{(by optimality of $\bar{x}_{t_1}^*$)}\\
    & \leq 2 \sum_{t \in \I_k} \sup_{x} \abs{f_t(x) - f_{t-1}(x)} \\
    & = 2 \sum_{t \in \I_k} \sup_{x} \left\vert x^\T (A_t y_t - A_{t-1}y_{t-1})\right\vert \\
    & \leq 2 \sum_{t \in \I_k} \norm{A_t y_t - A_{t-1}y_{t-1}}_{\infty}.
\end{align*}
Hence we finish the proof.
\end{proof}
\section{Technical Lemmas}
\label{sec:technical_lemmas}
\begin{lemma}[{Bregman proximal inequality~\citep[Lemma 3.2]{OPT'93:Bregman}}]
\label{lemma:bregman-divergence}
Let $\X$ be a convex set in a Banach space. Let $f: \X \mapsto \R$ be a closed proper convex function on $\X$. Given a convex regularizer $\psi:\X \mapsto \R$, we denote its induced Bregman divergence by $D_\psi(\cdot,\cdot)$. Then, any update of the form $\x_k = \argmin_{\x \in \X} \{ f(\x) + D_\psi(\x,\x_{k-1})\}$ satisfies the following inequality for any $\u \in \X$,
\begin{equation}
  f(\x_k) - f(\u) \leq D_\psi(\u, \x_{k-1}) - D_\psi(\u, \x_{k}) - D_\psi(\x_k, \x_{k-1}).
\end{equation}
\end{lemma}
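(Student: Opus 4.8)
The plan is to derive the inequality from the first-order optimality condition of the proximal update, combined with the convexity of $f$ and the standard three-point identity for Bregman divergences. This is the same template used throughout the excerpt whenever a Bregman proximal step is analyzed, so the proof is short once the pieces are assembled in the right order.

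First I would record the optimality condition. Since $\x_k$ minimizes the convex function $x \mapsto f(x) + \Div{x}{\x_{k-1}}$ over the convex set $\X$, there exists a subgradient $g \in \partial f(\x_k)$ such that, for every $\u \in \X$,
\[
    \innerp{g + \nabla\psi(\x_k) - \nabla\psi(\x_{k-1})}{\u - \x_k} \geq 0,
\]
equivalently $\innerp{g}{\x_k - \u} \leq \innerp{\nabla\psi(\x_k) - \nabla\psi(\x_{k-1})}{\u - \x_k}$. Here I use, as is standard in this line of work, that $\psi$ is differentiable at $\x_k$, which is guaranteed whenever $\psi$ is a Legendre-type regularizer so that the minimizer stays in the relative interior of its domain.

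Next I would invoke convexity of $f$, namely $f(\x_k) - f(\u) \leq \innerp{g}{\x_k - \u}$, to obtain $f(\x_k) - f(\u) \leq \innerp{\nabla\psi(\x_k) - \nabla\psi(\x_{k-1})}{\u - \x_k}$. The last step is the purely algebraic three-point identity: expanding each term via $\Div{a}{b} = \psi(a) - \psi(b) - \innerp{\nabla\psi(b)}{a-b}$, all the $\psi$-values cancel and one is left with
\[
    \innerp{\nabla\psi(\x_k) - \nabla\psi(\x_{k-1})}{\u - \x_k} = \Div{\u}{\x_{k-1}} - \Div{\u}{\x_k} - \Div{\x_k}{\x_{k-1}}.
\]
Substituting this into the previous display yields exactly the claimed bound.

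The only part that is not entirely routine — and hence the main obstacle — is the analytic justification of the optimality condition (the subdifferential sum rule, and differentiability of $\psi$ at $\x_k$) in the Banach-space generality in which the lemma is stated; a fully rigorous treatment would invoke the Legendre-type assumption on $\psi$ or work on $\mathrm{relint}\,\mathrm{dom}\,\psi$. In all the concrete instantiations used in this paper — $\psi$ the negative-entropy regularizer on the simplex, or $\psi = \tfrac12\norm{\cdot}_2^2$ — these points are standard, and the three displays above combine directly to the result.
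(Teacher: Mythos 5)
Your proof is correct and is the standard derivation of the Bregman proximal inequality: first-order optimality of the proximal step, the subgradient inequality for $f$, and the three-point identity for Bregman divergences. The paper itself cites this lemma from \citet{OPT'93:Bregman} and gives no proof of its own, so there is nothing in-paper to compare against; your argument is precisely the one in the cited source, and your remark about the Legendre-type or relative-interior caveat needed for the optimality condition in full generality is a fair and accurate observation.
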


\begin{lemma}[{stability lemma~\citep[Proposition 7]{COLT'12:variation-Yang}}]
\label{lemma:stability-OMD}
Let $\x_*=\argmin_{\x\in\X}\inner{a,\x}+D_\psi(x,c)$ and $\x'_* = \argmin_{\x\in\X}\inner{a',\x}+D_\psi(\x,c)$. When the regularizer $\psi:\X\mapsto\R$ is a 1-strongly convex function with respect to the norm $\| \cdot \|$, we have $\norm{\x_*-\x'_*}\leq\norm{(\nabla \psi(c)-a)-(\nabla \psi(c)-a')}_*= \norm{a-a'}_*$.
\end{lemma}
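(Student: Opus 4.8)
The plan is to derive the bound from the first-order optimality conditions of the two proximal minimization problems, exploiting only the $1$-strong convexity of the regularizer $\psi$ and the duality of the norms $\norm{\cdot}$ and $\norm{\cdot}_*$. First I would spell out the objective defining $\x_*$: since $D_\psi(\x,c)=\psi(\x)-\psi(c)-\inner{\nabla\psi(c),\x-c}$, the function $\x\mapsto\inner{a,\x}+D_\psi(\x,c)$ is convex with gradient $a+\nabla\psi(\x)-\nabla\psi(c)$ at $\x$. Hence optimality of $\x_*\in\X$ gives the variational inequality $\inner{a+\nabla\psi(\x_*)-\nabla\psi(c),\,\x-\x_*}\geq 0$ for all $\x\in\X$, and in particular for $\x=\x'_*$; symmetrically, optimality of $\x'_*$ yields $\inner{a'+\nabla\psi(\x'_*)-\nabla\psi(c),\,\x_*-\x'_*}\geq 0$.

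The second step is to add the two variational inequalities. The $\nabla\psi(c)$ terms cancel, and after rearranging one obtains $\inner{\nabla\psi(\x_*)-\nabla\psi(\x'_*),\,\x_*-\x'_*}\leq\inner{a'-a,\,\x_*-\x'_*}$. Now bound each side: by $1$-strong convexity of $\psi$ (apply the defining two-point inequality in both directions $\x_*\!\leftrightarrow\!\x'_*$ and sum), the left-hand side is at least $\norm{\x_*-\x'_*}^2$; by the generalized Cauchy--Schwarz inequality for the dual pair, the right-hand side is at most $\norm{a-a'}_*\norm{\x_*-\x'_*}$. Combining gives $\norm{\x_*-\x'_*}^2\leq\norm{a-a'}_*\norm{\x_*-\x'_*}$, so dividing by $\norm{\x_*-\x'_*}$ (the claim is trivial when that quantity is zero) yields $\norm{\x_*-\x'_*}\leq\norm{a-a'}_*$; the final identity $\norm{(\nabla\psi(c)-a)-(\nabla\psi(c)-a')}_*=\norm{a-a'}_*$ is immediate.

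I do not expect a genuine obstacle here, since this is a standard stability estimate for mirror-descent-type updates. The only points that need a little care are: (i) the minimizers $\x_*,\x'_*$ may lie on the boundary of $\X$, so one must use the variational-inequality form of optimality rather than setting a gradient to zero --- which is exactly what the argument above does; and (ii) one uses implicitly that $\psi$ is differentiable on the relevant part of $\X$ containing $c,\x_*,\x'_*$, which is part of the standing assumptions on the regularizer. With those in place, the three steps above constitute the complete proof.
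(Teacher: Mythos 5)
Your argument is correct. Note, however, that the paper does not prove this lemma at all: it is stated in the Technical Lemmas appendix with a citation to Proposition~7 of \citet{COLT'12:variation-Yang}, and no proof is reproduced. So there is no ``paper's proof'' to compare against; what you have written is essentially the standard argument found in the cited reference (and in most treatments of optimistic/proximal OMD stability). Each step checks out: the variational inequalities $\inner{a+\nabla\psi(\x_*)-\nabla\psi(c),\,\x'_*-\x_*}\geq 0$ and $\inner{a'+\nabla\psi(\x'_*)-\nabla\psi(c),\,\x_*-\x'_*}\geq 0$ are the correct first-order conditions; adding them does cancel the $\nabla\psi(c)$ terms and yields $\inner{\nabla\psi(\x_*)-\nabla\psi(\x'_*),\,\x_*-\x'_*}\leq\inner{a'-a,\,\x_*-\x'_*}$; the lower bound $\inner{\nabla\psi(\x_*)-\nabla\psi(\x'_*),\,\x_*-\x'_*}\geq\norm{\x_*-\x'_*}^2$ follows by summing the two directions of the strong-convexity inequality; and the H\"older bound on the right gives the claim after the trivial case split on $\norm{\x_*-\x'_*}=0$. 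Your remarks (i) and (ii) about using the variational form of optimality and differentiability of $\psi$ are the right points of care. Nothing is missing.
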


\begin{lemma}[{variant of self-confident tuning~\citep[Lemma 4.8]{UAI'19:FIRST-ORDER}}]
\label{lem:self-confident-variant}
  Let $a_1, a_2, \ldots, a_T$ be non-negative real numbers. Then
  \begin{equation*}
      \sum_{t=1}^{T} \frac{a_{t}}{\sqrt{1+\sum_{s=1}^{t-1} a_{s}}} \leq  4\sqrt{1+\sum_{t=1}^T a_t} + \max_{t\in[T]} a_t.
  \end{equation*}
\end{lemma}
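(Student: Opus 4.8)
The plan is to reduce the bound to a telescoping estimate on the partial sums, while separating out the few steps whose increment is large compared with the running total --- these large steps are exactly what produces the additive $\max_t a_t$ term. First I would set $S_0 = 1$ and $S_t = 1 + \sum_{s=1}^{t} a_s$, so that $a_t = S_t - S_{t-1}$, the left-hand side becomes $\sum_{t=1}^T (S_t - S_{t-1})/\sqrt{S_{t-1}}$, and the right-hand side involves $S_T = 1 + \sum_{t=1}^T a_t$. Then I would split $[T] = \calI \cup \calJ$ with $\calI = \{t : a_t \le S_{t-1}\}$ and $\calJ = \{t : a_t > S_{t-1}\}$, noting that $t \in \calI$ forces $S_t \le 2 S_{t-1}$.

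On $\calI$, I would use $\sqrt{S_t} - \sqrt{S_{t-1}} = (S_t - S_{t-1})/(\sqrt{S_t} + \sqrt{S_{t-1}})$ together with $\sqrt{S_t} \le \sqrt{2}\,\sqrt{S_{t-1}}$ to obtain
\[
 \frac{S_t - S_{t-1}}{\sqrt{S_{t-1}}} \;\le\; (1+\sqrt{2})\bigl(\sqrt{S_t} - \sqrt{S_{t-1}}\bigr) \;\le\; 4\bigl(\sqrt{S_t} - \sqrt{S_{t-1}}\bigr).
\]
Since all increments $\sqrt{S_t} - \sqrt{S_{t-1}}$ are nonnegative, summing over $t \in \calI$ and comparing with the full telescoping sum $\sum_{t=1}^T (\sqrt{S_t} - \sqrt{S_{t-1}}) = \sqrt{S_T} - 1$ gives $\sum_{t \in \calI} a_t/\sqrt{S_{t-1}} \le 4(\sqrt{S_T} - 1) \le 4\sqrt{1 + \sum_{t=1}^T a_t}$. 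On $\calJ$, each summand is at most $a_t \le \max_s a_s$ because $S_{t-1} \ge 1$; moreover a large step more than doubles the running total ($S_t > 2 S_{t-1}$), so if $\calJ = \{t_1 < \cdots < t_k\}$ then $S_{t_j - 1} \ge 2^{\,j-1}$, and hence $\sum_{j=1}^k a_{t_j}/\sqrt{S_{t_j-1}} \le (\max_s a_s)\sum_{j \ge 1} 2^{-(j-1)/2} = \mathcal{O}(\max_s a_s)$. Adding the two estimates proves a bound of the claimed form; a more careful accounting --- observing that only the first large step (while $S$ is still close to $1$) can contribute on the order of $\max_t a_t$, the later ones being absorbable into the telescoping slack --- tightens the constants to $4$ and $1$ as stated, matching \citet{UAI'19:FIRST-ORDER}.

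The step I expect to be the main obstacle is the treatment of $\calJ$. On these steps the ratio $\sqrt{S_t}/\sqrt{S_{t-1}}$ is unbounded, so they cannot be folded into the telescoping argument; and crudely replacing $\sqrt{S_{t-1}}$ by $1$ leaves the sum $\sum_{t\in\calJ} a_t$, which is not controlled by $\sqrt{1+\sum_t a_t}$ alone. The point --- and the reason the additive $\max_t a_t$ is present in, and necessary for, the statement --- is that the doubling property limits $|\calJ|$ to $\mathcal{O}(\log S_T)$ steps and makes the weighted sum over $\calJ$ collapse to a geometric series controlled by $\max_t a_t$; pinning down the exact coefficient $1$ (rather than a larger absolute constant) is precisely where the extra care mentioned above is needed.
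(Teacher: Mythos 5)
Your partition into $\calI$ and $\calJ$ is sound as far as it goes and does yield a valid inequality of the right shape, but the constants you actually derive are $(1+\sqrt2)\bigl(\sqrt{S_T}-1\bigr)$ from $\calI$ and $(2+\sqrt2)\max_t a_t$ from $\calJ$, and the total $(1+\sqrt2)\sqrt{S_T}+(2+\sqrt2)M$ genuinely does \emph{not} imply $4\sqrt{S_T}+M$. The missing step is the one you left as a handwave, and the specific mechanism you propose for it --- ``the later ones being absorbable into the telescoping slack'' --- cannot work: the slack you would need to absorb is $(1+\sqrt2)M$, but the telescoping slack is only $(3-\sqrt2)\sqrt{S_T}+\mathcal{O}(1)$, and $M$ can be as large as $S_T-1$ (take a single dominant $a_t$), so $M$ can be arbitrarily larger than any constant multiple of $\sqrt{S_T}$. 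So the bound you prove is strictly weaker than the statement, and the proposed tightening is not a patchable gap in bookkeeping but a wrong idea.

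The clean way to get the stated (in fact, better) constants is to split each summand differently and telescope both pieces. With $S_0=1$, $S_t=1+\sum_{s\le t}a_s$, and $M=\max_t a_t$, write
\begin{equation*}
\frac{a_t}{\sqrt{S_{t-1}}}=\frac{a_t}{\sqrt{S_t}}+a_t\Bigl(\frac{1}{\sqrt{S_{t-1}}}-\frac{1}{\sqrt{S_t}}\Bigr).
\end{equation*}
For the first part, $\frac{a_t}{\sqrt{S_t}}=\frac{S_t-S_{t-1}}{\sqrt{S_t}}\le 2\bigl(\sqrt{S_t}-\sqrt{S_{t-1}}\bigr)$ since $\sqrt{S_t}-\sqrt{S_{t-1}}=\frac{a_t}{\sqrt{S_t}+\sqrt{S_{t-1}}}\ge\frac{a_t}{2\sqrt{S_t}}$, so this part sums to at most $2(\sqrt{S_T}-1)$. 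For the second part, each factor $\frac{1}{\sqrt{S_{t-1}}}-\frac{1}{\sqrt{S_t}}$ is nonnegative, so one can bound $a_t\le M$ uniformly and telescope: $\sum_{t}a_t\bigl(\frac{1}{\sqrt{S_{t-1}}}-\frac{1}{\sqrt{S_t}}\bigr)\le M\bigl(\frac{1}{\sqrt{S_0}}-\frac{1}{\sqrt{S_T}}\bigr)\le M$. Altogether this gives $\sum_t a_t/\sqrt{S_{t-1}}\le 2\sqrt{S_T}+M\le 4\sqrt{1+\sum_t a_t}+\max_t a_t$, with no case analysis, no doubling argument, and no loose geometric series. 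I recommend replacing your $\calI/\calJ$ argument by this decomposition; it is shorter and actually delivers the claimed constants.
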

\section{Experiment}
\label{appendix:experiments}
In this section, we further provide empirical studies on the performance of our proposed algorithm in time-varying games.

We construct an environment such that $P_T=\Theta(\sqrt{T})$, $W_T=\Theta(T^{\frac{3}{4}})$, and $V_T=\Theta(\sqrt{T})$. Under this environment, our theoretical results indicate that $\max\{\Reg^x_T, \Reg^y_T\}\leq \otil(T^{\frac{1}{4}})$, $\nereg\leq\otil(\sqrt{T})$ and $\gap\leq \otil(T^{\frac{7}{8}})$. Our empirical results validate the effectiveness of our algorithm in this environment, and in fact its performance is even better than the theoretical upper bounds, which also encourage us to investigate better theoretical guarantees in the future.

The environment setup is as follows. We set the size of game matrix to be $m \times n$ with $m=2$ and $n = 2$. The total time horizon is set as $T = 2\times 10^6$. Define 
\begin{align*}
    A_0=\begin{pmatrix}
\nicefrac{1}{2} & \nicefrac{1}{2} \\
- \nicefrac{1}{2} & -\nicefrac{1}{2}
\end{pmatrix},~~A_1=\begin{pmatrix}
-1 & -1 \\
1 & 1
\end{pmatrix},~~E=\begin{pmatrix}
\nicefrac{1}{3} & -\nicefrac{1}{2}\\
\nicefrac{1}{3} & -\nicefrac{1}{2}
\end{pmatrix}.
\end{align*}
Set $T_0=2\lfloor{T^{\nicefrac{1}{2}}}\rfloor$. The scheduling of the game matrices is separated into $K=4$ epochs and during each epoch $k$, we have
\begin{align*}
    A_t=\begin{cases}
    A_0+(-1)^t\cdot E, & \mbox{$t\in \left[\frac{(k-1)T}{K}+1,\frac{(k-1)T}{K}+T_0\right]$}, \vspace{2mm}\\
    \left(\frac{1}{2}+\left(\frac{1}{2}-(-1)^t\cdot T^{-\frac{1}{4}}\right)\right)A_1, & \mbox{$t\in \left[\frac{(k-1)T}{K}+T_0+1, \frac{kT}{K}\right]$}.
    \end{cases}
\end{align*}

Specifically, during the first phase of each epoch, when $t$ is even, $A_t=A_0+E$, in which $x$-player's Nash equilibrium is $x_t^*=(0,1)$ and $y_t^*=(1,0)$; when $t$ is odd, $A_t=A_0-E$, in which $x$-player's Nash equilibrium is $x_t^*=(0,1)$ but $y_t^*=(0,1)$. Therefore, during the first phase, the variation of Nash equilibrium is $\Theta(1)$ per consecutive rounds. Also, the variation of the game matrix is also $\Theta(1)$ per consecutive rounds. During the second phase of each epoch, the Nash equilibrium of $A_t$ keeps the same but the variation of the game matrix is $\Theta(T^{-\frac{1}{2}})$ per consecutive rounds. Therefore, over the total $T$ horizon, $P_T=\Theta(T_0)=\Theta(\sqrt{T})$, $V_T=\Theta(\sqrt{T})$. Direct calculation also shows that $W_T=\Theta(T^{\frac{3}{4}})$.

To show the necessity of the two-layer structure, we compare the performance of our two-layer algorithm (\pref{alg:x-player} for $x$-player and~\pref{alg:y-player} for $y$-player) with one single base-learner with a fixed step size chosen specifically to minimize each performance measure. More concretely, we choose the base-learner as optimistic Hedge with a fixed-share update, which satisfies $\drvu(\eta)$ property as we prove in~\pref{appendix:optimistic-Hedge}. As mentioned in~\pref{sec: algo overview}, this base-learner with a specific choice of the step size can indeed achieve a favorable bound for a specific performance measure. Specific to our environment configuration, to achieve the best individual regret bound, the step size needs to be chosen as $\Theta(1/\sqrt{1+P_T+V_T})=\Theta(T^{-\frac{1}{4}})$, while to achieve the best dynamic NE-regret bound, the step size should be chosen as $\Theta(\sqrt{P_T/(1+P_T+V_T)})=\Theta(1)$, which means that the base-learner cannot guarantee the desired bounds for all the three measures simultaneously. 

Concretely, we implement~\pref{alg:x-player} for $x$-player and~\pref{alg:y-player} for $y$-player with $L=4$ and the step size pool $\eta_i=\frac{2^{i-1}}{4\sqrt{T}}$ for both $x$-player and $y$-player. In this case, the number of base-learners (i.e., the size of step size pool) is $N=\lfloor{\frac{1}{2}\log_2 T}\rfloor+1=11$. In addition, we also run our base-learner with each single $\eta_i$ separately. We pick the best step size for each measure respectively and show how these single step size base-learners perform in all the three measure.

\pref{fig:meta-base} plots the experimental results with respect to all the three measures (individual regret, dynamic NE-regret, and duality gap). From the results, we can see that our algorithm's performance on all of the three measure is comparable to (or even better than) the base-learner with the corresponding best step size tuning, while the base-learners specifically tuned for a single measure cannot perform well on all of the three measures simultaneously, which supports our theoretical results proven in~\pref{sec: regret guarantee} and also validate the necessity of a two-layer structure of our proposed algorithm.

\begin{figure}[!t]
\centering
\includegraphics[width=0.33\textwidth]{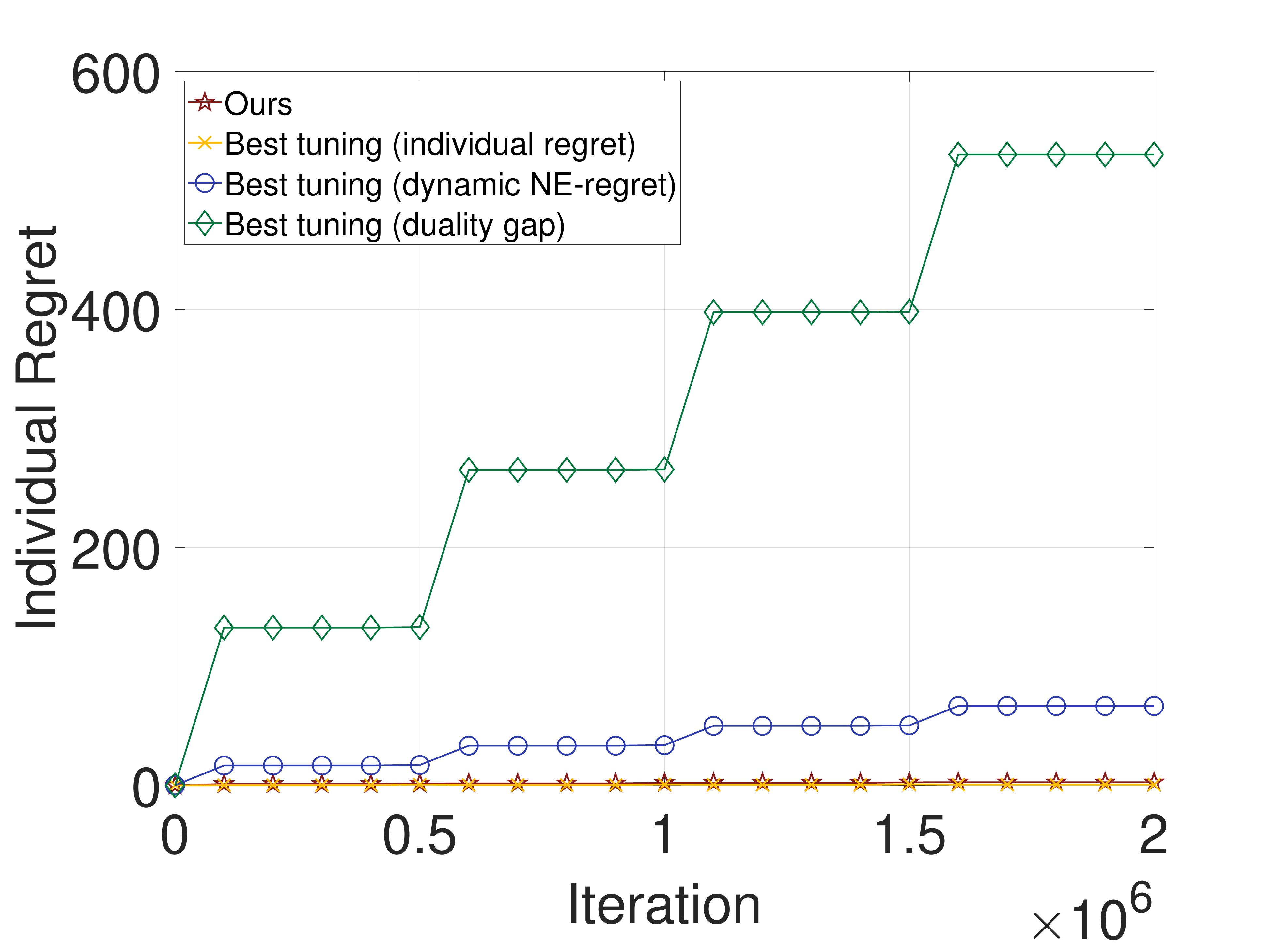}
\includegraphics[width=0.33\textwidth]{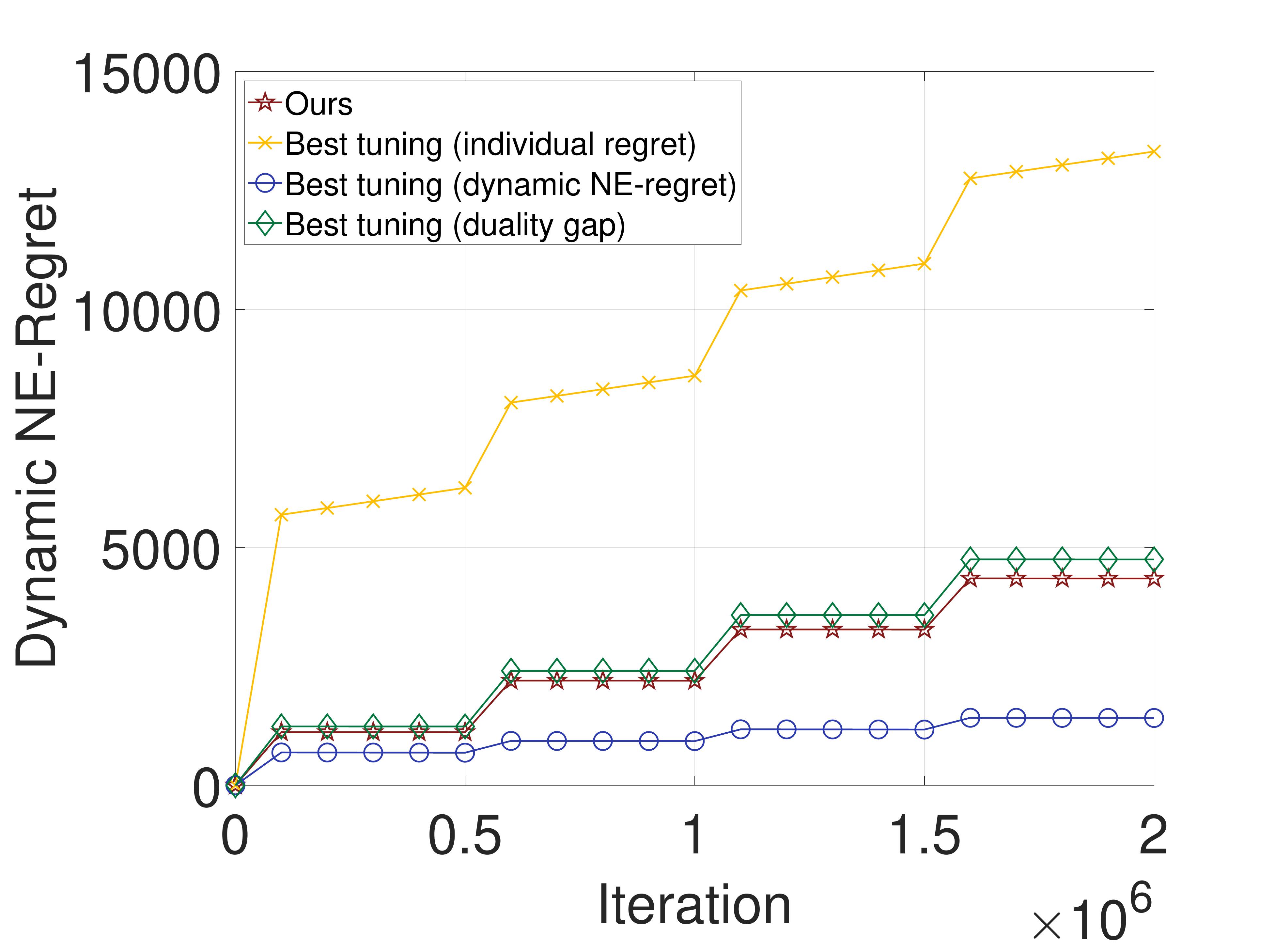}
\includegraphics[width=0.33\textwidth]{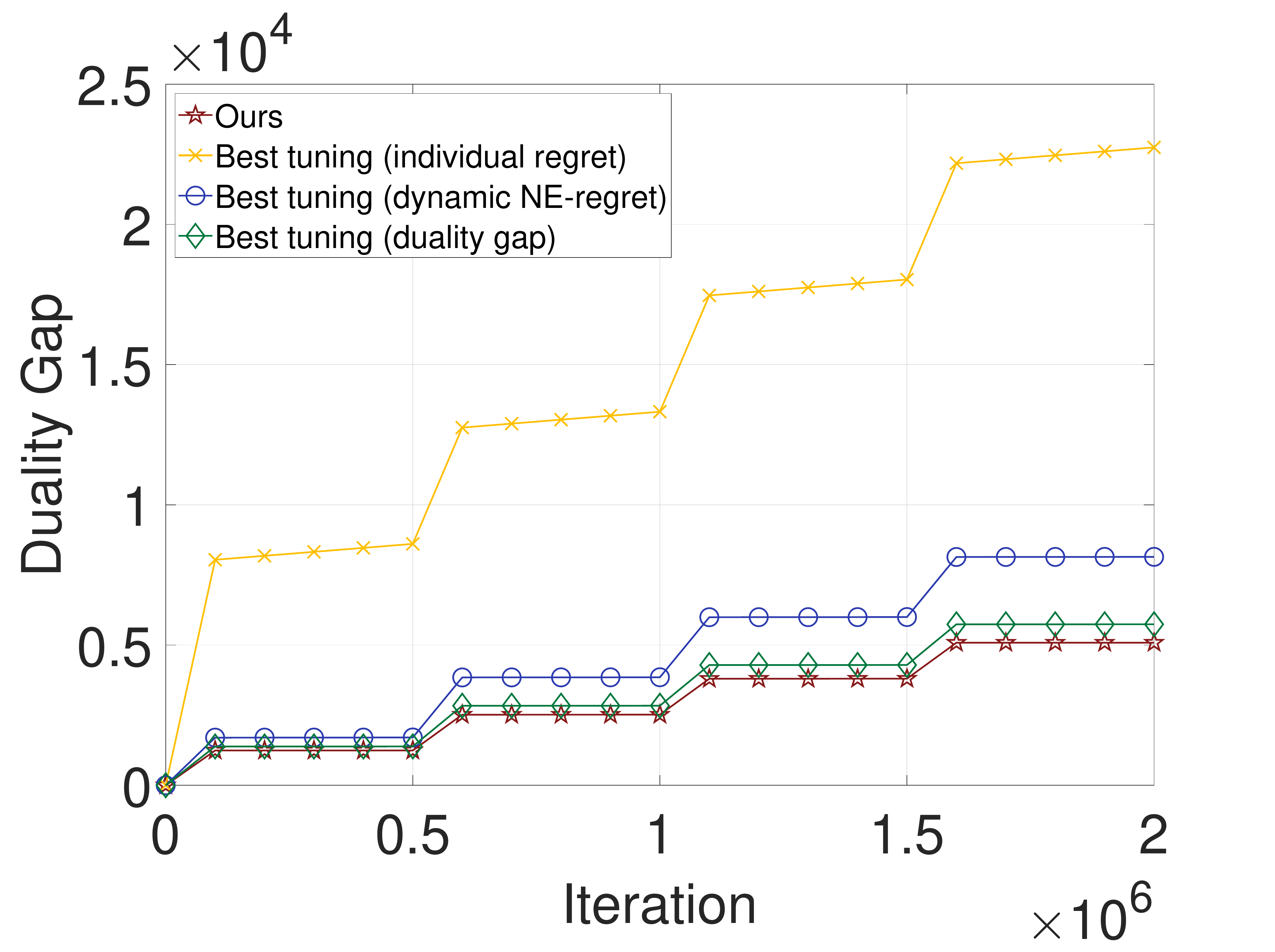}
\vspace{-5mm}
\caption{Empirical results of our algorithm (red line) compared with the base-learners with different step size choices. ``Best tuning (measure)'' represents the curve of the base-learner with the step size choice that performs the best with respect to this ``measure'' compared with all the other step size configurations. The three figures show that our algorithm's performance under all of the three measure is comparable to (or even better than) the base-learner with the corresponding best step size tuning. On the other hand, the base-learner specifically tuned for a single measure cannot perform well in all of the three measures simultaneously. }
\label{fig:meta-base}
\end{figure}

\end{document}